\theoremstyle{plain}
\newtheorem{theorem}{Theorem}[section]
\newtheorem{proposition}[theorem]{Proposition}
\newtheorem{lemma}[theorem]{Lemma}
\theoremstyle{definition}
\newtheorem{definition}[theorem]{Definition}
\theoremstyle{remark}
\newtheorem{remark}[theorem]{Remark}
\icmltitlerunning{Invariance in Policy Optimisation and Partial Identifiability in Reward Learning}
\newcommand{\TrajDistributionBoltzmannWrt}[1]{\Delta^{#1}_\beta}
\newcommand{\TrajDistributionBoltzmann}{\Delta^\star_\beta}
\newcommand{\TrajDistributionMCE}{\Delta^{\mathrm{H}}_\beta}
\newcommand{\TrajDistributionOptimal}{\Delta_\star}
\newtheoremstyle{compactboldnormal}{3pt}{-3pt}{}{}{\bfseries}{.}{.5em}{}
\newtheoremstyle{compactbolditalic}{3pt}{-2pt}{\itshape}{}{\bfseries}{.}{.5em}{}
\newtheoremstyle{compactemphnormal}{3pt}{-2pt}{}{}{\itshape}{.}{.5em}{}
\newcommand{\States}{\mathcal{S}}
\newcommand{\Actions}{\mathcal{A}}
\newcommand{\reward}{R}
\newcommand{\TransitionDistribution}{\tau}
\newcommand{\InitStateDistribution}{\mu_0}
\newcommand{\discount}{\gamma}
\newcommand{\SxA}{{\States{\times}\Actions}}
\newcommand{\SxAxS}{{\States{\times}\Actions{\times}\States}}
\newcommand{\Q}{Q}
\newcommand{\V}{V}
\newcommand{\A}{A}
\newcommand{\Return}{G}
\newcommand{\Evaluation}{\mathcal{J}}
\newcommand{\Qfor}[1]{\Q_{#1}}
\newcommand{\Vfor}[1]{\V_{#1}}
\newcommand{\Afor}[1]{\A_{#1}}
\newcommand{\QStar}{\Q_\star}
\newcommand{\VStar}{\V_\star}
\newcommand{\AStar}{\A_\star}
\newcommand{\policy}{\pi}
\newcommand{\OptimalPolicy}{\policy_\star}
\newcommand{\basePolicy}{\policy_0}
\newcommand{\BoltzmannPolicyWrt}[1]{\policy^{#1}_{\beta}}
\newcommand{\BoltzmannRationalPolicy}{\policy^\star_\beta}
\newcommand{\cmpStarTraj}{\preceq_\star^\xi}
\newcommand{\cmpStarFrag}{\preceq_\star^\zeta}
\newcommand{\cmpSoftTraj}{\preceq_\beta^\xi}
\newcommand{\cmpSoftFrag}{\preceq_\beta^\zeta}
\newcommand{\refines}{\preceq}
\newcommand{\refinesStrict}{\prec}
\newcommand{\Mask}{\mathcal{X}}
\newcommand{\OPTFunc}{\mathcal{O}}
\newcommand{\argmax}{\mathrm{arg\,max}} % also defined by ICLR
\newcommand{\Powerset}{\mathcal{P}}
\newcommand{\Probability}{\mathbb{P}}
\newcommand{\Reals}{\mathbb{R}}
\newcommand{\CondExpect}[3]{\mathbb{E}_{#1}\bigl[ {#2} \bigm| {#3} \bigr]}
\newcommand{\Expect}[2]{\mathbb{E}_{#1}\bigl[ {#2} \bigr]}
\newcommand{\MCEPolicy}{\policy^{\mathrm{H}}_\beta}
\newcommand{\QSoft}{\Q^{\mathrm{H}}_\beta}
\newcommand{\QSoftN}[1]{\Q^{\mathrm{H}}_{\beta,#1}}
\newcommand{\cmpLotteryTraj}{\preceq_\mathcal{D}^\xi}
\begin{document}

\twocolumn[
\icmltitle{Invariance in Policy Optimisation and\\ Partial Identifiability in Reward Learning}

\icmlsetsymbol{equal}{*}

\begin{icmlauthorlist}
\icmlauthor{Joar Skalse}{equal,ox,fhi}
\icmlauthor{Matthew Farrugia-Roberts}{equal,uom}
\icmlauthor{Stuart Russell}{ucb}
\icmlauthor{Alessandro Abate}{ox}
\icmlauthor{Adam Gleave}{far}
\end{icmlauthorlist}

\icmlaffiliation{ox}{Department of Computer Science, Oxford University}
\icmlaffiliation{fhi}{Future of Humanity Institute, Oxford University}
\icmlaffiliation{ucb}{%
Center for Human-Compatible Artificial Intelligence,
University of California, Berkeley}
\icmlaffiliation{uom}{School of Computing and Information Systems, the University of Melbourne}
\icmlaffiliation{far}{FAR AI, Inc}

\icmlcorrespondingauthor{Joar Skalse}{joar.skalse@cs.ox.ac.uk}
\icmlcorrespondingauthor{Matthew Farrugia-Roberts}{matt.farrugia@unimelb.edu.au}

\icmlkeywords{Machine Learning, ICML}

\vskip 0.3in
]

\printAffiliationsAndNotice{\icmlEqualContribution} % otherwise use the standard text.

\begin{abstract}
It is often very challenging to manually design reward functions for complex, real-world tasks.
To solve this, one can instead use \emph{reward learning} to infer a reward function from data.
However, there are often multiple reward functions that fit the data equally well, even in the infinite-data limit.
This means that the reward function is only \emph{partially identifiable}.
In this work, we formally characterise the partial identifiability of the reward function
given several popular reward learning data sources, including expert demonstrations and trajectory comparisons.
We also analyse the impact of this partial identifiability for several downstream tasks,
such as policy optimisation.
We unify our results in a framework for comparing data sources and downstream tasks
by their invariances,
with implications for the design and selection of data sources for reward learning.
\end{abstract}

\section{Introduction}
\label{sec:intro}

Many problems can be represented as sequential
decision-making tasks, where the goal is to maximise a numerical \emph{reward function}
over several steps \citep{sutton2018}.
However, for real-world tasks, it is often challenging to design a reward function
that reliably incentivizes the right behaviour
\citep{amodei2016, leike2018, dulac2019}.
One approach to solving this problem is to use \emph{reward learning} algorithms, which aim to learn a reward function from data.
Reward learning algorithms can be based on many different data sources, including expert demonstrations~\citep{ng2000},
preferences over trajectories~\citep{christiano2017}, and many others~\citep{jeon2020}.
These algorithms can learn a reward function for tasks where it would be infeasible to specify a reward manually~\citep[e.g.,][]{abbeel2010, christiano2017, singh2019, stiennon2020}.

There will often be multiple reward functions that are consistent with a given data source, even
in the limit of infinite data.
For example, two reward functions may induce exactly the same expert behaviour; in that case, no amount of expert demonstrations can distinguish them. Similarly, two reward functions may induce exactly the same preferences over trajectories; in that case, no amount of preference data can distinguish them.
This means that the reward function is ambiguous, or \emph{partially identifiable}, based on these data sources.
For most data sources, this issue has been acknowledged, but its extent has not been characterised.
In this work, we formally characterise the ambiguity of the reward function for several popular data sources, including expert demonstrations (\S\ref{sec:results-expert}) and trajectory comparisons (\S\ref{sec:results-eval}).
Our results describe the infinite-data bounds for the information that can be recovered from these types of data.

Identifying a reward function uniquely is often unnecessary, because all plausible
reward functions might lead to the same outcome in a given application.
For example, if we want to learn a reward function in order to compute an optimal policy, then it is enough to learn a reward function that has the same optimal policies as the true reward function.
In general, ambiguity is not problematic if all compatible reward functions lead to identical downstream outcomes.
Therefore, we also characterise the \emph{ambiguity tolerance} for various applications (especially policy optimisation).
This allows us to evaluate whether or not the ambiguity of a given data source is problematic for a given application.

Ambiguity and ambiguity tolerance are formally related.
Both concern \emph{invariances} of objects that can be computed from reward functions to \emph{transformations} of those reward functions.
Thus, our main contribution is to catalogue the invariances of various
mathematical objects derived from the reward function.
In \cref{sec:lattice}, we explore a \emph{partial order} on these 
invariances and its implications for selecting and evaluating
data sources, addressing an open problem in reward learning~\citep[\S3.1]{leike2018}.

\subsection{Related Work}
\label{sec:related}

\emph{Inverse reinforcement learning}~\citep[IRL;][]{russell1998} is a central example of reward learning.
An IRL algorithm attempts to infer a reward function from demonstrations by an expert,
by inverting a model of the expert's \emph{planning
algorithm}~\citep{armstrong2017, shah2019}.
Existing work partially characterises the inherent ambiguity of
expert demonstrations for certain planning algorithms~\citep{ng2000, cao2021}
and classes of tasks~\citep{dvijotham2010, kim2021}.
We extend these results by considering a more expressive space of reward functions, more planning algorithms, and arbitrary stochastic tasks.

IRL is related to \emph{dynamic discrete choice}
\citep{rust1994, aguirregabiria2010},
a problem where identifiability has been studied extensively~\citep[e.g.,][]{aguirregabiria2005, srisuma2015, arcidiacono2020}.
We study a simpler setting with known tasks.
IRL also relates to \emph{preference elicitation}~\citep{rothkopf2011}
and \emph{inverse optimal control}~\citep{ab2020}.
Preferences over sequential trajectories are not typically considered as a
data source in these fields.

Reward learning methods have also been proposed for many other data 
sources~\citep{jeon2020}.
A popular and effective option is \emph{preferences over 
trajectories}~\citep{akrour2012, christiano2017}.
Unlike for IRL, the ambiguity arising from these data sources has
not been formally characterised in any previous work.
We contribute a formal characterisation of the ambiguity for central models
of evaluative feedback, including trajectory preferences.

Several studies have explored learning from a combination of both expert demonstrations and
preferences~\citep{ibarz2018, palan2019, biyik2020, koppol2020},
or other multimodal data sources~\citep{tung2018,krasheninnikov2021combining,jeon2020}.
One motivation is that different data sources may provide complementary reward 
information~\citep{koppol2020}, and therefore eliminate some ambiguity.
Similarly, \citet{amin2017} and \citet{cao2021} observe reduced ambiguity
by combining behavioural data across multiple \emph{tasks}.
We provide a general framework for understanding these results.

The primary application of a learnt
reward function is to compute an optimal policy~\citep{abbeel2004, wirth2017}.
\citet{ng1999} proved that \emph{potential-shaping transformations} always
preserve the set of optimal policies.
We extend this result, characterising the full set of transformations that
preserve optimal policies in each task, including for additional policy 
optimisation techniques such as maximum causal entropy reinforcement learning.

Ambiguity corresponds to the \emph{partial
identifiability}~\citep{lewbel2019} of the reward function modelled as a latent
parameter.
A common response to partial identifiability in reward learning has
been to impose additional constraints or assumptions until the data
identifies the reward function uniquely.
Following \citet{manski1995, manski2003} and \citet{tamer2010}, we instead
\emph{describe} ambiguity \emph{given} various constraints and assumptions.
This gives practitioners results that are appropriate for the data they in fact
have, and the ambiguity tolerance of their actual application.

\subsection{Preliminaries}
\label{sec:prelim}

We consider an idealised setting with finite, observable, infinite-horizon
sequential decision-making environments, formalised as \emph{Markov
Decision Processes} \citep[MDPs;][\S3]{sutton2018}.
An MDP is a tuple
$(\States, \Actions, \TransitionDistribution, \InitStateDistribution, 
\reward, \discount)$
where
  $\States$ and $\Actions$ are finite sets of environment \emph{states} and 
    agent \emph{actions};
  $\TransitionDistribution : \SxA\to\Delta(\States)$ encodes the 
    \emph{transition distributions} governing the environment dynamics;
  $\InitStateDistribution \in \Delta(\States)$ is an initial state
    distribution;
  $\reward : \SxAxS \rightarrow \Reals$ is a \emph{reward
    function};\footnotemark{}
and
  $\discount \in (0, 1)$ is a reward \emph{discount rate}.
We distinguish states in the support of $\mu_0$ as \emph{initial states}.
In some cases, we also wish to include \emph{terminal states}. These states have the property that $\TransitionDistribution(s | s, a) = 1$ and $\reward(s, a, s) = 0$ for all $a$.
A \textit{policy} is a function $\policy : \States \to \Delta(\Actions)$ that encodes
the behaviour of an agent in an MDP.

\footnotetext{%
    Notably, we consider deterministic reward functions that may depend on a
    transition's successor state.
    Alternative spaces of reward functions are often considered
    (such as functions from $\States$ or $\SxA$, or distributions).
    The chosen space has straightforward consequences for invariances, 
    which we discuss in \cref{apx:rewards}.
}

We represent the \emph{transition} from state~$s$ to state~$s'$ using
action~$a$ as the tuple $x = (s, a, s')$.
We classify $(s, a, s')$ as \emph{possible} in an MDP if
$s'$ is in the support of $\TransitionDistribution(s, a)$, otherwise it is 
\emph{impossible}.
A \emph{trajectory} is an infinite sequence of concatenated
transitions
$\xi = (s_0, a_0, s_1, a_1, s_2, \ldots)$, and
a \emph{trajectory fragment} of length $n$ is a finite sequence of $n$
concatenated transitions $\zeta = (s_0, a_0, s_1, \ldots, a_{n-1}, s_n)$.
A trajectory or trajectory fragment is \emph{possible} if all of its transitions are possible,
and is \emph{impossible} otherwise. 
A trajectory or trajectory fragment is \emph{initial} if its first state is initial.
We say that a state or transition is \emph{reachable} if it is part of some possible
and initial trajectory.

Given an MDP, we define the \emph{return function} $\Return$ as the
cumulative discounted reward of entire trajectories and trajectory
fragments:
$\Return(\zeta) = \sum_{t=0}^{|\zeta|-1} \discount^t \reward(s_t, a_t, s_{t+1})$ for a trajectory fragment $\zeta$ of length $|\zeta|$,
and similarly for trajectories.
A policy $\policy$ and a transition distribution $\TransitionDistribution$ together induce a distribution of trajectories starting from
each state.
We denote such a trajectory starting from $s$ with the random variable
$\Xi_s$, and its remaining components with random variables
$A_0, S_1, A_1, S_2$, and so on.

Given an MDP and a policy $\policy$,
the \emph{value function} encodes the expected return from a state,
$
\Vfor{\policy}(s)
=
\Expect
    {\Xi_s\sim\policy,\TransitionDistribution}
    {\Return(\Xi_s)}
$, and 
the \emph{$Q$-function} of $\policy$ encodes the expected return given an
initial action,
$
\Qfor{\policy}(s, a)
=
\CondExpect
    {\Xi_s\sim\policy,\TransitionDistribution}
    {\Return(\Xi_s)}
    {A_0=a}
$.
$\Qfor{\policy}$ and $\Vfor{\policy}$ satisfy a \emph{Bellman equation}:
\begin{align}
\label{eq:bellman-pi}
    \Qfor{\policy}(s, a)
    &=
    \Expect
        {S'\sim\TransitionDistribution(s, a)}
        {\reward(s, a, S') + \discount \Vfor{\policy}(S')},\\
    \Vfor{\policy}(s)
    &=
    \Expect{A\sim\policy(s)}{\Qfor{\policy}(s, A)},
\end{align}
for all $s \in \States$ and $a \in \Actions$.
Their difference, $\Afor{\policy}(s, a) = \Qfor{\policy}(s, a) - \Vfor{\policy}(s)$, is the
\emph{advantage function} of $\policy$.

We further define a \emph{policy evaluation function},
$\Evaluation$, encoding the expected return from following a particular policy
in an MDP,
$\Evaluation(\policy) = \Expect{S_0\sim\mu_0}{\Vfor{\policy}(S_0)}$.
$\Evaluation$ induces an order over policies.
A policy maximising $\Evaluation$ is an \emph{optimal policy}, denoted $\OptimalPolicy$.
Similarly, $\QStar$, $\VStar$, and $\AStar$ denote the $Q$-, value, and advantage
functions of an optimal policy.
Since $\Evaluation$ may be multimodal,
we often discuss the \emph{set} of optimal policies. However, $\QStar$, $\VStar$,
and $\AStar$ are each unique.

Besides optimal policies, we also consider policies resulting from alternative objectives.
Given an \emph{inverse temperature} parameter $\beta>0$, we define the
\emph{Boltzmann-rational} policy~\citep{ramachandran2007},
denoted $\BoltzmannRationalPolicy$, with a softmax distribution over the
optimal advantage function:
\begin{equation}
\label{eq:boltzmann-rational-policy}
\BoltzmannRationalPolicy (a \mid s)
= 
\frac{\exp\bigl(\beta \AStar(s, a)\bigr)}{\sum_{a' \in \Actions} \exp\bigl(\beta \AStar(s, a')\bigr)}.
\end{equation}
The \emph{Maximal Causal Entropy} (MCE)
policy~\citep{ziebart2010thesis,haarnoja2017} is given by
\begin{equation}\label{eq:mce_policy}
\MCEPolicy(a \mid s) = \frac{\exp\bigl(\beta \QSoft(s, a)\bigr)}{\sum_{a' \in \Actions} \exp\bigl(\beta \QSoft(s, a')\bigr)},
\end{equation}
where $\QSoft$ is the \emph{soft $Q$-function}, a regularised variant of the $Q$-function.
\citet[Theorem~2 and Appendix~A.2]{haarnoja2017} show that $\QSoft$ is the
unique function satisfying
\begin{equation}\label{eq:Q_soft}
\begin{split}
  \QSoft(s,a) = \mathbb{E}\Big[
    &R(s,a,S') +\\ 
    &\gamma \frac1\beta \log
        \sum_{a' \in \Actions} \exp\left( \beta\QSoft(S', a')\right)\Big].
\end{split}
\end{equation}
The MCE policy results from maximising a policy evaluation function
with an entropy regularisation term with weight
$\alpha = \beta^{-1}$~\citep{haarnoja2017}.
The Boltzmann-rational policy can also be connected to a kind of
(per-timestep) entropy regularisation~\citep{haarnoja2017}.

\section{Our Framework}
\label{sec:framework}

In this section, we describe a framework for analysing partial identifiability, that we will then use throughout the paper.
This framework, illustrated in \Cref{fig:framework}, makes it easy to compare different data sources.
The framework also simplifies reasoning about whether or not the ambiguity of a given data source could be problematic for a given application.

\subsection{The Reward Learning Lattice}\label{sec:lattice}

Given sets of states $\States$ and actions $\Actions$, let $\mathcal{R}$ be the set of reward functions definable over $\States$ and $\Actions$, that is, functions of type $\SxAxS \to \mathbb{R}$.
Moreover, let $X$ be some set of objects that can be computed from reward functions.

\begin{definition}
Given a function $f : \mathcal{R} \to X$, the \emph{invariance partition} of $f$ is the partition of $\mathcal{R}$ according to the equivalence relation $\sim$ where $R_1 \sim R_2$ if and only if $f(R_1) = f(R_2)$.
\end{definition}

Invariance partitions characterise the ambiguity of reward learning data sources, and also the ambiguity tolerance of different applications.
To see this, let us first build an abstract model of a reward learning algorithm.
Let $R^\star$ be the true reward function.
We model the data source as a function $f : \mathcal{R} \to X$, for some data space $X$,
so that the learning algorithm observes $f(R^\star)$.
Note that $f(R^\star)$ could be a distribution, which models the case where the data comprises a set of samples from some source, but it could also be some finite object.
A reasonable learning algorithm should converge to a reward function $R'$ that is compatible with the observed data, that is, such that $f(R') = f(R^\star)$.
This means that the invariance partition of $f$ groups together all reward functions that the learning algorithm could converge to.
When it comes to applications, let $g : \mathcal{R} \to Y$ be the function whose output we wish to compute. 
If $R^\star$ is the true reward function, then it is acceptable to instead learn a reward function $R'$ as long as $g(R') = g(R^\star)$.
This means that the invariance partition of $g$ groups together all reward functions that it would be acceptable to learn.
The information contained in the data source $f$ is guaranteed to be sufficient for computing $g$ if $f(R') = f(R^\star) \implies g(R') = g(R^\star)$.

To make this more intuitive, let us give an example. Consider first a reward learning data source, such as trajectory comparisons.
In this case, we can let $X$ be the set of all (strict, partial) orderings of the set of all trajectories, and $f$ be the function that returns the ordering of the trajectories that is induced by the trajectory return function, $G$. Let $R^\star$ be the true reward function. In the limit of infinite data, the reward learning algorithm will learn a reward function $R'$ that induces the same trajectory ordering as $R^\star$, which means that $f(R') = f(R^\star)$. Furthermore, if we want to use the learnt reward function to compute a policy, then we may consider the function $g : \mathcal{R} \to \Pi$ that takes a reward function $R$, and returns a policy $\pi^\star$ that is optimal under $R$ (in a given environment). Then as long as $f(R') = f(R^\star) \implies g(R') = g(R^\star)$, we will compute a policy that is optimal under the true reward $R^\star$.

\begin{figure}[!ht]
    \centering
    \tikzset{
    redbrick/.style={very thick,rounded corners=1pt,draw=red!40,fill=red!10},
    redspace/.style={very thick,draw=red!60,rounded corners=3pt},
    redspace full/.style={redspace,fill=red!10},
    obvspace/.style={very thick,rounded corners=20pt,draw=blue!50,fill=blue!15},
    outspace/.style={very thick,rounded corners=20pt,draw=green!50!blue!60,fill=green!50!blue!10},
    point/.style={draw,fill,circle,inner sep=1pt},
    minor point/.style={point,inner sep=0.5pt},
    fnarrow/.style={->,shorten >=5pt,shorten <=5pt,very thick},
    fntext/.style={midway,above,align=center},
}
\begin{tabular}{cl}
\begin{tikzpicture}
    \draw[white] (0.0, -1.2) -- (0.0, 2);
    \node at (0.0, 0.0) {\bf(a)};
\end{tikzpicture}
    &
\begin{tikzpicture}
    % SPACES
    % (a) observation space with resulting dataset
    \path[obvspace] (-1.2, -1.2) rectangle (+1.2, +1.2);
    % (b) reward space with a partition of rewards
    \path[redspace full] (2.8, -1.2) rectangle (5.2, +1.2);
    % ARROWS AND LABELS
    \node at ( 0.0, 1.5) {data space $X$};
    \node at ( 4.0, 1.5) {reward space $\mathcal{R}$};
    \draw[fnarrow] (1.3, 0.8) -- (2.7, 0.8)
        node[fntext] {reward\\[.55ex]learning\\[.46ex]};
    % POINTS AND MAPS
    % data
    \node[point] (X) at (0.0, -.1) [label=left:$f(R)$] {};
    % learnt rewards
    \node[point] at (3.7, -0.2) [label=above:$R?$] {};
    \node[point] at (4.2, -0.1) [label=above:$R?$] {};
    % connections
    \draw (3.4,-0.4) -- (X.east)  -- (3.4,+0.4);
    \draw[rounded corners=1.5pt] (3.38,-0.42) rectangle (4.62,+0.42);
\end{tikzpicture}
\\
\begin{tikzpicture}
    \draw[white] (0.0, -1.2) -- (0.0, 2);
    \node at (0.0, 0.0) {\bf(b)};
\end{tikzpicture}
    &
\begin{tikzpicture}
    % SPACES
    % reward space with a partition of rewards
    \begin{scope}
        \clip[rounded corners=2pt] (-1.2, -1.2) rectangle (1.2, +1.2);
        \path[redbrick] (-1.2, +0.9) rectangle (-0.6, +1.2);
        \path[redbrick] (-0.6, +0.9) rectangle ( 0.0, +1.2);
        \path[redbrick] ( 0.0, +0.9) rectangle ( 0.6, +1.2);
        \path[redbrick] ( 0.6, +0.9) rectangle ( 1.2, +1.2);
        \path[redbrick] (-1.5, +0.4) rectangle (-0.9, +0.9);
        \path[redbrick] (-0.9, +0.4) rectangle (-0.3, +0.9);
        \path[redbrick] (-0.3, +0.4) rectangle ( 0.3, +0.9);
        \path[redbrick] ( 0.3, +0.4) rectangle ( 0.9, +0.9);
        \path[redbrick] ( 0.9, +0.4) rectangle ( 1.5, +0.9);
        \path[redbrick] (-1.8, -0.4) rectangle (-0.6, +0.4);
        \path[redbrick] (-0.6, -0.4) rectangle ( 0.6, +0.4); % main one
        \path[redbrick] ( 0.6, -0.4) rectangle ( 1.8, +0.4);
        \path[redbrick] (-1.5, -0.4) rectangle (-0.9, -0.9);
        \path[redbrick] (-0.9, -0.4) rectangle (-0.3, -0.9);
        \path[redbrick] (-0.3, -0.4) rectangle ( 0.3, -0.9);
        \path[redbrick] ( 0.3, -0.4) rectangle ( 0.9, -0.9);
        \path[redbrick] ( 0.9, -0.4) rectangle ( 1.5, -0.9);
        \path[redbrick] (-1.2, -0.9) rectangle (-0.6, -1.2);
        \path[redbrick] (-0.6, -0.9) rectangle ( 0.0, -1.2);
        \path[redbrick] ( 0.0, -0.9) rectangle ( 0.6, -1.2);
        \path[redbrick] ( 0.6, -0.9) rectangle ( 1.2, -1.2);
    \end{scope}
    \path[redspace] (-1.2, -1.2) rectangle (1.2, +1.2);
    % object space with equal outcomes
    \path[obvspace] (4-1.2, -1.2) rectangle (4+1.2, +1.2);
    % ARROWS AND LABELS
    \node at ( 0, 1.5) {reward space $\mathcal{R}$};
    \node at ( 4, 1.5) {output space $X$};
    \draw[fnarrow] (1.3, 1.2) -- (4-1.3, 1.2) node[fntext] {$f$};
    % POINTS AND MAPS
    % rewards
    \node[point] at (-.3, -0.1) [label=above:$R$] {};
    \node[point] at (0.2, -0.2) [label=above:$R'$] {};
    % outcomes
    \node[point] (Z)  at (4.0, -.2) [label=above:{$f(R){=}f(R')$}] {};
    \node[minor point] (Z1) at (3.6, -.8) {};
    \node[minor point] (Z2) at (4.6, +.8) {};
    % cells
    \draw[rounded corners=1.5pt] (-.62,-0.42) rectangle (.62,+0.42);
    \draw[rounded corners=1.5pt] (-0.62,-1.22) rectangle (0.02,-0.88);
    \draw[rounded corners=1.5pt] (0.88,0.92) rectangle (1.22,0.38);
    % connections
    \draw (0.6,-0.4) -- (Z.west) -- (0.6,+0.4);
    \draw (0.0,-1.2) -- (Z1.west) -- (0.0,-0.9);
    \draw (1.2, 0.9) -- (Z2.west) -- (1.2, 0.4);
\end{tikzpicture}
    \\
\begin{tikzpicture}
    \draw[white] (0.0, -1.2) -- (0.0, 2.4);
    \node at (0.0, 0.0) {\bf(c)};
\end{tikzpicture}
    &
\begin{tikzpicture}
    % SPACES
    % reward space with a partition of rewards
    \node[align=center] at ( 4.0, 1.7) {invariance\\[.5ex]partition of $f$};
    \begin{scope}
        \clip[rounded corners=2pt] (2.8, -1.2) rectangle (5.2, +1.2);
        \path[redbrick] (2.8, +0.9) rectangle (3.4, +1.2);
        \path[redbrick] (3.4, +0.9) rectangle (4.0, +1.2);
        \path[redbrick] (4.0, +0.9) rectangle (4.6, +1.2);
        \path[redbrick] (4.6, +0.9) rectangle (5.2, +1.2);
        \path[redbrick] (2.5, +0.4) rectangle (3.1, +0.9);
        \path[redbrick] (3.1, +0.4) rectangle (3.7, +0.9);
        \path[redbrick] (3.7, +0.4) rectangle (4.3, +0.9);
        \path[redbrick] (4.3, +0.4) rectangle (4.9, +0.9);
        \path[redbrick] (4.9, +0.4) rectangle (5.5, +0.9);
        \path[redbrick] (2.2, -0.4) rectangle (3.4, +0.4);
        \path[redbrick] (3.4, -0.4) rectangle (4.6, +0.4); % main one
        \path[redbrick] (4.6, -0.4) rectangle (5.8, +0.4);
        \path[redbrick] (2.5, -0.4) rectangle (3.1, -0.9);
        \path[redbrick] (3.1, -0.4) rectangle (3.7, -0.9);
        \path[redbrick] (3.7, -0.4) rectangle (4.3, -0.9);
        \path[redbrick] (4.3, -0.4) rectangle (4.9, -0.9);
        \path[redbrick] (4.9, -0.4) rectangle (5.5, -0.9);
        \path[redbrick] (2.8, -0.9) rectangle (3.4, -1.2);
        \path[redbrick] (3.4, -0.9) rectangle (4.0, -1.2);
        \path[redbrick] (4.0, -0.9) rectangle (4.6, -1.2);
        \path[redbrick] (4.6, -0.9) rectangle (5.2, -1.2);
    \end{scope}
    \path[redspace] (2.8, -1.2) rectangle (5.2, +1.2);
    
    % reward space with a partition of rewards
    \node[align=center] at ( 8.0, 1.7) {invariance\\[.5ex]partition of $g$};
    \begin{scope}
        \clip[rounded corners=2pt] (6.8, -1.2) rectangle (9.2, +1.2);
        % top third
        \path[redbrick] (6.8, +1.2) -- (6.8, +0.4) -- (7.7, +0.4)
                     -- (7.7, +0.9) -- (7.4, +0.9) -- (7.4, +1.2)
                     -- (6.8, +1.2);
        \path[redbrick] (7.4, +1.2) -- (8.6, +1.2) -- (8.6, +0.9)
                     -- (8.3, +0.9) -- (8.3, +0.4) -- (7.7, +0.4)
                     -- (7.7, +0.9) -- (7.4, +0.9) -- (7.4, +1.2);
        \path[redbrick] (9.2, +1.2) -- (9.2, +0.4) -- (8.3, +0.4)
                     -- (8.3, +0.9) -- (8.6, +0.9) -- (8.6, +1.2)
                     -- (9.2, +1.2);
        % middle third
        \path[redbrick] (6.2, -0.4) rectangle (7.4, +0.4);
        \path[redbrick] (8.6, -0.4) rectangle (9.8, +0.4);
        \path[redbrick] (8.0, -0.9) -- (7.7, -0.9) -- (7.7, -0.4)
                     -- (7.4, -0.4) -- (7.4, +0.4) -- (8.6, +0.4)
                     -- (8.6, -0.4) -- (8.3, -0.4) -- (8.3, -0.9)
                     -- (8.0, -0.9);
        % below third
        \path[redbrick] (6.8, -1.2) -- (6.8, -0.4) -- (7.7, -0.4)
                     -- (7.7, -0.9) -- (8.0, -0.9) -- (8.0, -1.2)
                     -- (6.8, -1.2);
        \path[redbrick] (9.2, -1.2) -- (9.2, -0.4) -- (8.3, -0.4)
                     -- (8.3, -0.9) -- (8.0, -0.9) -- (8.0, -1.2)
                     -- (9.2, -1.2);

    \end{scope}
    \path[redspace] (6.8, -1.2) rectangle (9.2, +1.2);

    % relation
    \node at ( 6.0, 0.0) {\Huge$\preceq$};
    \node[align=center] at ( 6.0, 0.8) {partition\\[.5ex]refinemt.};
\end{tikzpicture}
\end{tabular}
    \caption{\label{fig:framework}%
        An overview of our framework.
        \textbf{(a)}~A reward learning algorithm infers a reward function from data,
            assuming the data has been generated from some \emph{data source},
            modelled as a function $f : \mathcal{R} \to X$.
            Depending on $f$, multiple reward functions $R \in \mathcal{R}$ may be
            consistent with the observed data $f(R) \in X$.
        \textbf{(b)}~By analysing the function $f$ one can partition the reward space
            into groups of reward functions that lead to the same output. We call
            this partition the \emph{invariance partition} of the function $f$.
            This partition can be effectively described by the set of transformations
            of the reward function that do not change the output of the function $f$.
        \textbf{(c)}~Such a partition characterises the ambiguity of reward learning
            based on a data source $f$. Moreover, it characterises the tolerance to
            ambiguity of computing $f(R)$ (where $R$ is a learnt reward function).
            We can compare data sources and applications by their partitions using
            the \emph{partition refinement} relation, that is, $f \refines g$
            if and only if
            $\forall R_1, R_2 \in \mathcal{R}, f(R_1) = f(R_2) \implies g(R_1) = g(R_2)$.
            If $f$ and $g$ are data sources, then $f \refines g$ means that $f$ contains
            no more ambiguity than $g$.
            If $g$ is a downstream application, then $f \refines g$ means $g$ can
            tolerate the ambiguity in a reward function learnt from data source $f$.
    }
\end{figure}
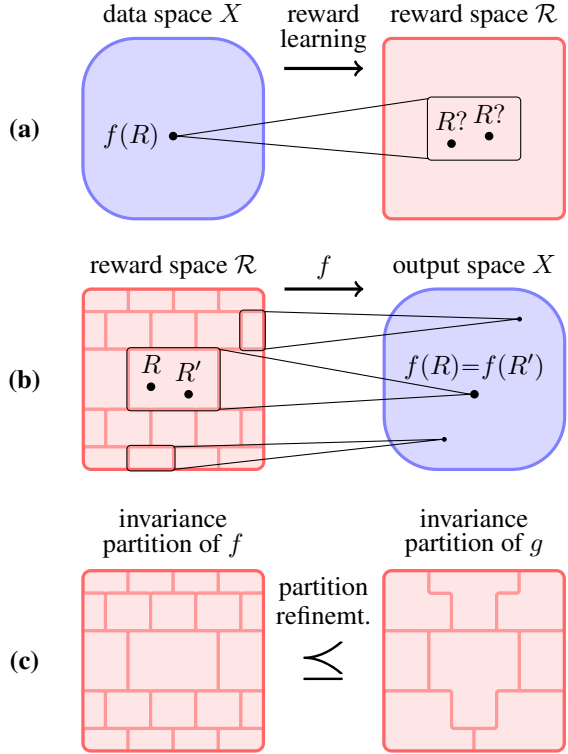

The characterisation of ambiguity and tolerance in terms of partitions of $\mathcal{R}$ suggests a natural \emph{partial order} on data sources and applications.
Namely, we use the \emph{partition refinement} relation on their invariances:

\begin{definition}\label{def:refinement}
Consider two functions $f : \mathcal{R} \to X$ and $g : \mathcal{R} \to Y$.
If $f(R_1) = f(R_2) \implies g(R_1) = g(R_2)$ for all $R_1, R_2 \in \mathcal{R}$, we write $f \refines g$
and say $f$ is \emph{no more ambiguous} than $g$.
If $f \refines g$ but not $g \refines f$, then we write $f \refinesStrict g$ and say that $f$ is \emph{strictly less ambiguous} than $g$.
\end{definition}

We can use this to formalise some important relationships. Given two reward learning data sources $f$ and $g$, if $f \refinesStrict g$ then we get strictly more information about the underlying reward function by observing data from $f$ than we get by observing data from $g$. Moreover, given a downstream application $h$, $f \refines h$ is precisely the condition of $h$ tolerating the ambiguity of the data source $f$. It is also worth noting that the invariances of an object are inherited by all objects which can be computed from it. Formally:

\begin{proposition}\label{prop:ambiguity_inherited}
Consider two functions $f : \mathcal{R} \to X$ and $g : \mathcal{R} \to Y$. If there exists a function $h : X \to Y$ such that $h \circ f = g$, then $f \refines g$.
\end{proposition}

This means that if there is an intermediate object (such as, for example, the $Q$-function) that is too ambiguous for a given application, then this will also hold for all objects that in principle could be computed from it.

Our framework places all reward learning data sources and applications in a lattice structure. In particular, the invariance partition of $f : \mathcal{R} \to X$ is a partition of $\mathcal{R}$, and the set of all these partitions forms a (bounded) lattice via the partition refinement relation. 
This, together with the properties of the refinement relation that we have just outlined, will make it simple and intuitive to reason about ambiguity.

\subsection{Reward Transformations}\label{sec:reward_transformations}

Throughout the rest of this paper, we will characterise the invariances of functions $f : \mathcal{R} \to X$ in terms of the transformations of $R$ that preserve $f(R)$. Formally:

\begin{definition}
\label{def:transformations_and_invariances}
A \emph{reward transformation} is a map $t : \mathcal{R} \to \mathcal{R}$.
We say that the \emph{invariances} of $f$ is a set of reward transformations $T$ if for all $R_1, R_2 \in \mathcal{R}$, we have that $f(R_1) = f(R_2)$ if and only if there is a $t \in T$ such that $t(R_1) = R_2$.
We then say that $f$ \emph{determines $R$ up to} $T$.
\end{definition}

When talking about a particular kind of object, we will for the sake of brevity usually leave the function $f$ implicit, and instead just mention the relevant object. 
For example, we might say that ``the Boltzmann-rational policy determines $R$ up to $T$''. This should be understood as saying that ``$f$ determines $R$ up to $T$, where $f$ is the function that takes a reward and returns the corresponding Boltzmann-rational policy''.
It is also worth noting that $f$ and $T$ often will be parameterised by $\TransitionDistribution$, $\InitStateDistribution$, or $\gamma$; this dependence will similarly not always be fully spelt out.

We will sometimes express the invariances of a function $f$ in terms of several sets of reward transformations -- for example, we might say that ``$f$ determines $R$ up to $T_1$ and $T_2$''. This should be understood as saying that $f$ determines $R$ up to $T$, where $T$ is the set of all transformations that can be formed by composing transformations in $T_1$ and $T_2$.

Our results are expressed in terms of several fundamental sets of reward transformations, that we will now define.
Before considering novel transformations, first recall \emph{potential shaping},
introduced by \citet{ng1999} and widely known to preserve optimal policies in all MDPs.
We explore some properties of potential shaping in \cref{apx:props}.
\begin{definition}[Potential Shaping]
A \emph{potential function} is a function $\Phi : \States \to \Reals$, where
$\Phi(s) = 0$ if $s$ is a terminal state.
If $\Phi(s) = k$ for all initial states then we say that $\Phi$ is
\emph{$k$-initial}. 
Let $\reward_1$ and $\reward_2$ be reward functions.
Given a discount $\discount$,
we say 
  $\reward_2$ is produced by \emph{($k$-initial) potential shaping} of $\reward_1$
if
  $\reward_2(s,a,s') = \reward_1(s,a,s') + \discount\cdot\Phi(s') - \Phi(s)$
  for some ($k$-initial) potential function $\Phi$.
\end{definition}

We next introduce a number of new transformations.

\begin{definition}[$S'$-Redistribution]
\label{def:sprime-redistribution}
Let $\reward_1$ and $\reward_2$ be reward functions.
Given transition dynamics $\TransitionDistribution$,
we say that
    $\reward_2$ is produced by \emph{$S'$-redistribution} of $\reward_1$
if
    $\Expect{S' \sim \TransitionDistribution(s,a)}{\reward_1(s,a,S')}
    = \Expect{S' \sim \TransitionDistribution(s,a)}{\reward_2(s,a,S')}$.
\end{definition}

$S'$-redistribution is simply any transformation that preserves $\Expect{S' \sim \TransitionDistribution(s,a)}{\reward(s,a,S')}$.\footnotemark{}
For example, if at least two states $s'_1$, $s'_2$ are in the support of
$\TransitionDistribution(s,a)$, then $S'$-redistribution could increase $\reward(s,a,s'_1)$ and decrease $\reward(s,a,s'_2)$, as long as $\Expect{S' \sim \TransitionDistribution(s,a)}{\reward(s,a,S')}$ stays constant.
Also note that $S'$-redistribution allows $\reward$ to be changed arbitrarily for impossible transitions.

\footnotetext{$S'$-redistribution depends crucially on the reward function's dependence on the successor state. This set of transformations collapses to the identity for simpler spaces of reward functions, as we explore in \cref{apx:rewards}.}

\begin{definition}[Monotonic Transformations]
Let $\reward_1$ and $\reward_2$ be reward functions. We say that $\reward_2$ is produced by a \emph{zero-preserving monotonic transformation} (ZPMT) of $\reward_1$ if for all $x$ and $x' \in \SxAxS$, $\reward_1(x) \leq \reward_1(x')$ if and only if $\reward_2(x) \leq \reward_2(x')$, and $\reward_1(x) = 0$ if and only if $\reward_2(x) = 0$.
Moreover, we say that $\reward_2$ is produced by \emph{positive linear scaling} of $\reward_1$ if $\reward_2 = c \cdot \reward_1$ for some positive constant $c$.
\end{definition}

A zero-preserving monotonic transformation is simply a monotonic transformation
that maps zero to itself. Positive linear scaling is a special
case.

\begin{definition}[Optimality-Preserving Transformation]
\label{def:opt-preserving}
Let $\reward_1$ and $\reward_2$ be reward functions.
Given
    transition dynamics $\TransitionDistribution$ and
    discount rate $\discount$,
we say 
    $\reward_2$ is produced by an
    \emph{optimality-preserving transformation} 
    of $\reward_1$
if
    there is a function $\Psi : \States \rightarrow \Reals$
    such that
        $\Expect
            {S' \sim \TransitionDistribution(s,a)}
            {\reward_2(s,a,S') + \discount \cdot \Psi(S')}
        \leq
            \Psi(s)
        $
        for all $s, a$,
        with equality if and only if $a \in \argmax_a A_{1\star}(s,a)$.
\end{definition}

Here $\Psi$ acts as a new value function, and the last condition ensures that $R_1$ and $R_2$ share the same optimal actions (and therefore the same optimal policies, cf.\ \cref{thm:optimal-policies-maximal}).

\begin{definition}[Masking]
Let $\reward_1$ and $\reward_2$ be reward functions.
Given a transition set $\Mask \subseteq \SxAxS$,
we say that
    $\reward_2$ is produced by a \emph{mask of $\Mask$} from $\reward_1$ 
if
    $\reward_1(s,a,s') = \reward_2(s,a,s')$ for all $(s,a,s') \notin \Mask$.
\end{definition}

Masking allows the reward to vary freely for some set of transitions $\Mask$. It is also worth noting that some of these sets of reward transformations are subsets of other sets. For example, all masks of impossible transitions are instances of $S'$-redistribution, and any potential shaping transformation is also optimality preserving, etc. All of these relationships are mapped out in Figure~\ref{fig:reward_transformation_subsumption}.

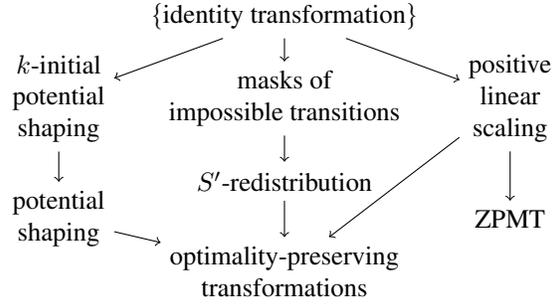
\begin{figure}
\centering
\begin{tikzpicture}
    % nodes
    \begin{scope}[every node/.style={align=center}]
    \node (0) at ( 4.0, 3.7) {$\{$identity transformation$\}$};
    \node (1) at ( 4.0, 2.6) {masks of\\impossible transitions};
    \node (2) at ( 4.0, 1.5) {$S'$-redistribution};
    \node (3) at ( 1.0, 2.6) {$k$-initial\\potential\\shaping};
    \node (4) at ( 1.0, 1.0) {potential\\shaping};
    \node (5) at ( 7.0, 2.6) {positive\\linear\\scaling};
    \node (6) at ( 7.0, 1.0) {ZPMT};
    \node (7) at ( 4.0, 0.3) {optimality-preserving\\transformations};
    \end{scope}
    % edges
    \draw[->] (0) -- (1);
    \draw[->] (0) -- (3);
    \draw[->] (0) -- (5);
    \draw[->] (1) -- (2);
    \draw[->] (2) -- (7);
    \draw[->] (3) -- (4);
    \draw[->] (4) -- (7);
    \draw[->] (5) -- (6);
    \draw[->] (5) -- (7);
\end{tikzpicture}
\caption{\label{fig:reward_transformation_subsumption}%
    Subset relationships between the main sets of reward transformations
    (given a fixed transition distribution, initial state distribution, and discount rate).
    $A \to B$ denotes $A \subseteq B$.
    }
\end{figure}

\section{Invariances of Reward-Related Objects}
\label{sec:results}

In this section, we catalogue the \emph{invariances} of various important objects
that can be derived from reward functions.
As discussed in Section~\ref{sec:lattice}, these invariances describe both the ambiguity when these objects are used as data sources for reward learning, and the ambiguity tolerance when these objects are computed as part of an application.
Proofs and additional results are given in \cref{apx:directory,apx:props,apx:proofs}.

\subsection{Invariances of Policies}
\label{sec:results-expert}

We first characterise the invariances of different types of \emph{policies}. As discussed in \cref{sec:lattice}, this describes both the ambiguity of various inverse reinforcement learning algorithms, and the ambiguity tolerance of various applications in policy optimisation. We begin with the $Q$-function:

\begin{restatable}{theorem}{thmQFunction}
\label{lemma:Q_function_proximal_redistribution}
Given an MDP and a policy $\policy$,
the $Q$-function $\Qfor{\policy}$ of $\policy$ determines $\reward$ up to $S'$-redistribution.
The optimal $Q$-function, $\QStar$, and the soft $Q$-function $\QSoft$ (for any $\beta$),
both have precisely the same invariances.
\end{restatable}

As noted in \cref{prop:ambiguity_inherited}, this invariance is inherited by any object that can be derived from a $Q$-function.
Next, we turn our attention to optimal policies. We say that an optimal policy is \emph{maximally supportive} if it takes all optimal actions with positive probability.

\begin{restatable}{theorem}{thmOptimalPoliciesMaximal}
\label{thm:optimal-policies-maximal}
Given an MDP,
a maximally supportive optimal policy $\pi^\star$
determines $\reward$ up to
optimality-preserving transformations. The set of \emph{all} optimal policies has precisely the same invariances.
\end{restatable}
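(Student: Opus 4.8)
The plan is to establish the two inclusions that together mean ``the observed policy determines $\reward$ up to'' this class: \emph{(soundness)} every optimality-preserving transformation with $\OPTFunc = \argmax_a \AStar(s,a)$ leaves the observed maximally supportive optimal policy unchanged, and \emph{(completeness)} every reward function inducing the same observed policy is reachable by such a transformation. The common tool for both directions will be a characterisation of the transformation class in terms of optimal actions: for a fixed $\OPTFunc$, a reward function $\reward'$ satisfies the optimality-preserving condition with $\OPTFunc$ (note this condition constrains only $\reward'$, $\OPTFunc$, $\TransitionDistribution$, and $\discount$) if and only if, in every state $s$, the set $\OPTFunc(s)$ is exactly the set of optimal actions of $\reward'$.

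I would prove this characterisation first, as it is the technical heart of the argument. The defining inequality with witness $\Psi$ states exactly that the one-step backup $Q'(s,a) = \Expect{S' \sim \TransitionDistribution(s,a)}{\reward'(s,a,S') + \discount \Psi(S')}$ satisfies $Q'(s,a) \le \Psi(s)$ for all $a$, with equality precisely on $\OPTFunc(s)$. Since $\OPTFunc(s)$ is nonempty, the maximum $\max_a Q'(s,a)$ is attained and equals $\Psi(s)$, so $\Psi$ solves the Bellman optimality equation for $\reward'$; by uniqueness of its solution (using $\discount \in (0,1)$) we conclude $\Psi$ is the optimal value function of $\reward'$, and hence the equality set $\{a : Q'(s,a) = \Psi(s)\}$ coincides with the optimal action set of $\reward'$ in $s$. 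Conversely, given any $\reward'$, taking $\Psi$ to be the optimal value function of $\reward'$ makes the inequality hold with equality exactly on the optimal actions, so $\reward'$ satisfies the condition with $\OPTFunc$ taken to be its own optimal action sets.

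Next I would connect the observed policy to optimal action sets using the two stated assumptions. Assumption (1) says the maximally supportive optimal policy gives positive probability to exactly the optimal actions, so its support in each state $s$ equals $\argmax_a \AStar(s,a)$; assumption (2) says the probabilities themselves are a fixed function of this set alone. Hence the observed policy is both determined by the per-state optimal action sets and recovers their supports.

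Finally I would assemble the two directions. For soundness: by the characterisation, a reward $\reward'$ produced by an optimality-preserving transformation with $\OPTFunc = \argmax_a \AStar(s,a)$ has the same optimal action sets as $\reward$, so by assumption (2) its maximally supportive optimal policy is identical. For completeness: if $\reward'$ induces the same observed policy, reading off the per-state supports via assumption (1) shows $\reward$ and $\reward'$ share the optimal action sets $\argmax_a \AStar(s,a)$, and the characterisation then places $\reward'$ in the transformation class. I expect the main obstacle to be the forward direction of the characterisation: arguing that an arbitrary witness $\Psi$ must coincide with the optimal value function of $\reward'$, so that the equality set is genuinely the optimal-action set rather than merely containing $\OPTFunc(s)$. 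This is exactly where the Bellman optimality equation and the uniqueness of its solution are essential.
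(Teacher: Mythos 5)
Your proposal is correct and follows essentially the same route as the paper: the paper's proof reduces the observed maximally supportive policy to the per-state optimal action sets via assumptions (1) and (2), and then invokes a general lemma whose proof is exactly your characterisation --- the witness $\Psi$ must satisfy the Bellman optimality equation (since $\OPTFunc(s)$ is nonempty) and hence equals $\VStar'$ by uniqueness, so the equality set is precisely the optimal-action set of $\reward'$, with the converse obtained by taking $\Psi = \VStar'$. The only cosmetic difference is that the paper states the key lemma for an arbitrary subset of states $\mathfrak{S}$ (reused for the trajectory-distribution result) whereas you specialise directly to $\mathfrak{S} = \States$.
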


This theorem answers two questions at once. First, it tells us the exact ambiguity tolerance that we have if we want to use the learnt reward function to compute an optimal policy. Second, it also tells us what is the ambiguity of inverse reinforcement learning algorithms that observe an optimal policy, such as those developed by \citet{ng2000} and \citet{abbeel2004}.

There are many inverse reinforcement learning algorithms that do not assume that the observed demonstrator policy is optimal.
For example, \citet{ramachandran2007} and \citet{ziebart2008}
assume that the demonstrator policy is
\emph{Boltzmann-rational},
and \citet{ziebart2010paper} assume that it is \emph{causal entropy maximising}.
We catalogue the invariances of both these types of policies.
\begin{restatable}{theorem}{thmBoltzmannPoliciesBoth}
\label{thm:boltzmann-rational}
Given an MDP and an inverse temperature parameter $\beta$, the Boltzmann-rational policy $\BoltzmannRationalPolicy$ determines $\reward$ up to
$S'$-redistribution and potential shaping.
The MCE policy $\MCEPolicy$ has precisely the same invariances.
\end{restatable}

\subsection{Invariances of Trajectories}

Note that Theorem~\ref{thm:optimal-policies-maximal} and \ref{thm:boltzmann-rational} describe the invariances of \emph{policies}, rather than \emph{trajectories sampled from policies}.
The invariances of the trajectories are largely the same as of the policies themselves, but with some minor caveats.
In the infinite-data limit, trajectories sampled from a
policy reveal the \emph{distribution} of trajectories induced by the
policy, and therefore \emph{the policy itself} for all states reachable
via its supported actions.

Boltzmann-rational policies and MCE policies support \emph{all} actions, so the trajectory distribution determines the policy for
all \emph{reachable} states.
It follows that trajectory sampling introduces invariance solely to changes in the reward of unreachable transitions.

\begin{restatable}{theorem}{thmBoltzmannTrajectoriesBoth}
\label{thm:boltzmann-rational-trajectories}
Given an MDP and an inverse temperature parameter $\beta$,
the distribution of trajectories
$\TrajDistributionBoltzmann$ induced by the Boltzmann-rational policy 
$\BoltzmannRationalPolicy$,
or $\TrajDistributionMCE$ induced by the MCE policy $\MCEPolicy$,
determines $\reward$ up to
$S'$-redistribution, potential shaping, and a mask of unreachable
transitions.
\end{restatable}

Similarly, trajectories sampled from an optimal policy reveal the policy in those states that the policy visits. %its actions reach.%, introducing invariance to some transformations of reward in other states.
However, here there are some subtleties coming from the fact that an optimal policy may not visit all reachable states. This allows the reward to vary greatly, but not arbitrarily, in the states that are reachable but not visited by the optimal policy.
For this purpose, we introduce a more general notion of ``optimality-preserving transformation'' that is parameterised by a set-valued function $\OPTFunc$ that specifies the optimal actions.

\begin{definition}[General Optimality-Preserving Transformation]\label{def:general-opt-preserving}
Let $\reward$ and $\reward'$ be reward functions.
Given
    a function $\OPTFunc : \States \rightarrow \Powerset(\Actions) \setminus
    \{\emptyset\}$,
    transition dynamics $\TransitionDistribution$, and
    discount rate $\discount$,
we say 
    $\reward'$ is produced from $\reward$ by a
    \emph{(general) optimality-preserving transformation} with $\OPTFunc$
if
    there is a function $\Psi : \States \rightarrow \Reals$
    such that
        $\Expect
            {S' \sim \TransitionDistribution(s,a)}
            {\reward'(s,a,S') + \discount \Psi(S')}
        \leq
            \Psi(s)
        $
        for all $s, a$,
        with equality if and only if $a \in \OPTFunc(s)$.
\end{definition}

The difference between \cref{def:opt-preserving,def:general-opt-preserving} lies in the introduction of $\OPTFunc$.
In use, we constrain $\OPTFunc$ to be $\argmax_{a} \AStar(s, a)$ for $s$ in some subset of states (rather than for all states, which is the case for \cref{def:opt-preserving}).
If $\OPTFunc$ were unconstrained, the set would contain all possible transformations.
We can now state the invariances:

\begin{restatable}{theorem}{thmOptimalTrajectories}
\label{thm:optimal-trajectories}
Given an MDP, consider the distribution of trajectories induced by a maximally supportive optimal policy,
$\TrajDistributionOptimal$.
Let $\mathfrak{S}$ be the set of visited states.
Let $\mathfrak{O}$ be the set of functions $\OPTFunc$ defined on $\States$ such
that $\OPTFunc(s) = \argmax_{a} \AStar(s,a)$ for $s \in \mathfrak{S}$.
$\TrajDistributionOptimal$ determines $\reward$ up to
general optimality-preserving transformations for all $\OPTFunc \in \mathfrak{O}$.
\end{restatable}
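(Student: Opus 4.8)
The plan is to obtain this result by lifting \cref{thm:optimal-policies-maximal} along the map that derives the trajectory distribution from the policy, exactly as \cref{thm:boltzmann-rational-trajectories,thm:mce-trajectories} lift the corresponding policy results for the Boltzmann-rational and MCE policies. The only new phenomenon is that $\TrajDistributionOptimal$ exposes the underlying policy solely on the states it actually visits, namely $\mathfrak{S}$, and is wholly insensitive to how the policy is defined elsewhere; this is what relaxes the constraint $\OPTFunc(s) = \argmax_a \AStar(s,a)$ from all states to just $s \in \mathfrak{S}$, i.e.\ to $\OPTFunc \in \mathfrak{O}$.

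First I would prove a reachability lemma pinning down what $\TrajDistributionOptimal$ reveals: $\mathfrak{S}$ is exactly the set of states reachable from $\InitStateDistribution$ by following supported (optimal) actions, any two maximally supportive optimal policies agreeing on $\mathfrak{S}$ induce identical trajectory distributions, and conversely $\TrajDistributionOptimal$ determines the per-state action probabilities at every $s \in \mathfrak{S}$. The forward containment is an induction on trajectory length --- initial states lie in $\mathfrak{S}$, and every supported transition out of a state in $\mathfrak{S}$ again lands in $\mathfrak{S}$ --- while the converse uses that each $s \in \mathfrak{S}$ is visited with positive probability, so its conditional action distribution can be read off the trajectory marginals.

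Next I would translate ``agreement of the maximally supportive optimal policy on $\mathfrak{S}$'' into the transformation language. By assumption~(2) the policy in a state depends only on that state's set of optimal actions, so agreement on $\mathfrak{S}$ is equivalent to agreement of the optimal action sets $\argmax_a \AStar(s,a)$ for all $s \in \mathfrak{S}$. As in the proof of \cref{thm:optimal-policies-maximal}, the optimality-preserving condition with $\OPTFunc$ witnessed by $\Psi$ is precisely the Bellman optimality condition, forcing $\Psi = \VStar'$ and $\OPTFunc(s) = \argmax_a \AStar'(s,a)$ for the transformed reward $\reward'$. Hence $\reward'$ is reachable from $\reward$ by an optimality-preserving transformation with some $\OPTFunc \in \mathfrak{O}$ exactly when $\argmax_a \AStar'(s,a) = \argmax_a \AStar(s,a)$ for every $s \in \mathfrak{S}$, which is the same condition that characterises equal trajectory distributions.

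Assembling the two directions then gives the theorem: for soundness, such a $\reward'$ has optimal action sets matching $\reward$ on $\mathfrak{S}$, its maximally supportive optimal policy agrees with $\reward$'s there, and the two induce the same $\TrajDistributionOptimal$; for completeness, equal trajectory distributions force matching optimal action sets on $\mathfrak{S}$, and taking $\OPTFunc(s) = \argmax_a \AStar'(s,a)$ exhibits the required transformation with $\OPTFunc \in \mathfrak{O}$. The hard part will be handling $\mathfrak{S}$ as a reward-independent object: it is defined through $\reward$'s optimal policy, so I must verify by the same reachability closure that $\reward'$'s maximally supportive optimal policy visits exactly the same set of states, ensuring the insensitivity-off-$\mathfrak{S}$ argument applies symmetrically and that ``the same $\mathfrak{S}$'' is well-posed on both sides.
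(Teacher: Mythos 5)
Your proposal is correct and follows essentially the same route as the paper: factor $\TrajDistributionOptimal$ into per-state action distributions on $\mathfrak{S}$, use the selection assumptions to identify these with the optimal action sets on $\mathfrak{S}$, and then invoke the general lemma (the paper's Lemma~\ref{thm:optimal-policies}) equating agreement of optimal action sets on a subset $\mathfrak{S}$ with optimality-preserving transformations whose $\OPTFunc$ is constrained only on $\mathfrak{S}$. Your explicit treatment of the well-posedness of $\mathfrak{S}$ under the transformation (via the reachability closure) is a detail the paper's proof leaves implicit, and is handled correctly.
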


Note that a mask of unreachable transitions is included as a subset of the transformations permitted by \cref{thm:optimal-trajectories}.
However, a mask of the complement of $\mathfrak{S}$ is \emph{not} included.
As $\OPTFunc$ is unconstrained outside $\mathfrak{S}$ the reward is effectively unconstrained in those states, except that the reward of transitions out of $\mathfrak{S}$ may have to ``compensate'' for the value of their successor states to prevent new actions that lead out of $\mathfrak{S}$ from becoming optimal.

\subsection{Invariances of Trajectory Evaluation}
\label{sec:results-eval}

The return function captures the reward accumulated over a trajectory, and is the basis for many reward learning algorithms.
In this section, we catalogue the invariances of the return function and related objects.

\begin{restatable}{theorem}{thmReturnTraj}
\label{thm:labels_of_trajectories}
Given an MDP,
the return function restricted to possible and initial trajectories, $\Return_\xi$,
determines $\reward$ up to
zero-initial potential shaping and a mask of \emph{unreachable} transitions.
\end{restatable}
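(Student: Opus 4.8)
The plan is to prove the two inclusions separately: that each listed family is an invariance of $\Return_\xi$ (soundness), and that any $\reward'$ sharing $\Return_\xi$ with $\reward$ is reachable by a composition of them (completeness). For soundness, a mask of unreachable transitions is immediate, since every transition occurring in a possible and initial trajectory is by definition reachable, so altering $\reward$ only off the reachable set leaves every relevant summand untouched. For zero-initial potential shaping I would write $\reward'(s,a,s') = \reward(s,a,s') + \gamma\Phi(s') - \Phi(s)$ and sum along a trajectory $\xi = (s_0,a_0,s_1,\dots)$: the shaping contributions telescope, so the partial sum to horizon $N$ is $\gamma^N\Phi(s_N) - \Phi(s_0)$. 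Because $\States$ is finite, $\Phi$ is bounded and $\gamma^N\Phi(s_N)\to 0$; because $\xi$ is initial and $\Phi$ is zero-initial, $\Phi(s_0)=0$. Hence $\Return_\xi$ is preserved, and composing with a mask changes nothing on reachable transitions.

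For completeness I would set $\delta = \reward' - \reward$, so the hypothesis reads $\sum_{t\ge 0}\gamma^t\,\delta(s_t,a_t,s_{t+1}) = 0$ for every possible initial trajectory (the series converges absolutely, as $\delta$ takes finitely many values and $\gamma<1$). The goal is to produce a zero-initial potential $\Phi$ with $\delta(s,a,s') = \gamma\Phi(s') - \Phi(s)$ on every reachable transition, after which the unreachable transitions are absorbed into a mask. For each reachable state $s$ I would fix an arbitrary possible continuation $\eta$ starting at $s$ (one always exists in a finite MDP) and define $\Phi(s) = -\sum_{t\ge 0}\gamma^t\,\delta(\text{$t$th transition of }\eta)$.

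The crux is well-definedness: $\Phi(s)$ must not depend on the chosen continuation. I would take any prefix $\zeta$ of length $m$ running from an initial state to $s$ (which exists since $s$ is reachable) and two continuations $\eta_1,\eta_2$; the spliced trajectories $\zeta\eta_1$ and $\zeta\eta_2$ are both possible and initial, so each has total $\delta$-sum zero. Subtracting, the shared prefix term cancels and $\gamma^m$ times the two continuation sums coincide, forcing the continuation sums to agree. Consistency then follows by prepending a single reachable transition $(s,a,s')$ to a continuation from $s'$, which gives $\Phi(s) = -\delta(s,a,s') + \gamma\Phi(s')$, i.e.\ $\delta(s,a,s') = \gamma\Phi(s') - \Phi(s)$. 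Taking a full possible initial trajectory as its own continuation from its initial start state shows $\Phi(s)=0$ at initial states, so $\Phi$ is zero-initial automatically.

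The technical heart is this splicing/cancellation argument together with the tail estimate $\gamma^N\Phi(s_N)\to 0$ that underlies both directions. I expect the genuinely delicate point to be verifying that the constructed $\Phi$ satisfies the remaining requirement of a potential function, namely that it vanishes at terminal states: a continuation from a reachable terminal state merely loops via self-transitions, whose reward $\Return_\xi$ does not constrain, so establishing $\Phi(s)=0$ there must lean on the convention that terminal states carry zero self-transition reward (equivalently, that terminality is retained under the transformation). Once that is settled, I would extend $\Phi$ by zero to the remaining (unreachable) states and apply a mask to reproduce $\reward'$ exactly on the unreachable transitions, assembling the claimed decomposition into zero-initial potential shaping followed by a mask of unreachable transitions.
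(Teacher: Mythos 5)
Your argument is correct and essentially the same as the paper's: the paper proves a more general version (\Cref{lemma:potentials_and_episodes}, with an arbitrary constant offset $k$ in place of $0$) by exactly your construction --- defining $\Phi(s)$ as the return difference of an arbitrary possible continuation from a reachable $s$, establishing well-definedness by splicing two continuations onto a common possible initial prefix and cancelling, peeling off one transition to recover $\delta(s,a,s') = \gamma\Phi(s') - \Phi(s)$, and absorbing unreachable transitions into a mask --- and then specialises to $k=0$. Even the delicate terminal-state point you flag is handled the same way there, by leaning on the convention that terminal self-loops carry zero reward.
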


For reward learning, observing $\Return_\xi$ would correspond to the case where the reward learning algorithm observes a trajectory or episode, together with the exact numerical value of the reward for that trajectory or episode.

A popular data source for reward learning is pairwise comparisons between trajectories, where a human is repeatedly shown two example behaviours, and asked to rank which of them is better \citep{akrour2012, christiano2017}.
It is common to model the comparisons as being based on the trajectory return, but with accompanying \emph{decision noise}, to account for the fact that the human labeller will sometimes make mistakes.
One option is to model this noise as following a \emph{Boltzmann distribution}, which says that the labeller is more likely to make a mistake if the trajectories have a similar value.
Formally, given an MDP and an inverse temperature $\beta > 0$,
let $\cmpSoftFrag$ be a distribution over each pair of
\emph{possible} trajectory fragments, $\zeta_1, \zeta_2$,
such that
$$
\Probability(\zeta_1 \cmpSoftFrag \zeta_2)
=
\frac
    {\exp(\beta\Return(\zeta_2))}
    {\exp(\beta\Return(\zeta_1)) + \exp(\beta\Return(\zeta_2))}
\,,
$$
and let $\cmpSoftTraj$ be the analogous distributions for possible, initial \emph{trajectories}.
Intuitively, the data source $\cmpSoftFrag$ corresponds to the case where a noisy expert ranks all trajectory fragments, and where the more valuable $\zeta_1$ is relative to $\zeta_2$, the more likely the expert is to select $\zeta_1$ over $\zeta_2$.

\begin{restatable}{theorem}{thmSoftComparisonsFrag}
\label{thm:boltzmann-comparisons-fragments}
Given an MDP, the distribution of comparisons of possible trajectory fragments,
$\cmpSoftFrag$,
determines $\reward$ up to
a mask of \emph{impossible} transitions.
\end{restatable}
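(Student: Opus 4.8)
The plan is to show that the comparison distribution $\cmpSoftFrag$ and the restricted return function $\Return_\zeta$ are inter-derivable given the MDP, so that they share exactly the same invariances, and then to invoke \cref{thm:labels_of_fragments}. One direction is immediate: by definition each probability $\Probability(\zeta_1 \cmpSoftFrag \zeta_2)$ is a fixed function of $\Return(\zeta_1)$ and $\Return(\zeta_2)$, so $\cmpSoftFrag$ is derived from $\Return_\zeta$ and inherits all of its invariances. In particular, a mask of impossible transitions preserves $\cmpSoftFrag$, since it leaves the returns of all possible fragments unchanged.

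For the converse I would first observe that the Boltzmann probability depends only on the return difference: writing $\sigma(z) = 1/(1 + e^{-z})$, we have $\Probability(\zeta_1 \cmpSoftFrag \zeta_2) = \sigma\bigl(\beta(\Return(\zeta_2) - \Return(\zeta_1))\bigr)$. Since $\sigma$ is strictly increasing, hence injective, and $\beta > 0$, the distribution $\cmpSoftFrag$ reveals the exact value of $\Return(\zeta_2) - \Return(\zeta_1)$ for every pair of possible fragments. Consequently, any reward $\reward'$ consistent with $\cmpSoftFrag$ must have a return function $\Return'$ on possible fragments satisfying $\Return'(\zeta) - \Return(\zeta) = c$ for a single constant $c$ independent of $\zeta$.

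The key step is to pin down $c = 0$ using fragments of differing lengths. I would take any possible length-$2$ fragment $\zeta = (s,a,s',a',s'')$ together with its two length-$1$ sub-fragments $\zeta_a = (s,a,s')$ and $\zeta_b = (s',a',s'')$, both possible. Additivity of return gives $\Return(\zeta) = \Return(\zeta_a) + \discount\,\Return(\zeta_b)$, and likewise for $\Return'$. Substituting $\Return' = \Return + c$ into all three yields $c = c(1 + \discount)$, so $\discount\,c = 0$; since $\discount > 0$ this forces $c = 0$. Hence $\Return'$ and $\Return$ agree on all possible fragments, i.e.\ $\cmpSoftFrag$ determines $\Return_\zeta$ exactly. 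Applying \cref{thm:labels_of_fragments} then gives that $\cmpSoftFrag$ determines $\reward$ up to a mask of impossible transitions.

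I expect the constant-pinning step to be the only substantive obstacle: the comparison data naively exposes only return differences, and it is the length-dependence of a uniform additive offset—made visible by concatenating fragments—that rules out any nonzero $c$ and recovers the return exactly rather than merely up to an additive constant. The remaining care is routine: one need only confirm that possible length-$1$ and length-$2$ fragments exist, which holds because every possible transition is a length-$1$ fragment and transition supports are nonempty, so such a fragment extends to a possible length-$2$ one.
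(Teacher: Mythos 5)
Your proof is correct, but the substantive (converse) half takes a genuinely different route from the paper's. The paper anchors absolute returns directly: it compares the length-one fragment $\zeta_1 = (s,a,s')$ against the length-zero fragment $\zeta_0 = (s)$, whose return is exactly $0$, and inverts the Boltzmann formula in one step to read off $\reward(s,a,s') = \frac1\beta\log\bigl(\Probability(\zeta_0 \cmpSoftFrag \zeta_1)/(1-\Probability(\zeta_0 \cmpSoftFrag \zeta_1))\bigr)$. You instead observe that the comparison probabilities reveal all pairwise return differences, deduce that any consistent $\Return'$ equals $\Return + c$ on possible fragments, and then kill the constant via the concatenation identity $\Return(\zeta) = \Return(\zeta_a) + \discount\,\Return(\zeta_b)$, which forces $c = c(1+\discount)$ and hence $c = 0$. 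The paper's argument is shorter but leans on length-zero fragments being in the comparison domain (the definition of a fragment of length $n$ does admit $n=0$, and the paper exploits this); your argument needs only length-one and length-two fragments, so it survives a reading of the data source that excludes empty fragments, at the cost of the extra constant-pinning step. Both arguments correctly handle the forward direction by derivation from $\Return_\zeta$ and reduce the conclusion to \cref{thm:labels_of_fragments}.
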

\begin{restatable}{theorem}{thmSoftComparisonsTraj}
\label{thm:boltzmann-comparisons-trajectories}
Given an MDP,
the distribution of comparisons of possible and initial trajectories, $\cmpSoftTraj$,
determines $\reward$ up to
$k$-initial potential shaping and a mask of \emph{unreachable} transitions.
\end{restatable}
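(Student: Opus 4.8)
The plan is to reduce this statement to \cref{thm:labels_of_trajectories} by first extracting from $\cmpSoftTraj$ exactly the information it carries about returns. Writing $\sigma$ for the logistic function, note that $\Probability(\xi_1 \cmpSoftTraj \xi_2) = \sigma\bigl(\beta(\Return(\xi_2) - \Return(\xi_1))\bigr)$, and since $\sigma$ is injective and $\beta$ is fixed and known, the distribution $\cmpSoftTraj$ carries precisely the same information as the family of pairwise return differences $\Return(\xi_2) - \Return(\xi_1)$ over all possible and initial trajectories. Consequently two rewards $\reward, \reward'$ induce the same $\cmpSoftTraj$ if and only if their return functions differ by a fixed constant on this domain, i.e.\ $\Return(\xi) - \Return'(\xi) = c$ for all possible and initial $\xi$.

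For soundness, I would check that both claimed families preserve these differences. A mask of unreachable transitions leaves the return of every possible and initial trajectory unchanged, since such trajectories traverse only reachable transitions by definition; hence all differences are preserved. For $k$-initial potential shaping, the standard telescoping argument (using $\discount^t\Phi(s_t)\to 0$ as $t\to\infty$, valid since $\discount\in(0,1)$ and $\States$ is finite) gives $\Return'(\xi) = \Return(\xi) - \Phi(s_0) = \Return(\xi) - k$ for every initial trajectory $\xi$, a uniform shift that cancels in every difference. This also explains why $k$-initial shaping appears here whereas \cref{thm:labels_of_trajectories} permits only the zero-initial case: the comparison distribution sees differences alone, so the constant return shift induced by $k\neq 0$ becomes invisible. (Contrast \cref{thm:boltzmann-comparisons-fragments}: a uniform return shift across fragments of differing lengths is not realisable by any reward transformation, so no analogous freedom arises there.)

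For completeness, suppose $\reward'$ induces the same $\cmpSoftTraj$ as $\reward$, with constant $c$ as above. I would apply a $(-c)$-initial potential shaping to $\reward'$ to obtain a reward $\reward''$ satisfying $\Return''(\xi) = \Return(\xi)$ on all possible and initial trajectories. Then $\reward$ and $\reward''$ agree on the restricted return function $\Return_\xi$, so \cref{thm:labels_of_trajectories} supplies a zero-initial potential shaping and a mask of unreachable transitions carrying $\reward$ to $\reward''$. Since potential shaping is invertible by negating the potential, $\reward'$ is recovered from $\reward''$ by a $c$-initial potential shaping, and it remains to collapse this composition into the canonical form.

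The main obstacle is this last bookkeeping step. I would invoke two facts: potential shapings compose additively, and the sum of a $c$-initial and a zero-initial potential is again $c$-initial, so the two shapings merge into a single $k$-initial shaping; and a mask of unreachable transitions commutes with potential shaping up to re-choosing the free reward values on unreachable transitions, since on reachable transitions the mask acts trivially while on unreachable transitions the reward is already free to absorb the shaping's contribution. Pushing the mask to the outside then exhibits $\reward'$ as a single $k$-initial potential shaping of $\reward$ followed by a mask of unreachable transitions, as claimed. One degenerate subcase warrants a remark: if a state is both initial and terminal, only $k=0$ yields an admissible potential, but then the trajectory that remains at that state has return $0$ under both $\reward$ and $\reward'$, forcing $c = 0$; no shift is then needed and the result follows directly from \cref{thm:labels_of_trajectories}.
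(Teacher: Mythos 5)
Your proposal is correct and follows essentially the same route as the paper: both arguments show that $\cmpSoftTraj$ determines the return function on possible initial trajectories up to an additive constant, and then translate a constant return shift into $k$-initial potential shaping plus a mask of unreachable transitions. The only difference is that the paper invokes \cref{lemma:potentials_and_episodes} directly with arbitrary $k$ (that lemma states the equivalence $\Return'(\xi) = \Return(\xi) - k$ iff $\reward'$ arises from $\reward$ by $k$-initial shaping and an unreachable mask), which makes your final composition-and-commutation bookkeeping — reducing to the $k=0$ case of \cref{thm:labels_of_trajectories} and then merging a $c$-initial with a zero-initial shaping — unnecessary, though your version of that step is sound.
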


Note that the limited invariances of $\cmpSoftFrag$ arises from the very flexible comparisons permitted, including, for example, comparisons between individual transitions and empty trajectories.
It is also worth noting that these invariances rely very heavily
on the precise structure of the decision noise, and our assumption of infinite data.
First of all, in the infinite-data limit, each pair of trajectories is sampled multiple times, which means that we obtain the exact value of $\Probability(\zeta_1 \cmpSoftFrag \zeta_2)$ for each $\zeta_1$ and $\zeta_2$.
Moreover, since $\Probability(\zeta_1 \cmpSoftFrag \zeta_2)$ depends on the relative difference between the values of $\zeta_1$ and $\zeta_2$, it reveals cardinal information about the reward.
This is a property of Boltzmann noise that will not hold for many other kinds of decision noise.

It is also possible to model trajectory comparisons as noiseless comparisons based on the return of these trajectories.
The infinite-data limit then corresponds to the order induced by
the return function.
Formally, define the \emph{noiseless order of possible trajectory
fragments} as a relation, $\cmpStarFrag$, on possible trajectory fragments:
$$
    \zeta_1 \cmpStarFrag \zeta_2
    \Leftrightarrow
    \Return(\zeta_1) \leq \Return(\zeta_2)
\,.
$$
Analogously, define the \emph{noiseless order of possible and initial trajectories},
$\cmpStarTraj$, on possible, initial trajectories.
Here, the precise invariances will depend on the MDP. %(for example, see the proof in \cref{apx:proofs3_2}).

\begin{restatable}{theorem}{thmHardComparisonOfFragments}
\label{thm:noiseless-comparisons-trajectory-fragments}
We have the following bounds on the invariances of the noiseless order
of possible trajectory fragments, $\cmpStarFrag$.
In all MDPs:
\begin{enumerate}[label=(\arabic*),itemsep=0pt,topsep=0pt]
\item
    $\cmpStarFrag$ is invariant to positive linear scaling and
    a mask of impossible transitions; and
\item
    $\cmpStarFrag$ is not invariant to transformations other than
    zero-preserving monotonic transformations or masks of impossible 
    transitions.
\end{enumerate}
Moreover, there exist MDPs attaining each of these bounds.
\end{restatable}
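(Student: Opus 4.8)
The plan is to establish two inclusions---that the invariances \emph{contain} positive linear scaling and a mask of impossible transitions, and that they are \emph{contained in} the zero-preserving monotonic transformations together with masks of impossible transitions---and then to exhibit one MDP making each inclusion tight. The first inclusion follows directly from $\zeta_1 \cmpStarFrag \zeta_2 \Leftrightarrow \Return(\zeta_1) \leq \Return(\zeta_2)$: positive linear scaling $\reward' = c\reward$ with $c > 0$ multiplies every fragment's return by $c$ and so preserves the order, while a mask of impossible transitions leaves unchanged the reward of every possible transition, hence the return of every possible fragment (which by definition contains only possible transitions).

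For the second inclusion I would exploit the two simplest elements of the domain. A length-one fragment is a single possible transition $x$ with return $\reward(x)$, and the empty fragment has return $0$. Thus, for possible transitions $x, x'$, the relation $\cmpStarFrag$ records whether $\reward(x) \leq \reward(x')$, and comparing $x$ with the empty fragment records the sign of $\reward(x)$, in particular whether $\reward(x) = 0$. Any invariance must reproduce all of these comparisons, so restricted to possible transitions it must preserve the weak order of rewards and fix the zero level set---that is, it is a zero-preserving monotonic transformation. Impossible transitions never appear in a possible fragment, so their reward is entirely free, which is exactly a mask of impossible transitions.

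The attainment examples carry the real content. For the upper bound I would construct an MDP in which every initial action leads directly into a terminal state (whose self-loops carry reward $0$), so the return of every possible fragment collapses either to $0$ or to the reward of its single non-terminal transition. The order on fragments is then fully determined by the order of individual transition rewards and the zero level, so \emph{every} zero-preserving monotonic transformation---not merely the linear ones---preserves $\cmpStarFrag$; with the second inclusion this makes the bound tight. For the lower bound I would instead build an MDP realising enough return \emph{equalities} to force linearity. Writing an admissible transformation as $\reward' = f \circ \reward$ with $f$ strictly increasing and $f(0) = 0$ (valid because equal rewards must map to equal rewards), I would engineer transitions of reward $0, 1, \discount, 1+\discount$ together with consecutive pairs realising fragments with reward sequences $(0,1)$, $(\discount, 0)$, and $(1,1)$. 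The common return $\discount$ of $(0,1)$ and $(\discount,0)$ forces $f(\discount) = \discount f(1)$, and the common return $1+\discount$ of $(1,1)$ and of the single transition of reward $1+\discount$ forces $f(1+\discount) = (1+\discount) f(1)$; with $f(0) = 0$ this yields $f(v) = cv$ with $c = f(1) > 0$ on every realised reward value, so the invariances reduce to positive linear scaling up to a mask of impossible transitions.

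The main obstacle I anticipate is this lower-bound construction: assembling a genuine finite MDP whose possible transitions take exactly the chosen reward values and whose transition graph admits precisely the consecutive pairs needed to generate the equalities, while verifying that these equalities force linearity on \emph{every} reward value that occurs, so that no residual nonlinear freedom in $f$ survives. By comparison the remaining steps---invariance to scaling and masking, and reading the orders off length-one and empty fragments---are routine.
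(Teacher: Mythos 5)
Your two inclusions are argued exactly as in the paper: linear scaling rescales every fragment's return, masks of impossible transitions never touch a possible fragment, and the upper containment is read off from length-one fragments (pairwise order of transition rewards) and the empty fragment (the zero level set), with impossible transitions left free. Where you genuinely diverge is in the attainment examples, and both of your constructions are valid but heavier than the paper's. For the upper bound the paper uses a two-state MDP with one positive transition feeding a zero-reward absorbing state, so every possible fragment has return either $0$ or $\reward(s_1,a,s_2)$; your ``everything leads to a terminal state'' MDP achieves the same collapse. For the lower bound the paper needs only a single return coincidence: one state, two self-loop actions with rewards $1$ and $1+\discount$, so that the length-one fragment of reward $1+\discount$ ties with the length-two fragment $(1,1)$, forcing $f(1+\discount)=(1+\discount)f(1)$; since $1$ and $1+\discount$ are the only realised values, linearity follows immediately. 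Your version forces linearity on four values via two coincidences, which is more than is needed, and the ``main obstacle'' you flag is not really one: a single state with one self-loop action per desired reward value realises every consecutive pair you ask for, so the MDP assembly is trivial. The one point worth being careful about in your write-up is that your argument for the second inclusion only constrains the transformation on \emph{possible} transitions, so the conclusion is that every invariance decomposes as a ZPMT on the possible transitions composed with a mask of the impossible ones --- which is what the theorem asserts --- rather than that it is literally a ZPMT on all of $\SxAxS$.
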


Note that $\cmpSoftFrag$ is less ambiguous than $\cmpStarFrag$, even though $\cmpSoftFrag$ corresponds to noisy comparisons, and $\cmpStarFrag$ corresponds to noiseless comparisons. The reason for this is, again, that the Boltzmann noise reveals cardinal information about the reward function, whereas noiseless comparisons only reveal ordinal information.

We give a lower bound on the invariances of the noiseless order of possible
and initial trajectories, $\cmpStarTraj$.
Since $\cmpStarTraj$ can be derived from $\cmpSoftTraj$, it inherits the latter's
invariances.
Moreover, $\cmpStarTraj$ is always invariant to positive linear
scaling.

\begin{restatable}{theorem}{thmHardComparisonOfTrajectoriesBound}\label{thm:hard-trajectory-comparison}
Given an MDP, the noiseless order of possible and initial trajectories, $\cmpStarTraj$,
is invariant to
$k$-initial potential shaping,
positive linear scaling,
and 
a mask of unreachable transitions.
\end{restatable}

Note that Theorem~\ref{thm:hard-trajectory-comparison} gives a bound on the ambiguity of $\cmpStarTraj$, rather than an exact characterisation. Therefore, it does \emph{not} rule out additional invariances (unlike our other results).

We next give the transformations that preserve preferences over \emph{lotteries} (distributions) of trajectories. Formally, let $\cmpLotteryTraj$ be the relation on distributions over possible
initial trajectories given by
$$
\mathcal{D}_1 \cmpLotteryTraj \mathcal{D}_2
    \Leftrightarrow
    % \forall \xi\ 
    \Expect{\Xi \sim \mathcal{D}_1}{G(\Xi)}
    \leq
    \Expect{\Xi \sim \mathcal{D}_2}{G(\Xi)}.
$$
It is possible to model such preferences
as vNM-rational choices between lotteries over 
trajectories, with the return function $G$ as the utility function.
Such preferences are well known to be invariant to positive affine transformations of 
the utility function, and no other transformations~\citep[Appendix A]{vnm2}. 
We show that positive affine transformations of $G$ correspond to $k$-initial potential shaping and positive linear scaling of $R$.

\begin{restatable}{theorem}{thmLotteries}
\label{thm:comparisons-lotteries}
Given an MDP, $\cmpLotteryTraj$ determines $\reward$ up to
$k$-initial potential shaping,
positive linear scaling,
and 
a mask of unreachable transitions.
\end{restatable}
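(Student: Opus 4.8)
The plan is to prove both directions of the characterisation: that each of the three transformation families preserves $\cmpLotteryTraj$ (soundness), and that any transformation preserving $\cmpLotteryTraj$ is a composition of these (completeness). The unifying observation is that $\cmpLotteryTraj$ depends on $\reward$ only through the expected value of the return function over distributions of \emph{possible initial} trajectories; the problem therefore reduces to asking when two reward functions $\reward$ and $\reward'$ induce return functions $\Return$ and $\Return'$ that order lotteries identically. By the classical uniqueness theorem for expected-utility (von Neumann--Morgenstern) representations~\citep[appendix A]{vnm2}, this happens exactly when $\Return$ and $\Return'$ agree up to a positive affine transformation on the set of possible initial trajectories. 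So the technical core is to translate the statement ``$\Return' = a\,\Return + b$ with $a > 0$ on possible initial trajectories'' into the stated reward transformations.

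For soundness, I would compute the effect of each family on the return of a possible initial trajectory $\xi = (s_0, a_0, s_1, \dots)$. Positive linear scaling by $c$ sends $\Return(\xi)$ to $c\,\Return(\xi)$, scaling every lottery's expected return by $c > 0$ and so preserving the order. For $k$-initial potential shaping the added terms telescope, giving $\Return'(\xi) = \Return(\xi) - \Phi(s_0) + \lim_{T\to\infty}\discount^T\Phi(s_T)$; since $\discount \in (0,1)$ and $\Phi$ is bounded on the finite state space, the limit vanishes, leaving $\Return'(\xi) = \Return(\xi) - k$ because $s_0$ is initial. A constant shift of every trajectory's return shifts every lottery's expected return by $-k$ and so preserves the order. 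Finally, every transition traversed by a possible initial trajectory is by definition reachable, so a mask of unreachable transitions leaves $\Return(\xi)$ unchanged. Each family therefore preserves $\cmpLotteryTraj$.

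For completeness, suppose $\reward'$ preserves $\cmpLotteryTraj$. The uniqueness result gives $a > 0$ and $b$ with $\Return'(\xi) = a\,\Return(\xi) + b$ for every possible initial $\xi$. I would realise $\reward'$ from $\reward$ in three steps. First, positive linear scaling by $a$ produces a reward whose return on possible initial trajectories is $a\,\Return(\xi)$. Second, $k$-initial potential shaping with $k = -b$ subtracts $-b$, yielding a reward $\hat\reward$ with $\hat\Return(\xi) = a\,\Return(\xi) + b = \Return'(\xi)$ on every possible initial $\xi$. Now $\hat\reward$ and $\reward'$ induce \emph{identical} returns on possible initial trajectories, so \cref{thm:labels_of_trajectories} shows $\reward'$ arises from $\hat\reward$ by zero-initial potential shaping together with a mask of unreachable transitions. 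Composing, and using that a $k$-initial and a zero-initial potential shaping compose to a $k$-initial one (the potentials add, and $k + 0 = k$), exhibits $\reward'$ as a composition of positive linear scaling, $k$-initial potential shaping, and a mask of unreachable transitions, as required.

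The main obstacle is the expected-utility uniqueness step together with its degenerate case. When the returns of possible initial trajectories are not all equal, two trajectories with distinct returns suffice to pin down $a$ and $b$ via two- and three-point mixtures, and the sign condition $a > 0$ follows because $\cmpLotteryTraj$ is preserved rather than reversed. When all possible initial trajectories share a single return, $\cmpLotteryTraj$ is total indifference; preservation then forces $\Return'$ to be constant as well, and an affine representation still holds (non-uniquely, e.g.\ with $a = 1$), so the completeness argument proceeds unchanged. Some care is also warranted because the outcome set of possible initial trajectories may be infinite, but the argument only uses finitely supported lotteries, so the finite-outcome uniqueness result applies verbatim.
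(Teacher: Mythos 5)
Your proposal is correct and follows essentially the same route as the paper: invoke the von Neumann--Morgenstern uniqueness theorem to reduce the question to positive affine transformations of the return on possible initial trajectories, then translate these into the stated reward transformations via the correspondence established in \cref{lemma:potentials_and_episodes} and \cref{lemma:linear_scaling_of_G} (your explicit scale-then-shift decomposition is exactly what those lemmas package up). Your added care about the degenerate total-indifference case and the infinite outcome set is sound and slightly more explicit than the paper, which simply cites \citet{fishburn1970} for the infinite-outcome extension.
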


\subsection{Computing The Reward Learning Lattice}

As discussed in \cref{sec:lattice}, the reward invariances of all of these objects allow us to place them in a lattice structure. We show part of this lattice in \cref{fig:main}.
This allows us to represent many important relationships between data sources and applications in a graphical way.
In particular, recall that if $f \refines g$ then we get more information about the underlying reward function by observing data from $f$ than we get by observing data from $g$. Moreover, given some downstream application $h$, we have that $h$ tolerates the ambiguity of the data source $f$ exactly when $f \refines h$. Recall also that the invariances of an object are inherited by all objects which can be computed from it (\cref{prop:ambiguity_inherited}). This allows one to check whether a data source provides enough information for a given application based on whether the data source is above the application in the lattice.

\begin{figure}[!ht]
\centering
\begin{tikzpicture}[xscale=1.0]
    % regions
    \begin{scope}[every node/.style={ellipse,fill=black!5,text=black!70}]
        \node[rotate=-7,inner sep=18,minimum width=100]
            at (2.2,1.4) {};
        \node[rotate=-35,inner sep=18,minimum width=100]
            at (5.1,1.8) {};
    \end{scope}
    \begin{scope}[every node/.style={text=black!70}]
        \node at (1.9,1.5) {\small pref.\ cmp.};
        \node at (5.4,1.6) {\small IRL};
    \end{scope}
    
    % axis
    %\node[label={[below right,yshift=-0.45cm,text width=2cm]%
    %    \small increasing ambiguity (tolerance)}] (top) at ( -1.3, 4.0) {};
    %\node (bot) at (-1.3, 0.7) {};
    %\draw[->] (top) -- (bot);
    % nodes
    \node (R) at                ( 0.00, 3.9-0.0) {$\reward$}; 
    \node (ReturnFrag) at       ( 2.50, 3.9-0.1) {$\Return_\zeta$};
    \node (ReturnTraj) at       ( 2.50, 3.0-0.1) {$\Return_\xi$};
    \node (CmpSoftFrag) at      ( 1.00, 2.0-0.0) {$\cmpSoftFrag$};
    \node (CmpSoftTraj) at      ( 2.50, 2.0-0.1) {$\cmpSoftTraj$};
    \node (CmpLotteryTraj) at   ( 3.75, 1.2)     {$\cmpLotteryTraj$};
    \node (CmpStarFrag) at      ( 1.00, 1.1-0.0) {$\cmpStarFrag$};
    \node (CmpStarTraj) at      ( 2.50, 1.1-0.1) {$\cmpStarTraj$};
    \node (Q) at                ( 4.50, 3.9-0.225)
            {$\Qfor{\policy},\QStar,\QSoft$};
    \node (SoftPolicy) at       ( 4.50, 3.0-0.225)
            {$\BoltzmannRationalPolicy,\MCEPolicy$};
    \node (SoftTrajDist) at     ( 4.50, 2.0-0.225)
            {$\TrajDistributionBoltzmann,\TrajDistributionMCE$};
    \node (OptimalPolicy) at    ( 6.00, 2.1-0.325) {$\OptimalPolicy$};
    \node (OptimalPolicySet) at ( 7.50, 2.1-0.325) {$\{\OptimalPolicy\}$};
    \node (OptimalTrajDist) at  ( 6.00, 1.1-0.325) {$\TrajDistributionOptimal$};
    % edges
    \draw[->] (R) -- (ReturnFrag);
    \draw[->] (Q) -- (SoftPolicy);
    \draw[->] (SoftPolicy) -- (OptimalPolicy);
    \draw[<->] (OptimalPolicy) -- (OptimalPolicySet);
    \draw[->] (SoftPolicy) -- (SoftTrajDist);
    \draw[->] (OptimalPolicy) -- (OptimalTrajDist);
    \draw[->] (SoftTrajDist) -- (OptimalTrajDist);
    \draw[->] (ReturnFrag) -- (Q);
    \draw[->] (ReturnFrag) -- (ReturnTraj);
    \draw[<->](ReturnFrag) -- (CmpSoftFrag);
    \draw[->] (ReturnTraj) -- (CmpSoftTraj);
    \draw[->] (CmpSoftFrag) -- (CmpSoftTraj);
    \draw[->] (CmpSoftFrag) -- (CmpStarFrag);
    \draw[->] (CmpSoftTraj) -- (CmpLotteryTraj);
    \draw[->] (CmpLotteryTraj) -- (CmpStarTraj);
    \draw[->] (CmpLotteryTraj) -- (OptimalTrajDist);
    \draw[->] (CmpSoftTraj) -- (SoftTrajDist);
    \draw[->] (CmpStarFrag) -- (CmpStarTraj);
\end{tikzpicture}
\caption{\label{fig:main}%
The invariances of objects that can be computed from a reward function in a fixed environment induce a partial order over these objects.
Here, a directed path from $X$ to $Y$ means that $X \refines Y$, which in turn implies that $X$ is no more ambiguous than $Y$, that $Y$ is tolerant to $X$'s ambiguity, and that $Y$ can be computed from $X$.
The objects are the reward function itself ($\reward$);
$Q$-functions ($\Qfor{\policy},\QStar,\QSoft$); Boltzmann-rational policies ($\BoltzmannRationalPolicy$), MCE policies ($\MCEPolicy$), maximally supportive optimal policies ($\OptimalPolicy$)  and their trajectory distributions ($\TrajDistributionBoltzmann,\TrajDistributionMCE,\TrajDistributionOptimal$);
the return function restricted to partial and full trajectories ($\Return_\zeta$, $\Return_\xi$);
Boltzmann-distributed ($\beta$) and noiseless ($\star$) comparisons between these trajectories ($\cmpSoftFrag$, $\cmpSoftTraj$, $\cmpStarFrag$, $\cmpStarTraj$) and lotteries over trajectories ($\cmpLotteryTraj$);
and the set of optimal policies ($\{\OptimalPolicy\}$).
}
\end{figure}
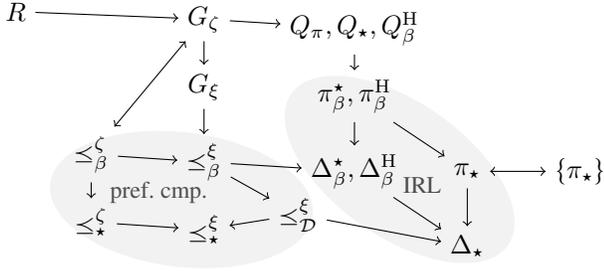

It is worth noting that the trajectories of optimal policies ($\Delta_\star$ in \cref{fig:main}) are more ambiguous than almost all other data sources, which also means that they tolerate the ambiguity of almost all other data sources.
In other words, the ambiguity of most data sources is unproblematic if the downstream task is to compute and then deploy an optimal policy.
However, this notably excludes noiseless comparison data (because this data only includes ordinal information, which is insufficient in stochastic environments).
Moreover, \cref{fig:main} assumes a fixed environment.
Applications in different environments may tolerate different ambiguity (\cref{sec:transfer}).

\section{Additional Results}\label{sec:more}

In this section, we provide some additional formal results, first concerning the case where we combine data from multiple different sources, and then concerning the case where we carry out reward learning in one MDP but use the learnt reward function in a different MDP.
We provide proofs of \cref{thm:complementary-ambiguity,thm:s-redistribution_and_transfer} in \cref{apx:proofs4}.

\subsection{Combining Data Sources}\label{sec:combining}

There has been recent interest in the prospect of combining information from multiple different sources for the purpose of reward learning \cite{krasheninnikov2021combining,jeon2020}.
Our results offer additional insights for this setting.
In particular, ambiguity refinement is a \emph{partial} order, with some data sources being \emph{incomparable}.
We note that such incomparable ambiguity is
\emph{complementary} ambiguity, in that by combining the associated data
sources, we can reduce the overall ambiguity about the latent reward.

\begin{restatable}{theorem}{thmComplementaryAmbiguity}\label{thm:complementary-ambiguity}
Given data sources $X$ and $Y$,
let $(X, Y)$ denote the combined data source formed from $X$ and $Y$.
If $X$ and $Y$ are incomparable,
then $(X, Y) \refinesStrict X$ and $(X, Y) \refinesStrict Y$.
\end{restatable}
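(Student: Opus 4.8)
The plan is to reduce the claim to a standard fact about the refinement lattice of partitions, once the combined data source $(X,Y)$ is equipped with its natural invariances. Writing $f_X$ and $f_Y$ for the derivation functions of $X$ and $Y$ in the sense of \cref{def:transformations_and_invariances}, the combined source reveals the pair $(f_X(\reward), f_Y(\reward))$, so two reward functions are conflated by $(X,Y)$ exactly when they are conflated by both $X$ and $Y$. Its equivalence kernel is therefore the intersection of the two equivalence relations, and the corresponding partition is the common refinement $\Pi_{(X,Y)} = \Pi_X \meet \Pi_Y$, whose blocks are the non-empty intersections $B_X \cap B_Y$ of a block $B_X$ of $\Pi_X$ with a block $B_Y$ of $\Pi_Y$. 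Establishing this identification is the only step that touches the (informal) notion of a combined source, and I expect it to be the main obstacle; everything afterwards is purely order-theoretic.

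Given the identification, $(X,Y) \refines X$ and $(X,Y) \refines Y$ follow immediately, since the common refinement refines each of its factors: every block $B_X \cap B_Y$ of $\Pi_X \meet \Pi_Y$ sits inside the block $B_X$ of $\Pi_X$ and the block $B_Y$ of $\Pi_Y$ by construction. This recovers the intuition that adding data can only reduce ambiguity, and holds whether or not $X$ and $Y$ are comparable.

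For strictness I would argue by contradiction. Suppose $X \refines (X,Y)$. Combined with $(X,Y) \refines X$ and the antisymmetry of the refinement partial order, this gives $\Pi_X = \Pi_X \meet \Pi_Y$. But a partition equals its meet with another precisely when it already refines that other partition (in lattice terms, $a \meet b = a$ iff $a \le b$); hence $\Pi_X$ refines $\Pi_Y$, i.e.\ $X \refines Y$. This contradicts the assumed incomparability of $X$ and $Y$. Therefore $X \refines (X,Y)$ fails, and together with $(X,Y) \refines X$ we conclude $(X,Y) \refinesStrict X$. Swapping the roles of $X$ and $Y$ yields $(X,Y) \refinesStrict Y$ by the identical argument, completing the proof.
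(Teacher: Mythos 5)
Your proof is correct and matches the paper's reasoning: the paper's main proof argues directly that a transformation preserving $(X,Y)$ preserves $X$, and that incomparability supplies a transformation preserving $X$ but not $Y$ (hence not $(X,Y)$), which is the same content as your refinement and strictness steps. Your lattice-theoretic packaging via $\Pi_{(X,Y)} = \Pi_X \meet \Pi_Y$ is precisely the alternative justification the paper itself notes immediately after its proof, so there is nothing substantively different here.
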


This suggests that we can reduce the ambiguity of reward learning methods by using a mixture of data sources with complementary ambiguity.
Unfortunately, most data sources appear to have similar kinds of ambiguity, assuming a fixed MDP.
However, our results suggest that ambiguity could be reduced by incorporating data from \emph{multiple} MDPs,
along the lines of \citet{amin2017} and \citet{cao2021}.

\subsection{Transfer Learning}\label{sec:transfer}

It is interesting to consider the setting where the reward is learnt in one MDP, but used in a different MDP.
This captures, for example, the common sim-to-real setting, where learning occurs in a simulation whose dynamics differ slightly from those of the real environment. It also models the case where training data for the reward learning algorithm is not available for the deployment environment, but is available for a different environment.

We can begin by noting that no guarantees can be obtained if the two MDPs have very different reachable state spaces; in that case, the learnt reward function will mostly depend on the inductive bias of the learning algorithm. 
However, we will demonstrate that similar problems can occur even with much smaller differences between the learning environment and the deployment environment:% also can cause similar problems:

\begin{restatable}{theorem}{thmSRedistributionAndTransfer}
\label{thm:s-redistribution_and_transfer}
Let $\mathcal{L} : \SxA \to \Reals$ be any function, $R_1$ any reward function, and $\TransitionDistribution_1, \TransitionDistribution_2$ any transition distributions.
Then there exists a reward function $\reward_2$, produced from $\reward_1$ by $S'$-redistribution under $\TransitionDistribution_1$,
such that 
    $\Expect{S' \sim \TransitionDistribution_2(s,a)}{R_2(s,a,S')} = \mathcal{L}(s,a)$
for all $s,a$ such that $\TransitionDistribution_1(s,a) \neq \TransitionDistribution_2(s,a)$.
\end{restatable}

To unpack this, let $R_1$ be the true reward function, $\TransitionDistribution_1$ be the transition dynamics of the training environment, and $\TransitionDistribution_2$ be the transition dynamics of the deployment environment. Theorem~\ref{lemma:Q_function_proximal_redistribution} then says, roughly, that if the reward learning algorithm is invariant to $S'$-redistribution, and $\TransitionDistribution_1$ and $\TransitionDistribution_2$ differ for enough states, then the learnt reward function is essentially unconstrained in the deployment environment, meaning that no guarantees could be obtained.
Moreover, note that \cref{lemma:Q_function_proximal_redistribution} and \cref{prop:ambiguity_inherited} imply that this result extends to \emph{any} object that can be computed from a $Q$-function, which is a very broad class.
\Cref{thm:s-redistribution_and_transfer} then suggests that any such data source is too ambiguous to guarantee transfer to a different environment.
Note that this strong result relies on the formulation of rewards as depending
on the successor state (see \cref{apx:rewards}).

\section{Discussion}

In this section, we will discuss the implications of our results, and some limitations and directions for future work.

\subsection{Limitations and Future Work}
\label{sec:future-work}

We have catalogued the invariances of many important objects. However, there are some interesting objects that are not covered by our analysis. To start with, \cref{thm:hard-trajectory-comparison} only gives a bound on the invariance partition of noiseless trajectory comparisons, rather than an exact characterisation. It would be interesting to derive this invariance partition exactly.
It would also be interesting to characterise what happens if various restrictions are imposed on the trajectories and trajectory fragments considered throughout \cref{sec:results-eval}, such as, for example, a minimum and maximum length of the trajectories.
Moreover, future work could characterise the ambiguity of the \emph{policy ordering} induced by the policy evaluation function.
\Cref{thm:optimal-policies-maximal} characterises ambiguity tolerance of computing an optimal policy. However, in practice, one often uses a reinforcement learning algorithm that is not guaranteed to find a globally optimal policy, so it may be prudent to preserve the entire policy order, rather than just the set of maximising policies.

Moreover, our results primarily concern the asymptotic behaviour of reward learning algorithms, in the limit of infinite data.
In practice, data sets are finite and, when data collection is expensive, may
be quite small. 
An important direction for future work is to characterise how much information is contained in data sets of varying sizes and data sources.
This would enable practitioners to determine the most sample-efficient data source.

Furthermore, our results rely on the assumption that data in fact is generated according to the process that is assumed by the reward learning algorithm.
However, in reality, it is unlikely that these assumptions will hold perfectly \citep{orsini2021}. 
For example, human demonstrations are rarely perfectly optimal or Boltzmann-rational.
This means that the learning algorithms are often \emph{misspecified} in practice. Studying this setting formally is another important direction for future work.

\subsection{Conclusion and Implications}

We have precisely characterised the ambiguity of many reward learning data sources, and the ambiguity tolerance of many applications using reward functions. 
These results can be used to easily answer whether or not a given reward learning data source is appropriate for a given application, and to compare different reward learning data sources against each other.
This is valuable to help practitioners choose what reward learning method to use for a given problem.

Our results are also helpful for developing new reward learning algorithms.
In developing a reward learning algorithm, it is relevant to ask how effective this algorithm is relative to an optimal algorithm for the given data source.
Our results characterise the information that is available in each data source, which makes it possible to evaluate a reward learning algorithm against this theoretical upper bound.

Our framework enables direct comparisons between different data sources.
We find that some data sources are strictly less informative than others.
For example, noiseless preference comparisons are strictly less informative than return labels.
By contrast, other data sources are incomparable and have complementary ambiguity.
For example, the ambiguity of the $Q$-values is incomparable to the ambiguity of episode return values (the former are invariant to $S'$-redistribution but not to potential shaping, whereas the latter are invariant to some potential shaping but not to $S'$-redistribution).
As discussed in \cref{sec:combining}, these results can be used to identify promising opportunities for combining information from multiple sources.

We find that in a fixed environment, many reward learning data sources are sufficient for the purpose of computing an optimal policy, including labelled trajectories, Boltzmann comparisons of trajectories and trajectory fragments, and trajectories sampled from any of the three types of policies that are common in IRL (optimal, Boltzmann-rational, or causal entropy maximising).
However, this notably excludes noiseless comparisons between trajectory fragments
in some MDPs, since zero-preserving monotonic transformations are not, in general, optimality preserving.
These relationships are summarised in Figure~\ref{fig:main}.
Theorem~\ref{thm:s-redistribution_and_transfer} also shows that a very wide range of reward learning data sources will be too ambiguous to facilitate transfer to new environments.

\section*{Acknowledgements}

JS, MFR, and AG contributed to this research while affiliated with the Center for Human-Compatible Artificial Intelligence, UC Berkeley.

The authors thank Daniel Filan, Erik Jenner, Cassidy Laidlaw, and Smitha Milli for feedback on earlier versions of the manuscript and Daniel Murfet for insightful discussions about the project.
We also thank our anonymous reviewers for suggestions which have helped to improve the clarity of our results and notation.

\bibliography{references}
\bibliographystyle{icml2023}

\newpage
\appendix
\onecolumn
\section{Directory of Results}
\label{apx:directory}

\newcommand{\TI}{\ensuremath{{\equiv}}}
\newcommand{\Ti}{\ensuremath{\TI^{*}}}
\newcommand{\Tx}{\ensuremath{{\times}}}
\newcommand{\TIx}{\Tx/\TI}
\newcommand{\transformationLabel}[1]{%
    \rotatebox{50}{\parbox[c][1.24em]{5em}{%
        \linespread{0.8}\small\selectfont\centering #1}}}
\newcommand{\objectDescription}[1]{\small\raggedright\linespread{0.9} #1}
\newcommand{\prooflink}[1]{%
        \textsuperscript{\hyperlink{#1}{\qedsymbol}}}
\renewcommand{\arraystretch}{1.13} % add a bit of vertical space between rows

\begin{table}[ht!]
    \caption{%
        An overview of all invariance results.
        Symbols:
        (\TI) The object is invariant to the transformations;
        (\Ti) The object is invariant to the transformations as a special
        case of the invariances listed in the theorem;
        (\Tx) The object is not generally invariant to the transformations
        (more precisely, it is invariant only to those that can be represented
        as a combination of the listed transformations);
        (\TIx) The extent of the object's invariance to the transformations
        depends on the MDP
        (as in \Cref{thm:labels_of_trajectories}).
        Blank cells indicate unresolved invariances (as in
        \Cref{thm:hard-trajectory-comparison}, see
        also \Cref{remark:noiseless-comparisons-trajectories-bound}).
    }
    \label{table:directory}

    \begin{center}
    \begin{tabular}{cp{9.5em}cc@{\hskip -1.5em}c@{\hskip -1.5em}c@{\hskip -1.5em}c@{\hskip -1.5em}c@{\hskip -1.5em}c@{\hskip -1.5em}c@{\hskip -1.5em}c@{\hskip -1.5em}c@{\hskip -1.5em}c@{\hskip -1.5em}c@{}}
    \toprule
        &&& \multicolumn{11}{c}{Class of transformations}
    \\ \cmidrule(l){4-14}
        \multicolumn{2}{c}{Reward-derived object}
        & \parbox[b]{2.2em}{\centering Thm. \& proof link}
        & \transformationLabel{Identity}
        & \transformationLabel{0-initial $\Phi$ shaping}
        & \transformationLabel{$k$-initial $\Phi$ shaping}
        & \transformationLabel{$\Phi$ shaping}
        & \transformationLabel{S'-redistr.}
        & \transformationLabel{Pos.\ lin.\ scaling}
        & \transformationLabel{0-preserving monotonic}
        & \transformationLabel{Optimality-preserving}
        & \transformationLabel{Gen. opt.-preserving}
        & \transformationLabel{Impossible mask}
        & \transformationLabel{Unreachable mask}
    \\ \midrule
        $\reward$
            & \objectDescription{Reward function}
            & --
            & \TI & \Tx & \Tx & \Tx & \Tx & \Tx & \Tx & \Tx & \Tx & \Tx & \Tx
    \\
        $\Qfor{\policy}$
            & \objectDescription{$Q$-function for policy $\policy$}
            & \ref{lemma:Q_function_proximal_redistribution}%
                \prooflink{proof:q}
            & \Ti & \Tx & \Tx & \Tx & \TI & \Tx & \Tx & \Tx & \Tx & \Ti & \Tx
    \\
        $\QStar$
            & \objectDescription{Optimal $Q$-function}
            & \ref{lemma:Q_function_proximal_redistribution}%
                \prooflink{proof:q}
            & \Ti & \Tx & \Tx & \Tx & \TI & \Tx & \Tx & \Tx & \Tx & \Ti & \Tx
    \\
        $\QSoft$
            & \objectDescription{Soft $Q$-function}
            & \ref{lemma:Q_function_proximal_redistribution}%
                \prooflink{proof:q}
            & \Ti & \Tx & \Tx & \Tx & \TI & \Tx & \Tx & \Tx & \Tx & \Ti & \Tx
    \\
        $\BoltzmannRationalPolicy$
            & \objectDescription{Boltzmann-rational policy}
            & \ref{thm:boltzmann-rational}%
                \prooflink{proof:boltzmann-rational-policy}
            & \Ti & \Ti & \Ti & \TI & \TI & \Tx & \Tx & \Tx & \Tx & \Ti & \Tx
    \\
        $\MCEPolicy$
            & \objectDescription{MCE policy}
            % & \ref{thm:mce-policy}%
            & \ref{thm:boltzmann-rational}%
                \prooflink{proof:mce-policy}
            & \Ti & \Ti & \Ti & \TI & \TI & \Tx & \Tx & \Tx & \Tx & \Ti & \Tx
    \\
        $\OptimalPolicy$
            & \objectDescription{Maximally supportive optimal policy}
            & \ref{thm:optimal-policies-maximal}%
                \prooflink{proof:supportive-optimal-policies}
            & \Ti & \Ti & \Ti & \Ti & \Ti & \Ti & \Tx & \TI & \Tx & \Ti & \Tx
    \\
        $\TrajDistributionBoltzmann$
            & \objectDescription{Trajectory distribution induced by
                $\BoltzmannRationalPolicy$}
            & \ref{thm:boltzmann-rational-trajectories}%
                \prooflink{proof:boltzmann-rational-trajectories}
            & \Ti & \Ti & \Ti & \TI & \TI & \Tx & \Tx & \Tx & \Tx & \Ti & \TI
    \\
        $\TrajDistributionMCE$
            & \objectDescription{Trajectory distribution induced by
                $\MCEPolicy$}
            % & \ref{thm:mce-trajectories}%
            & \ref{thm:boltzmann-rational-trajectories}%
                \prooflink{proof:mce-trajectories}
            & \Ti & \Ti & \Ti & \TI & \TI & \Tx & \Tx & \Tx & \Tx & \Ti & \TI
    \\
        $\TrajDistributionOptimal$
            & \objectDescription{Trajectory distribution induced by
            $\OptimalPolicy$}
            & \ref{thm:optimal-trajectories}%
                \prooflink{proof:supportive-optimal-trajectories}
            & \Ti & \Ti & \Ti & \Ti & \Ti & \Ti & \Tx & \Ti & \TI & \Ti & \Ti
    \\
        $\Return_\zeta$
            & \objectDescription{Return on possible trajectory
                fragments}
            & \ref{thm:labels_of_fragments}%
              \prooflink{proof:return-fragments}
            & \Ti & \Tx & \Tx & \Tx & \Tx & \Tx & \Tx & \Tx & \Tx & \TI & \Tx
    \\
        $\Return_\xi$
            & \objectDescription{Return on possible, initial
                trajectories}
            & \ref{thm:labels_of_trajectories}%
                \prooflink{proof:return-trajectories}
            & \Ti & \TI & \Tx & \Tx & \Tx & \Tx & \Tx & \Tx & \Tx & \Ti & \TI
    \\
        $\cmpSoftFrag$
            & \objectDescription{Boltzmann comparisons of possible
                fragments}
            & \ref{thm:boltzmann-comparisons-fragments}%
                \prooflink{proof:soft-compare-fragments}
            & \Ti & \Tx & \Tx & \Tx & \Tx & \Tx & \Tx & \Tx & \Tx & \TI & \Tx
    \\
        $\cmpSoftTraj$
            & \objectDescription{Boltzmann comparisons of possible
                initial trajectories}
            & \ref{thm:boltzmann-comparisons-trajectories}%
                \prooflink{proof:soft-compare-trajectories}
            & \Ti & \Ti & \TI & \Tx & \Tx & \Tx & \Tx & \Tx & \Tx & \Ti & \TI
    \\
        $\cmpStarFrag$
            & \objectDescription{Noiseless order of possible fragments}
            & \ref{thm:noiseless-comparisons-trajectory-fragments}%
                \prooflink{proof:hard-compare-fragments}
            & \Ti & \Tx & \Tx & \Tx & \Tx & \TI &\TIx & \Tx & \Tx & \TI & \Tx
    \\
        $\cmpStarTraj$
            & \objectDescription{Noiseless order of possible initial
            trajectories}
            & \ref{thm:hard-trajectory-comparison}%
                \prooflink{proof:hard-compare-trajectories-bound}
            & \Ti & \Ti & \TI &     &     & \TI &     &     &     & \Ti & \TI
    \\
        $\cmpLotteryTraj$
            & \objectDescription{Order of distributions over
                possible initial trajectories}
            & \ref{thm:comparisons-lotteries}%
                \prooflink{proof:compare-lotteries}
            & \Ti & \Ti & \TI & \Tx & \Tx & \TI & \Tx & \Tx & \Tx & \Ti & \TI
    \\
        $\{\OptimalPolicy\}$
            & \objectDescription{Set of optimal policies}
            & \ref{thm:optimal-policies-maximal}%
                \prooflink{proof:optimal-policies-maximal}
            & \Ti & \Ti & \Ti & \Ti & \Ti & \Ti & \Tx & \TI & \Tx & \Ti & \Tx
    \\ \midrule
        $\Afor{\policy}$
            & \objectDescription{Advantage function for arbitrary policy
                $\pi$}
            & \ref{lemma:advantage}
            & \Ti & \Ti & \Ti & \TI & \TI & \Tx & \Tx & \Tx & \Tx & \Ti & \Tx
    \\
        $\AStar$
            & \objectDescription{Optimal advantage function}
            & \ref{lemma:advantage}
            & \Ti & \Ti & \Ti & \TI & \TI & \Tx & \Tx & \Tx & \Tx & \Ti & \Tx
    \\
        $\BoltzmannPolicyWrt{\basePolicy}$
            & \objectDescription{Boltzmann policy for arbitrary base policy
                $\basePolicy$}
            & \ref{lemma:boltzmann-general}
            & \Ti & \Ti & \Ti & \TI & \TI & \Tx & \Tx & \Tx & \Tx & \Ti & \Tx
    \\
        $\TrajDistributionBoltzmannWrt{\basePolicy}$
            & \objectDescription{Trajectory distribution induced by
                $\BoltzmannPolicyWrt{\basePolicy}$}
            & \ref{lemma:boltzmann-trajectories-general}
            & \Ti & \Ti & \Ti & \TI & \TI & \Tx & \Tx & \Tx & \Tx & \Ti & \TI
    \\
        \bottomrule
    \end{tabular}
    \end{center}
\end{table}

\section{Properties of Fundamental Reward Transformations}
\label{apx:props}

We begin with some supporting results concerning the basic reward
transformations used in \cref{sec:results} to characterise the invariances
of various objects derived from the reward function.

The following result captures how potential shaping affects
various reward-related functions.

\begin{restatable}{lemma}{thmPropertiesOfPotentials}
\label{lemma:change_from_potentials}
\label{cor:potentials_to_policies}
Consider $M$ and $M'$, two MDPs differing only in their reward functions,
respectively $\reward$ and $\reward'$.
Denote the return function, $Q$-function, value function,
policy evaluation function, and advantage function of $M'$ by
$\Return'$, $\Qfor{\policy}'$, $\Vfor{\policy}'$, $\Evaluation'$, and
$\Afor{\policy}'$.
If $\reward'$ is produced by potential shaping of $\reward$ with a potential function $\Phi$, then:
\begin{enumerate}[label=(\arabic*),topsep=0pt,itemsep=0pt]
\item
    for a trajectory fragment $\zeta = (s_0, a_0, s_1, \ldots, s_n)$,
    $\Return'(\zeta) = \Return(\zeta) + \discount^n\Phi(s_n) - \Phi(s_0)$;
\item
    for a trajectory $\xi = (s_0, a_0, \ldots)$,
    $\Return'(\xi) = \Return(\xi) - \Phi(s_0)$;
\item
    for a state $s \in \States$ and action $a \in \Actions$,
    $\Qfor{\policy}'(s, a) = \Qfor{\policy}(s, a) - \Phi(s)$;
\item
    for a state $s \in \States$, 
    $\Vfor{\policy}'(s) = \Vfor{\policy}(s) - \Phi(s)$;
\item
    for a policy $\policy$,
    $\Evaluation'(\policy)
      = \Evaluation(\policy)
        - \Expect{S_0 \sim \InitStateDistribution}{\Phi(S_0)}
    $; and
\item
    for a state $s \in \States$, and action $a \in \Actions$,
    $\Afor{\policy}'(s, a) = \Afor{\policy}(s, a)$.
\end{enumerate}
\end{restatable}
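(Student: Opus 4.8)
The plan is to establish item~(1) by a direct telescoping computation, and then to obtain every other item by substitution. First I would expand the shaped return of a fragment $\zeta = (s_0, a_0, \ldots, s_n)$ by inserting the definition $\reward'(s,a,s') = \reward(s,a,s') + \discount\Phi(s') - \Phi(s)$ into $\Return'(\zeta) = \sum_{t=0}^{n-1}\discount^t \reward'(s_t,a_t,s_{t+1})$. Splitting the sum separates out a copy of $\Return(\zeta)$ from the residual $\sum_{t=0}^{n-1}\bigl(\discount^{t+1}\Phi(s_{t+1}) - \discount^t\Phi(s_t)\bigr)$, which telescopes to $\discount^n\Phi(s_n) - \Phi(s_0)$, yielding item~(1).

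For item~(2) I would take $n \to \infty$ in item~(1). The returns themselves converge absolutely because $\reward$ is bounded on the finite transition set $\SxAxS$ and $\discount \in (0,1)$, while the boundary term $\discount^n\Phi(s_n)$ vanishes in the limit since $\Phi$ is bounded on the finite state space $\States$. This leaves $\Return'(\xi) = \Return(\xi) - \Phi(s_0)$.

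The remaining items then follow mechanically. Item~(4) comes from taking the expectation of item~(2) over trajectories from $s$, using that $\Phi(s)$ is a deterministic constant given the start state, so $\Vfor{\policy}'(s) = \Vfor{\policy}(s) - \Phi(s)$. For item~(3) I would substitute $\reward'$ and item~(4) into the Bellman equation~\eqref{eq:bellman-pi}: the term $\Expect{S'\sim\TransitionDistribution(s,a)}{\discount\Phi(S')}$ cancels against the $-\discount\Phi(S')$ coming from $\Vfor{\policy}'(S')$, leaving $\Qfor{\policy}'(s,a) = \Qfor{\policy}(s,a) - \Phi(s)$. Item~(5) follows by averaging item~(4) over $\InitStateDistribution$. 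Item~(6) follows from items~(3) and~(4), since $\Afor{\policy}'(s,a) = \Qfor{\policy}'(s,a) - \Vfor{\policy}'(s)$ and the two $-\Phi(s)$ terms cancel.

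The argument is essentially routine; the only step needing genuine care is the passage to the limit in item~(2), where I must invoke boundedness of both the reward and the potential (guaranteed by finiteness of $\States$ and $\Actions$) together with $\discount < 1$ to justify convergence of the infinite return and the vanishing of the boundary term. Everything else is substitution, with the cancellation of the potential terms in item~(6) being the conceptual point: shaping shifts $Q$- and value functions by a state-dependent amount but leaves the advantage function, and hence the induced ordering over actions at each state, completely unchanged.
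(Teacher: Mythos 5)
Your proposal is correct and follows essentially the same route as the paper: item~(1) by telescoping, item~(2) by taking the limit with $\discount^n\Phi(s_n)\to 0$, and the remaining items by substitution (the paper cites \citet{ng1999} for (3) and (4) but notes they follow immediately from (2), which is exactly the derivation you spell out). No gaps.
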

\begin{proof}
(1) is given by a straightforward telescopic argument.
For (2), take the limit as the length of a prefix goes to infinity,
whereupon $\discount^n \Phi(s_n)$ goes to zero
($\discount < 1$ by definition, and $\Phi(s_n)$ is bounded since its domain is finite).
(3) and (4) were proved for optimal policies by \citet{ng1999}, and
they also observed that the extension to arbitrary policies is
straightforward (it follows immediately from~(2), for example).
(5)~is immediate from (4).
(6)~follows from (3) and (4) as the shifts of $-\Phi(s)$ to both the
$Q$- and value functions cancel eachother.
\end{proof}

In the next result we show that potential shaping induces a similar
state-dependent shift in the soft $Q$-function as well.

\begin{restatable}{lemma}{thmPropertiesOfPotentialsSoftQ}
\label{lemma:potentials_to_soft_Q}
Consider $M_1$ and $M_2$, two MDPs differing only in their reward functions,
respectively $\reward_1$ and $\reward_2$.
Denote the soft $Q$-function of $M_1$ by $\QSoftN{1}$, and of $M_2$ by
$\QSoftN{2}$.
If $\reward_2$ is produced by potential shaping of $\reward_1$ with
a potential function $\Phi$, then for all states $s \in \States$ and actions
$a \in \Actions$, $\QSoftN{2}(s, a) = \QSoftN{1}(s, a) - \Phi(s)$.
\end{restatable}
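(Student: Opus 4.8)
The plan is to leverage the \emph{uniqueness} of the soft $Q$-function as the solution of the soft Bellman equation~\eqref{eq:Q_soft}. Concretely, I would define the candidate function $\tilde{\Q}(s,a) = \QSoftN{1}(s,a) - \Phi(s)$ and verify that it satisfies the soft Bellman equation for the shaped reward $\reward_2$. Since \citet[Theorem~2 and Appendix~A.2]{haarnoja2017} guarantee that this equation has a unique solution, that verification immediately identifies $\tilde{\Q}$ with $\QSoftN{2}$, establishing the claim. This mirrors the fixed-point strategy implicit in \cref{lemma:change_from_potentials}(3) for the ordinary $Q$-function, but now applied to the soft Bellman operator.

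The key step is a computation of the log-sum-exp (``soft maximum'') term under the shift by $\Phi$. Because $\Phi(S')$ does not depend on the inner summation variable $a'$, it factors out of the sum, yielding
\begin{equation*}
\frac1\beta \log \sum_{a' \in \Actions} \exp\bigl(\beta \tilde{\Q}(S', a')\bigr)
= -\Phi(S') + \frac1\beta \log \sum_{a' \in \Actions} \exp\bigl(\beta \QSoftN{1}(S', a')\bigr)\,.
\end{equation*}
Substituting this together with $\reward_2(s,a,S') = \reward_1(s,a,S') + \discount\,\Phi(S') - \Phi(s)$ into the soft Bellman right-hand side for $\reward_2$, the $\discount\,\Phi(S')$ contributed by shaping cancels exactly against the $-\discount\,\Phi(S')$ coming from the soft maximum. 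Pulling the remaining constant $-\Phi(s)$ outside the expectation over $S'$ leaves precisely $-\Phi(s)$ plus the soft Bellman right-hand side for $\reward_1$, which by definition equals $\QSoftN{1}(s,a)$. Hence the right-hand side collapses to $\QSoftN{1}(s,a) - \Phi(s) = \tilde{\Q}(s,a)$, confirming the fixed-point property.

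I do not expect a serious obstacle here; the argument is essentially algebraic. The one point that warrants care is the cancellation just described: it relies on the fact that the soft maximum, like the ordinary maximum used in \citet{ng1999}, commutes with an additive constant shift of its arguments (here the state-dependent but action-independent $-\Phi(S')$). Appealing to uniqueness of the fixed point, rather than reasoning about the convergence of soft value iteration, is what keeps the proof short and avoids any analytic subtleties. The boundedness of $\Phi$ (its domain $\States$ is finite) and $\discount \in (0,1)$ ensure the relevant quantities are well-defined, just as in the proof of \cref{lemma:change_from_potentials}.
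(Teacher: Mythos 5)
Your proposal is correct and matches the paper's proof: both define the candidate $\QSoftN{1}(s,a) - \Phi(s)$, substitute the shaped reward into the soft Bellman equation, use the fact that the state-dependent shift $-\Phi(S')$ factors out of the log-sum-exp, and conclude by the uniqueness of the soft $Q$-function from \citet{haarnoja2017}.
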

\begin{proof}
We will appeal to to uniqueness of the soft $Q$-function.
By definition,
$\reward_2(s, a, s') = \reward_1(s, a, s') + \gamma\cdot\Phi(s') - \Phi(s)$.
Combining with \cref{eq:Q_soft}, we have for all
$s\in\States$ and $a\in\Actions$:
\begin{align*}
    \QSoftN{1}(s,a)
    &= \Expect
        {S' \sim \TransitionDistribution(s, a)}{
            \reward_1(s,a,S')
            + \gamma \frac1\beta
                \log \sum_{a'\in\Actions} \exp\beta\QSoftN{1}(S', a')
        }
    \\
    &= \Expect
        {S' \sim \TransitionDistribution(s, a)}{
            \reward_2(s, a, S') - \gamma\cdot\Phi(S') + \Phi(s)
            + \gamma \frac1\beta
                \log \sum_{a'\in\Actions} \exp\beta\QSoftN{1}(S', a')
        }
    \\
    % % skip a few steps to take the \Phi(S') inside the logsumexp, which
    % % works because it's independent of the sum variable a
    % &= \Expect
    %     {S' \sim \TransitionDistribution(s, a)}{
    %         \reward_2(s, a, S')
    %         + \gamma \frac1\beta
    %             \log \sum_{a'\in\Actions} \exp\beta
    %                 \left(\QSoftN{1}(S', a') - \Phi(S')\right)
    %     }
    %     + \Phi(s)
    % \\
    \rightarrow \quad
    \QSoftN{1}(s,a) - \Phi(s)
    &= \Expect
        {S' \sim \TransitionDistribution(s, a)}{
            \reward_2(s, a, S')
            + \gamma \frac1\beta
                \log \sum_{a'\in\Actions} \exp\beta
                    \left(\QSoftN{1}(S', a') - \Phi(S')\right)
        }\,.
\end{align*}
We see that $\QSoftN{1}(s,a) - \Phi(s)$ satisfies \cref{eq:Q_soft}
for $\QSoftN{2}(s, a)$, for all $s\in\States$ and $a\in\Actions$.
Since the soft $Q$-function is the unique solution to this equation, we
conclude
    $\QSoftN{2}(s, a) = \QSoftN{1}(s, a) - \Phi(s)$.   
\end{proof}

We next show that $k$-initial potential shaping
and linear scaling correspond to affine transformations of
$\Return$.
\begin{restatable}{lemma}{thmPotentialsAndShifting}
\label{lemma:potentials_and_episodes}
Let
    $(\States,\Actions,\TransitionDistribution,\InitStateDistribution,\reward,
    \discount)$ be an MDP,
    $\reward'$ a reward function, and
    $k \in \Reals$ a constant.
Then
    we have that
        $\Return'(\xi) = \Return(\xi) - k$
            for all possible and initial trajectories $\xi$,
        if and only if
            $\reward'$ is produced from $\reward$ by
                $k$-initial potential shaping and
                a mask of unreachable transitions.
\end{restatable}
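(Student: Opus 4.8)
The plan is to prove the two directions separately, with the forward implication carrying the real content. For the reverse direction I would argue quickly: masking unreachable transitions leaves the reward of every \emph{reachable} transition untouched, and by definition every transition of a possible and initial trajectory is reachable. Hence on such a trajectory $\reward'$ agrees with the purely potential-shaped reward, so part (2) of \cref{lemma:change_from_potentials} applies and gives $\Return'(\xi) = \Return(\xi) - \Phi(s_0)$. Since $\Phi$ is $k$-initial and $s_0$ is an initial state, $\Phi(s_0) = k$, and we conclude $\Return'(\xi) = \Return(\xi) - k$.

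For the forward direction I would reconstruct a potential from the hypothesis. Write $d(x) = \reward'(x) - \reward(x)$ for a transition $x$. For a reachable state $s$, choose any possible continuation $C = (s, a_0, s_1, a_1, \ldots)$ (one exists, being the suffix from $s$ of a possible initial trajectory through $s$) and define $\Phi(s) = -\sum_{j=0}^{\infty} \discount^j\, d(s_j, a_j, s_{j+1})$, that is, minus the discounted tail of reward differences emanating from $s$. On unreachable states I would set $\Phi = 0$. Intuitively $\Phi(s)$ is the ``value correction'' at $s$, chosen precisely so that $d$ should behave like a discounted potential difference.

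The main obstacle is showing this is \emph{well-defined}, i.e.\ independent of the chosen continuation. This is exactly where the hypothesis enters. Fix any prefix $P$ of length $n$ reaching $s$; for two continuations $C$ and $C'$ the concatenations $PC$ and $PC'$ are both possible and initial trajectories, so the hypothesis forces $\Return'(PC) - \Return(PC) = -k = \Return'(PC') - \Return(PC')$. Subtracting the common prefix contribution and dividing by $\discount^n$ shows the two tails coincide, so $\Phi(s)$ does not depend on $C$ (and, since its definition never mentions $P$, not on the prefix either). Once well-definedness is established, the remaining verifications are short. Splitting the first step off a continuation through a reachable transition $(s,a,s')$ gives $d(s,a,s') = \discount\Phi(s') - \Phi(s)$, which is the shaping identity on all reachable transitions. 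Applying the construction at an initial state, whose continuations are themselves possible initial trajectories with tail difference $-k$, yields $\Phi(s_0) = k$, so $\Phi$ is $k$-initial. The residual discrepancy between $\reward'$ and the $\Phi$-shaped reward is then supported entirely on unreachable transitions, which is precisely a mask of unreachable transitions.

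The one delicate consistency point is at \emph{reachable terminal states}, where a valid potential function must satisfy $\Phi = 0$: here I would check that a continuation looping on a terminal state carries no tail reward difference, so that the construction indeed assigns $\Phi = 0$ there and agrees with the extension chosen on the remaining states. Collecting the $k$-initial potential $\Phi$ and the unreachable mask then exhibits $\reward'$ as the required composite, completing the equivalence.
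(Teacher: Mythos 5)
Your proof is correct and follows essentially the same route as the paper's: you define the potential at a reachable state as the negated discounted tail of reward differences along any possible continuation (which is exactly the paper's $\Delta_{\xi_s}=\Return(\xi_s)-\Return'(\xi_s)$), establish well-definedness by embedding two continuations behind a common possible initial prefix and using the constant offset $-k$, and then read off the shaping identity on reachable transitions, the $k$-initiality at initial states, and the unreachable mask for the residual. The terminal-state consistency point you flag is treated no more carefully in the paper's own proof, so nothing further is needed.
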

\begin{proof}
The converse follows from \Cref{lemma:change_from_potentials} and that, by definition,
varying the reward for unreachable transitions does not affect the return
of any possible, initial trajectories.

For the forward direction, we show that the constant difference between $\Return'$ and
$\Return$ on possible initial trajectories implies a constant difference between the
returns of possible trajectories from any given reachable state, and that this 
state-dependent difference defines a $k$-initial potential function that transforms
$\reward$ into $\reward'$.

Consider an arbitrary reachable state $s \in \States$.
Let $\xi_s$ be some possible trajectory starting in $s$, and define
$\Delta_{\xi_s} = \Return(\xi_s) - \Return'(\xi_s)$,
the difference in return ascribed to this trajectory by $\Return$ and $\Return'$.
We show that $\Delta_{\xi_s}$ is independent of $\xi_s$ given $s$.
To extend $\xi_s$ into an initial trajectory, let $\zeta_s$ be some possible, initial,
trajectory fragment ending in $s$ (at least one exists, since $s$ is reachable; let its
length be $n$). Let $\zeta_s + \xi_s$ denote the concatenation of $\zeta_s$ and $\xi_s$.
Then,
\begin{align*}
\Delta_{\xi_s}
    &= \Return(\xi_s) - \Return'(\xi_s)                                \\
    \tag{$\dagger$}
    &= \frac{\Return(\zeta_s+\xi_s) - \Return(\zeta_s)}{\discount^n}
       - \frac{\Return'(\zeta_s+\xi_s) - \Return'(\zeta_s)}{\discount^n} \\
    \tag{$\ddagger$}
    &= \frac{k - \Return(\zeta_s) + \Return'(\zeta_s)}{\discount^n}\,.
\end{align*}
To reach ($\dagger$), note that by definition of return, 
$\Return(\zeta_s + \xi_s) = \Return(\zeta_s) + \discount^n \Return(\xi_s)$
(and likewise for $\Return'$), and recall that we have defined $\discount>0$.
To reach ($\ddagger$), note that since $\zeta_s + \xi_s$ is an initial trajectory,
we have by assumption $\Return(\zeta_s + \xi_s) - \Return'(\zeta_s + \xi_s) = k$.
Note ($\ddagger$) shows that $\Delta_{\xi_s}$ is independent of $\xi_s$ except
for a possible dependence on $\xi_s$'s starting state $s$ (arising through $\zeta_s$).

Thus, we may associate a unique $P(s) = \Delta_{\xi_s}$ with each reachable $s$.
Then $P(s)$ is a $k$-initial potential function on reachable states.
In particular, $P(s) = \Delta_{\xi_s} = k$ if $s$ is initial as then we may
choose $\zeta_s$ to be empty with $\Return(\zeta_s) = \Return'(\zeta_s) = 0$
and $n = 0$.
Furthermore, from the definition of terminal states we must have that $P(s) = \Delta_{\xi_s} = 0$
for terminal $s$.

Moreover, for reachable transitions, $\reward'$ is given by $k$-initial potential 
shaping of $\reward$ with $\Phi(s) = P(s)$.
Consider a reachable transition $(s, a, s')$. Let $\xi$ and $\xi'$ be possible 
trajectories such that $\xi = (s, a, s') + \xi'$. Then,
\begin{align*}
\reward(s,a,s') + \discount P(s') - P(s)
    &= \reward(s,a,s') + \discount(\Return(\xi') - \Return'(\xi'))
                - (\Return(\xi) - \Return'(\xi)) \\
    &= \Return'(\xi) - \discount \Return'(\xi') + (\reward(s,a,s') + \discount \Return(\xi')) - \Return(\xi) \\
    &= \Return'(\xi) - \discount \Return'(\xi') + \Return(\xi) - \Return(\xi) \\
    &= \Return'(\xi) - \discount \Return'(\xi') \\
    &= \reward'(s,a,s')\,.
\end{align*}
Any variation in reward for unreachable transitions can be accounted for by
a mask.
\end{proof}

\begin{restatable}{lemma}{thmPotentialsAndScaling}
\label{lemma:linear_scaling_of_G}
Let
  $(\States,\Actions,\TransitionDistribution,\InitStateDistribution,\reward,
  \discount)$ be an MDP,
  $\reward'$ a reward function, and
  $c \in \Reals$ a constant.
Then
        $\Return'(\xi) = c \cdot \Return(\xi)$
        for all possible initial trajectories $\xi$,
    if and only if
        $\reward'$ is produced from $\reward$ by
            zero-initial potential shaping,
            linear scaling by a factor of $c$, and
            a mask of all unreachable transitions.
\end{restatable}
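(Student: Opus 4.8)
The plan is to reduce both directions to the preceding lemma (\Cref{lemma:potentials_and_episodes}) by factoring out the scaling into an intermediate reward function $\reward'' = c \cdot \reward$ obtained by pure linear scaling. The key observation is that the return function is linear in the reward, so $\Return''(\xi) = c \cdot \Return(\xi)$ for \emph{every} trajectory fragment or trajectory, possible or not. This converts the multiplicative relationship between $\Return'$ and $\Return$ into the additive (indeed, zero) relationship between $\Return'$ and $\Return''$ that is exactly the $k = 0$ regime already handled by \Cref{lemma:potentials_and_episodes}.

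For the converse direction, suppose $\reward'$ is produced from $\reward$ by linear scaling by $c$, zero-initial potential shaping with some $\Phi$, and a mask of unreachable transitions. Writing $\reward'' = c \cdot \reward$, the reward $\reward'$ is then produced from $\reward''$ by zero-initial potential shaping and a mask of unreachable transitions alone. Applying the converse half of \Cref{lemma:potentials_and_episodes} with $k = 0$ to the MDP whose reward is $\reward''$, I obtain $\Return'(\xi) = \Return''(\xi)$ for all possible initial trajectories $\xi$, and since $\Return''(\xi) = c \cdot \Return(\xi)$ this is the desired identity. One can equally verify it directly: linear scaling multiplies the return by $c$ by definition of $\Return$; zero-initial shaping leaves the return of every \emph{initial} trajectory unchanged because the only surviving correction term in \Cref{lemma:change_from_potentials}(2) is $-\Phi(s_0)$, which vanishes as $s_0$ is an initial state; and masking unreachable transitions cannot affect any possible initial trajectory.

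For the forward direction, suppose $\Return'(\xi) = c \cdot \Return(\xi)$ for all possible initial trajectories $\xi$. Setting $\reward'' = c \cdot \reward$ again, we have $\Return'(\xi) = c \cdot \Return(\xi) = \Return''(\xi)$, i.e.\ $\Return'(\xi) = \Return''(\xi) - 0$ for all possible initial $\xi$. Applying the forward half of \Cref{lemma:potentials_and_episodes} with $k = 0$ to the MDP with reward $\reward''$ yields that $\reward'$ is produced from $\reward''$ by zero-initial potential shaping and a mask of unreachable transitions. Composing this with the linear-scaling step that produced $\reward''$ from $\reward$ gives precisely the claimed decomposition of $\reward'$ in terms of $\reward$.

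The main obstacle is conceptual rather than computational: recognising that after absorbing the scaling into the auxiliary reward $\reward''$, the statement collapses onto the already-established additive lemma, so that no fresh telescoping or potential-construction argument is needed. The only points requiring care are confirming that the return is genuinely linear in the reward (so that $\Return'' = c \cdot \Return$ holds on all trajectories, including the non-initial fragments implicitly invoked when \Cref{lemma:potentials_and_episodes} is applied) and checking that the scaling constant introduces no degeneracy; since the argument never divides by $c$, it remains valid for every $c \in \Reals$, including $c = 0$.
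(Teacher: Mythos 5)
Your proposal is correct and follows essentially the same route as the paper's proof: factor out the scaling into the auxiliary reward $c\cdot\reward$, observe that the return is linear in the reward, and reduce both directions to \Cref{lemma:potentials_and_episodes} with $k=0$. The paper additionally makes explicit the small commutation point that any ordering of the three transformations can be rearranged so the linear scaling happens first, which your converse direction uses implicitly (and also covers via the direct verification).
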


\begin{proof}
    It is sufficient to show that the first condition is equivalent to 
    $\reward'$ being produced from $c \cdot \reward$ by
    zero-initial potential shaping and
    a mask of all unreachable transitions (in particular, any sequence of the 
    above three transformations from $\reward$ can be converted into a sequence
    where the linear scaling happens first).
    
    Denote by $\Return_c$ the return function of the scaled reward function
    $c \cdot \reward$.
    It is straightforward to show that
    $c \cdot \Return (\xi) = \Return_c(\xi)$ for all $\xi$.
    Then our first condition, $\Return'(\xi) = c\cdot\Return(\xi)$ for all
    possible initial trajectories,
    is equivalent to having $\Return'(\xi) = \Return_c(\xi)$ for these
    trajectories.
    
    By \Cref{lemma:potentials_and_episodes} (with $k=0$) this is equivalent
    to $\reward'$ being produced from $c \cdot \reward$ by zero-initial
    potential shaping and a mask of all unreachable transitions.
    This completes the proof.
\end{proof}

\newpage 

\section[Proofs for Section 3 Results]{Proofs for \Cref{sec:results} Results}
\label{apx:proofs}
\label{apx:proofs3}

We provide proofs for the theoretical results presented in the
main paper along with several general supporting lemmas from which these
results follow.

We distribute proofs of the results from \cref{sec:results-expert} across
three subsections.
\Cref{apx:proofs-q} proves results for the invariances of (soft)
$Q$-functions (\cref{lemma:Q_function_proximal_redistribution}).
\Cref{apx:proofs-entropy} proves results concerning alternative policies and
their trajectory distributions
(\cref{thm:boltzmann-rational,thm:boltzmann-rational-trajectories}).
\Cref{apx:results-optimal-expert} proves the results relating to optimal
policies and their trajectory distributions
(\cref{thm:optimal-policies-maximal,thm:optimal-trajectories}).

\subsection[Proofs for Section 3.1 Results Concerning Q-functions]{Proofs for
\Cref{sec:results-expert} Results Concerning $Q$-functions}
\label{apx:proofs-q}

\hypertarget{proof:q}{}
\thmQFunction*
\begin{proof}
$\Qfor{\pi}$ is the only function which satisfies the Bellman equation
(\ref{eq:bellman-pi})
for all $s\in\States$, $a\in\Actions$:
\[
\Qfor{\policy}(s,a)
=
\Expect{
    S' \sim \TransitionDistribution(s,a),
    A' \sim \policy(S')
}
{
    \reward(s,a,S') + \discount \cdot \Qfor{\policy}(S', A')
}.
\]
This equation can be rewritten as
\[
\Expect
    {S' \sim \TransitionDistribution(s,a)}
    {R(s,a,S')}
=
\Qfor{\policy}(s,a) - \gamma \cdot \Expect
    {S' \sim \TransitionDistribution(s,a), A' \sim \pi(S')} {\Qfor{\policy}(S', A')}.
\]
Since $\Qfor{\pi}$ is the only function which satisfies this equation for all
$s \in \States, a \in \Actions$, we have that the values of the left-hand
side for each $s \in \States, a \in \Actions$ together determine
$\Qfor{\pi}$, and vice versa. Since the left-hand side values are
preserved by $S'$-redistribution of $R$, and no other transformations
(cf.~\cref{def:sprime-redistribution}),
we have that $\Qfor{\pi}$ is preserved by $S'$-redistribution of $R$, and no
other transformations.

$\QStar = \Qfor{\OptimalPolicy}$ where $\OptimalPolicy$ is any optimal policy
derived from $\QStar$, so the invariances of $\QStar$ follow
as a special case.

The proof for $\QSoft$ is essentially the same.
$\QSoft$ is the only function that satisfies
\cref{eq:Q_soft} for all 
$s\in\States$, $a\in\Actions$:
\[
\QSoft(s,a) = 
    \Expect{S' \sim \TransitionDistribution(s,a)}
    {
        R(s,a,S') + \gamma \frac1\beta \log \sum_{a' \in \Actions} \exp \beta\QSoft (S', a')
    }\,.
\]
This can be rewritten as
    \[
\Expect{S' \sim \TransitionDistribution(s,a)} {R(s,a,S')}
=
\QSoft(s,a)
    -
    \gamma \cdot
    \Expect{S' \sim \TransitionDistribution(s,a)}
    {\frac1\beta \log \sum_{a' \in \Actions} \exp \beta\QSoft (S', a')}.
\]
Since $\QSoft$ is the only function which satisfies this
equation for all $s \in \States, a \in \Actions$, we have that the values
of the left-hand side for each $s \in \States, a \in \Actions$ together
determine $\QSoft$, and vice versa. Since the left-hand
side values are preserved by $S'$-redistribution of $R$, and no other
transformations
(cf.~\cref{def:sprime-redistribution}),
we have that $\QSoft$ is preserved by $S'$-redistribution of
$R$, and no other transformations.
\end{proof}

\subsection[Proofs for Results Concerning Alternative Policies]{Proofs for Results Concerning Alternative Policies}
\label{apx:proofs-entropy}

We split the proof of \cref{thm:boltzmann-rational} into two proofs, \cref{thm:boltzmann1} and \cref{thm:boltzmann2}, below.

In order to derive the invariances of the Boltzmann-rational policy, we
analyse a more general softmax-based policy we call a \emph{Boltzmann
policy}, of which the Boltzmann-rational policy is a special case.
Given a base policy $\basePolicy$,
and an \emph{inverse temperature} parameter $\beta>0$,
we define the \emph{Boltzmann policy} with respect to $\basePolicy$,
denoted
$\BoltzmannPolicyWrt{\basePolicy}$, 
using the softmax function:

\begin{equation}
\label{eq:boltzmann-policy}
\BoltzmannPolicyWrt{\basePolicy} (a \mid s)
= 
\frac
    {\exp\bigl(\beta \Afor{\basePolicy}(s, a)\bigr)}
    {\sum_{a' \in \Actions} \exp\bigl(\beta \Afor{\basePolicy}(s, a')\bigr)}.
\end{equation}

The \emph{Boltzmann-rational} policy, $\BoltzmannRationalPolicy$, is the
Boltzmann policy with respect to optimal policies (cf.
\ref{eq:boltzmann-rational-policy}).

We begin with \cref{lemma:advantage}, characterising the invariance of the advantage functions from which Boltzmann policies are derived.
This in turn supports  \Cref{lemma:boltzmann-general}, characterising the invariances of arbitrary Boltzmann policies.

\begin{restatable}{lemma}{thmAdvantage}
\label{lemma:advantage}
Given an MDP and a policy $\policy$,
    the advantage function for $\policy$, $\Afor{\policy}$,
determines $\reward$ up to
    $S'$-redistribution and potential shaping.
The optimal advantage $\AStar$ has the same invariances.
\end{restatable}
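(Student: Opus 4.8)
The plan is to prove the two inclusions separately: first that $S'$-redistribution and potential shaping both preserve $\Afor{\policy}$, and then that these generate \emph{every} invariance. For the soundness direction, invariance to $S'$-redistribution is inherited from \Cref{lemma:Q_function_proximal_redistribution}: that result gives that $\Qfor{\policy}$ is preserved, and since $\Vfor{\policy}(s) = \Expect{A\sim\policy(s)}{\Qfor{\policy}(s,A)}$ is a function of $\Qfor{\policy}$ and the (fixed) policy alone, it is preserved too, whence $\Afor{\policy} = \Qfor{\policy} - \Vfor{\policy}$ is preserved. Invariance to potential shaping is immediate from \Cref{lemma:change_from_potentials}(6), which states precisely that the advantage function is unchanged. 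As each transformation individually fixes $\Afor{\policy}$, so does any composition.

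For completeness, I would suppose $\reward'$ is any reward function inducing the same advantage function $\Afor{\policy}$ as $\reward$, and write $\Vfor{\policy}$ and $\Vfor{\policy}'$ for the two value functions. First I would note that both vanish on terminal states: the Bellman equation together with the terminal conditions $\TransitionDistribution(s\mid s,a)=1$ and $\reward(s,a,s)=0$ gives $\Vfor{\policy}(s)=\discount\Vfor{\policy}(s)$, and $\discount<1$ forces $\Vfor{\policy}(s)=0$ (and likewise for $\reward'$). Hence $\Phi := \Vfor{\policy} - \Vfor{\policy}'$ is zero on terminal states, so it is a valid potential function. The key step is then to shape $\reward$ by $\Phi$: by \Cref{lemma:change_from_potentials}(4) and (6) the shaped reward $\tilde{\reward}$ has value function $\Vfor{\policy}-\Phi = \Vfor{\policy}'$ and advantage function still equal to $\Afor{\policy}$, so its $Q$-function is $\tilde{Q} = \Afor{\policy} + \Vfor{\policy}' = \Qfor{\policy}'$. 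Thus $\tilde{\reward}$ and $\reward'$ share a $Q$-function, and \Cref{lemma:Q_function_proximal_redistribution} yields that $\reward'$ is an $S'$-redistribution of $\tilde{\reward}$. Therefore $\reward'$ arises from $\reward$ by potential shaping followed by $S'$-redistribution, as required.

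For the optimal advantage $\AStar$, I would reduce to the policy-specific statement. Let $\OptimalPolicy$ be any policy supported on $\argmax_{a}\AStar(s,a)$; this policy is optimal, so its $Q$- and value functions coincide with $\QStar$ and $\VStar$, giving $\Afor{\OptimalPolicy}=\AStar$. Since $\OptimalPolicy$ is derivable from $\AStar$ alone, any reward $\reward'$ sharing the advantage $\AStar$ admits the \emph{same} $\OptimalPolicy$ as an optimal policy whose advantage under $\reward'$ is again $\AStar$; applying the general argument above to this fixed $\OptimalPolicy$ then gives the result as a special case.

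The main obstacle I anticipate is the bookkeeping around terminal states in the completeness direction: I must check that $\Phi=\Vfor{\policy}-\Vfor{\policy}'$ genuinely qualifies as a potential function (vanishing on terminal states), which rests on both $\reward$ and $\reward'$ respecting the MDP's terminal structure so that both value functions vanish there. A secondary subtlety, in the optimal case, is confirming that the policy $\OptimalPolicy$ extracted from $\AStar$ is simultaneously optimal under \emph{every} reward sharing that advantage, so that the identity $\Afor{\OptimalPolicy}=\AStar$ holds uniformly and the reduction to the fixed-policy result is legitimate.
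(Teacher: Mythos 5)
Your proof is correct and follows essentially the same route as the paper's: soundness is inherited from \Cref{lemma:Q_function_proximal_redistribution} and \Cref{lemma:change_from_potentials}(6), and completeness constructs the potential $\Phi = \Vfor{\policy} - \Vfor{\policy}'$ (the paper writes it equivalently as $\Expect{A\sim\policy(s)}{\Qfor{\policy}(s,A) - \Qfor{\policy}'(s,A)}$), shapes $\reward$ by it, and invokes the $Q$-function result to conclude the remainder is $S'$-redistribution. Your explicit checks that $\Phi$ vanishes on terminal states and that the greedy policy extracted from $\AStar$ is optimal under every reward sharing that advantage are details the paper treats more tersely, but the argument is the same.
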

\begin{proof}
$\Afor{\policy}$ can be derived from $\Qfor{\policy}$, given $\policy$
    (by \cref{eq:bellman-pi},
    $
        \Afor{\policy}(s, a)
        = \Qfor{\policy}(s, a)
        -
        \Expect{A\sim\policy(s)}{\Qfor{\policy}(s, A)}
    $).
Thus $\Afor{\policy}$ is invariant to $S'$-redistribution
following \Cref{lemma:Q_function_proximal_redistribution}.
Moreover, by \Cref{lemma:change_from_potentials}, potential shaping causes
no change in $\Afor{\policy}$. That is, $\Afor{\policy}$ is also
invariant to potential shaping.

Conversely, let $\reward$ and $\reward'$ be such that
$\Afor{\policy} = \Afor{\policy}'$.
Define $\Phi : \States \to \Reals$ such that
$\Phi(s) = \Expect{A\sim\policy(s)}{\Qfor{\policy}(s,A) - \Qfor{\policy}'(s,A)}$.
This $\Phi$ satisfies the requirements of a potential function
(all $\Q$-values from terminal states are zero).
Potential shaping $\reward$ with $\Phi$ yields a new reward function,
denoted $\reward^{(\Phi)}$,
with $Q$-function denoted $\Qfor{\policy}^{(\Phi)}$.
Then observe, for each $s\in\States, a\in\Actions$:
\begin{align*}
\Qfor{\policy}^{(\Phi)}
    &= \Qfor{\policy}(s,a) - \Phi(s)
    & \text{(by \Cref{lemma:change_from_potentials})}
\\  &=
    (\Qfor{\policy}(s,a)
        - \Expect{A\sim\policy(s)}{\Qfor{\policy}(s,A)})
        + \Expect{A\sim\policy(s)}{\Qfor{\policy}'(s,A)}
\\  &=
    \Afor{\policy}(s,a)
        + \Expect{A\sim\policy(s)}{\Qfor{\policy}'(s,A)}
\\  &=
    \Afor{\policy}'(s,a)
        + \Expect{A\sim\policy(s)}{\Qfor{\policy}'(s,A)}
    & \text{($\Afor{\policy} = \Afor{\policy}'$ by assumption)}
\\  &=
    (\Qfor{\policy}'(s,a)
        - \Expect{A\sim\policy(s)}{\Qfor{\policy}'(s,A)})
        + \Expect{A\sim\policy(s)}{\Qfor{\policy}'(s,A)}
\\  &=
    \Qfor{\policy}'(s,a)\,.
\end{align*}
That is, $\reward^{(\Phi)}$ and $\reward'$ share a $Q$-function. Thus,
by \Cref{lemma:Q_function_proximal_redistribution},
$\reward'$ is given by $S'$-redistribution from $\reward^{(\Phi)}$.

The optimal advantage function's invariances arise as a special case, since
$\AStar = \Afor{\OptimalPolicy}$,
where $\OptimalPolicy$ is any optimal policy derived from $\AStar$.
\end{proof}

\begin{lemma}
\label{lemma:boltzmann-general}
Given an MDP, an inverse temperature parameter $\beta$, and a base policy $\basePolicy$,
the Boltzmann policy $\BoltzmannPolicyWrt{\basePolicy}$
determines $\reward$ up to
$S'$-redistribution and potential shaping.
\end{lemma}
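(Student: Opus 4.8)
The plan is to show that, with the base policy $\basePolicy$ and inverse temperature $\beta$ held fixed, the Boltzmann policy $\BoltzmannPolicyWrt{\basePolicy}$ and the advantage function $\Afor{\basePolicy}$ \emph{mutually determine} each other. Since \Cref{lemma:advantage} already establishes that $\Afor{\basePolicy}$ determines $\reward$ up to $S'$-redistribution and potential shaping, this equivalence transfers exactly the same invariances to $\BoltzmannPolicyWrt{\basePolicy}$. Concretely, the two directions of the argument give the two inclusions: one shows $\BoltzmannPolicyWrt{\basePolicy}$ inherits the advantage function's invariances, and the other shows it has no additional invariances.

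For the first (invariance) direction, I would simply note that $\BoltzmannPolicyWrt{\basePolicy}$ is defined purely as a function of $\Afor{\basePolicy}$ through \cref{eq:boltzmann-policy} (with $\basePolicy$ and $\beta$ fixed). Hence any transformation of $\reward$ leaving $\Afor{\basePolicy}$ unchanged also leaves $\BoltzmannPolicyWrt{\basePolicy}$ unchanged, so $\BoltzmannPolicyWrt{\basePolicy}$ inherits invariance to $S'$-redistribution and potential shaping.

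For the converse direction, I would recover $\Afor{\basePolicy}$ from $\BoltzmannPolicyWrt{\basePolicy}$. Taking logarithms in \cref{eq:boltzmann-policy} gives $\log \BoltzmannPolicyWrt{\basePolicy}(a \mid s) = \beta \Afor{\basePolicy}(s,a) - \log \sum_{a'} \exp(\beta \Afor{\basePolicy}(s,a'))$, where the subtracted term depends only on $s$. Thus the log-probabilities determine $\Afor{\basePolicy}$ up to a single state-dependent additive constant. This constant is pinned down by the identity $\Expect{A \sim \basePolicy(s)}{\Afor{\basePolicy}(s,A)} = 0$, which follows from $\Afor{\basePolicy}(s,a) = \Qfor{\basePolicy}(s,a) - \Vfor{\basePolicy}(s)$ together with the Bellman relation $\Vfor{\basePolicy}(s) = \Expect{A \sim \basePolicy(s)}{\Qfor{\basePolicy}(s,A)}$ from \cref{eq:bellman-pi}. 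Every action receives strictly positive probability under the softmax (advantages are finite since rewards are bounded, $\discount < 1$, and $\States, \Actions$ are finite), so these logarithms are all well-defined. Hence $\Afor{\basePolicy}$ is a function of $\BoltzmannPolicyWrt{\basePolicy}$, so any transformation preserving $\BoltzmannPolicyWrt{\basePolicy}$ also preserves $\Afor{\basePolicy}$; by \Cref{lemma:advantage} it must therefore be a composition of $S'$-redistribution and potential shaping.

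I expect the main obstacle to lie in the converse direction: carefully justifying recovery of the \emph{absolute level} of $\Afor{\basePolicy}$, since the softmax reveals only advantage differences within each state. The key observation is that the known base policy $\basePolicy$ supplies the missing normalisation through the zero-mean constraint $\sum_a \basePolicy(a \mid s) \Afor{\basePolicy}(s,a) = 0$. Once both directions are in place, the statement follows by combining them with \Cref{lemma:advantage}.
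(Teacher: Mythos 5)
Your proposal is correct and follows essentially the same route as the paper's proof: the invariance direction by deriving $\BoltzmannPolicyWrt{\basePolicy}$ from $\Afor{\basePolicy}$, and the converse by inverting the softmax and pinning down the state-dependent constant via the zero-mean identity $\Expect{A\sim\basePolicy(s)}{\Afor{\basePolicy}(s,A)}=0$, then appealing to \Cref{lemma:advantage}.
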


\begin{proof}
By \cref{eq:boltzmann-policy}, $\BoltzmannPolicyWrt{\basePolicy}$ can be
derived from $\Afor{\basePolicy}$.
Thus $\BoltzmannPolicyWrt{\basePolicy}$ is invariant to $S'$-redistribution
and potential shaping by \Cref{lemma:advantage}.

Conversely, we show that $\Afor{\basePolicy}$ can be derived from 
$\BoltzmannPolicyWrt{\basePolicy}$ in turn.
Therefore $\BoltzmannPolicyWrt{\basePolicy}$ can have no more invariances
than $\Afor{\basePolicy}$, amounting to $S'$-redistribution and potential
shaping by \Cref{lemma:advantage}.

For each $s\in\States, a\in\Actions$, observe:
\begin{align*}
    \BoltzmannPolicyWrt{\basePolicy} (a \mid s)
    &= 
    \frac
        {\exp\bigl(\beta \Afor{\basePolicy}(s, a)\bigr)}
        {\sum_{a' \in \Actions} \exp\bigl(\beta \Afor{\basePolicy}(s, a')\bigr)}
\\
\tag{$\dagger$}
\rightarrow\quad
    \Afor{\basePolicy}(s,a)
    &=
    \frac1\beta\log\BoltzmannPolicyWrt{\basePolicy} (a \mid s)
    +
    \frac1\beta\log
        {\sum_{a' \in \Actions} \exp\bigl(\beta \Afor{\basePolicy}(s, a')\bigr)}
    \,.
\end{align*}
We have not yet solved for $\Afor{\basePolicy}$, since it still occurs on
both sides of ($\dagger$).
However, we can eliminate the RHS occurrence by appealing to the following
identity (that the advantage has zero mean in each state $s\in\States$):
\begin{align*}
    \Expect{A\sim\basePolicy(s)}{\Afor{\basePolicy}(s,A)}
    &=
    \Expect{A\sim\basePolicy(s)}{
        \Qfor{\basePolicy}(s,A)
        -
        \Expect{A'\sim\basePolicy(s)}{\Qfor{\basePolicy}(s,A')}
    }
\\  &=
    \Expect{A\sim\basePolicy(s)}{\Qfor{\basePolicy}(s,A)}
    -
    \Expect{A\sim\basePolicy(s)}{\Qfor{\basePolicy}(s,A)}
\\  &=
    0\,.
\end{align*}
Taking the expectation of both side of ($\dagger$) therefore yields:
\begin{align*}
    \Expect{A\sim\basePolicy(s)}{
        \Afor{\basePolicy}(s,A)
    }
    &=
    \Expect{A\sim\basePolicy(s)}{
    \frac1\beta\log\BoltzmannPolicyWrt{\basePolicy} (A \mid s)
    }
    +
    \Expect{A\sim\basePolicy(s)}{
    \frac1\beta\log
        {\sum_{a' \in \Actions} \exp\bigl(\beta \Afor{\basePolicy}(s, a')\bigr)}
    }
\\
\rightarrow \quad
    0
    &=
    \Expect{A\sim\basePolicy(s)}{
    \frac1\beta\log\BoltzmannPolicyWrt{\basePolicy} (A \mid s)
    }
    +
    \frac1\beta\log
        {\sum_{a' \in \Actions} \exp\bigl(\beta \Afor{\basePolicy}(s, a')\bigr)}
\\\tag{$\ddagger$}
\rightarrow \quad
    \frac1\beta\log
        {\sum_{a' \in \Actions} \exp\bigl(\beta \Afor{\basePolicy}(s, a')\bigr)}
    &=
    - \Expect{A\sim\basePolicy(s)}{
    \frac1\beta\log\BoltzmannPolicyWrt{\basePolicy} (A \mid s)
    }
    \,.
\end{align*}
Combining ($\ddagger$) with ($\dagger$) gives us an expression for
$\Afor{\basePolicy}$ in terms only of $\BoltzmannPolicyWrt{\basePolicy}$,
as required:
\[
    \Afor{\basePolicy}(s,a)
    =
    \frac1\beta\log\BoltzmannPolicyWrt{\basePolicy} (a \mid s)
    - \Expect{A\sim\basePolicy(s)}{
        \frac1\beta\log\BoltzmannPolicyWrt{\basePolicy} (A \mid s)
    }
    \,.
\]

\end{proof}

Our main result concerning Boltzmann-rational policies follows immediately.
\hypertarget{proof:boltzmann-rational-policy}{}
\begin{restatable}{theorem}{thmBoltzmannPolicy}
\label{thm:boltzmann1}
Given an MDP and an inverse temperature parameter $\beta$, the Boltzmann-rational policy $\BoltzmannRationalPolicy$,
determines $\reward$ up to
$S'$-redistribution and potential shaping.
\end{restatable}
\begin{proof}
The Boltzmann-rational policy $\BoltzmannRationalPolicy$ determines its own base policy $\OptimalPolicy$.
This is because the maximum probability actions in $\BoltzmannRationalPolicy$ 
are precisely those actions with maximal optimal advantage $\AStar$
($\argmax_{a\in\Actions}\AStar(s) = \argmax_{a\in\Actions} \BoltzmannRationalPolicy(s,a)$).
We can break ties arbitrarily, as any optimal base policy will lead to the same Boltzmann-rational policy.
So, given $\BoltzmannRationalPolicy$, we are effectively also given a base policy, and the invariances of $\BoltzmannRationalPolicy$ therefore follow as a special case of
\cref{lemma:boltzmann-general}.
\end{proof}

We turn to prove the corresponding result about MCE policies, which follows a
similar line of reasoning relative to the soft $Q$-function.
We use an elementary property of the softmax function, which we state and
derive as \Cref{lemma:softmax} for the convenience of the unfamiliar reader.
\hypertarget{proof:mce-policy}{}
\begin{restatable}{theorem}{thmMCEPolicy}
\label{thm:boltzmann2}
Given an MDP and an inverse temperature $\beta$, the MCE policy $\MCEPolicy$
determines $\reward$ up to
$S'$-redistribution and potential shaping.
\end{restatable}
\begin{proof}
$\MCEPolicy$ is given by applying the softmax function to
$\QSoft$.
Recall (or see \Cref{lemma:softmax}, below) that the softmax function is
invariant to a constant shift, and no other transformations.
This means that
$\MCEPolicy$ is invariant to exactly those transformations
that induce constant shifts in $\QSoft$ for each state.

$S'$-redistribution induces no shift in $\QSoft$ by
\Cref{lemma:Q_function_proximal_redistribution}. By \cref{lemma:potentials_to_soft_Q}, potential shaping
induces a state-dependent constant shift.
Thus, $\MCEPolicy$ is invariant to $S'$-redistribution and potential shaping.

Conversely, we show that any state-dependent constant shift in $\QSoft$
can be described by these two kinds of transformations. Therefore, they
are the only invariances.
Let $B : \States \rightarrow \Reals$, and suppose $R_1$ and $R_2$
are two reward functions such that the corresponding soft $Q$-functions
satisfy
$\QSoftN{1}(s,a) = \QSoftN{2}(s,a) + B(s)$.
Then,
\begin{align*}
    \Expect{S' \sim \TransitionDistribution(s,a)}{R_1(s,a,S')}
    &= \QSoftN{1}(s,a)
        - \Expect{S' \sim \TransitionDistribution(s,a)}{
            \gamma \frac1\beta
            \log \sum_{a' \in A} \exp \beta\QSoftN{1} (S', a')
        }
\\
    &= \QSoftN{2}(s,a) + B(s)
        - \Expect{S' \sim \TransitionDistribution(s,a)}{
            \gamma \frac1\beta
            \log \sum_{a' \in A} \exp \beta\left(
                \QSoftN{2} (S', a') + B(S')
            \right)
        }
\\
    %&= \QSoftN{2}(s,a) + B(s) - \Expect{\gamma \frac1\beta \log \sum_{a' \in A} \exp \beta\QSoftN{2} (S', a') \exp \beta B(s)}\\
    &= \QSoftN{2}(s,a) + B(s)
        - \Expect{S' \sim \TransitionDistribution(s,a)}{
            \gamma \frac1\beta \log \left(
                \sum_{a' \in A} \exp \beta\QSoftN{2} (S', a')
            \right)
            + \gamma B(S')
        }
\\
    &= \Expect{S' \sim \TransitionDistribution(s,a)}{
            R_2(s,a,S') + B(s) - \gamma B(S')
        }\,.
\end{align*}
Now set $\Phi(s) = -B(s)$, and we can see that the difference between $R$ and $R'$ is described by potential shaping and $S'$-redistribution.
\end{proof}

\begin{restatable}{lemma}{thmSoftmaxConstant}
\label{lemma:softmax}
Consider two functions $f : \mathcal{X} \to \Reals$ and
$g : \mathcal{X} \to \Reals$
defined on a finite set $\mathcal{X}$.
Then the softmax distributions over $f$ and $f+g$ agree, that is,
for all $x\in\mathcal{X}$,
\begin{equation}
    \label{eq:softmax-plus-identity}
    \frac{\exp(f(x)+g(x))}{\sum_{x'\in\mathcal{X}}\exp(f(x')+g(x'))}
    =
    \frac{\exp(f(x))}{\sum_{x'\in\mathcal{X}}\exp(f(x'))}
    \,,
\end{equation}
    if and only if $g$ is a constant function over $\mathcal{X}$.
\end{restatable}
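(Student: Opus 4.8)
The plan is to establish the two directions of the biconditional separately, each by a short direct manipulation of the softmax identity~\eqref{eq:softmax-plus-identity}; no deep machinery is required. Throughout I would abbreviate the two normalising constants as $Z_f = \sum_{x'\in\mathcal{X}}\exp(f(x'))$ and $Z_{f+g} = \sum_{x'\in\mathcal{X}}\exp(f(x')+g(x'))$, both of which are finite and strictly positive because $\mathcal{X}$ is finite.

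For the ``if'' direction, I would suppose $g \equiv c$ is constant. Then $\exp(f(x')+g(x')) = e^{c}\exp(f(x'))$ for every $x'\in\mathcal{X}$, so $Z_{f+g} = e^{c}Z_f$. Substituting into the left-hand side of~\eqref{eq:softmax-plus-identity}, the factor $e^{c}$ cancels between numerator and denominator, recovering the right-hand side. This is essentially a one-line computation.

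For the ``only if'' direction, I would suppose~\eqref{eq:softmax-plus-identity} holds for all $x\in\mathcal{X}$. The key step is to cross-multiply and cancel the common strictly positive factor $\exp(f(x))$: the identity rearranges to $\exp(f(x)+g(x))\,Z_f = \exp(f(x))\,Z_{f+g}$, and dividing through by $\exp(f(x))>0$ yields $\exp(g(x)) = Z_{f+g}/Z_f$. Since the right-hand side is manifestly independent of $x$, we conclude $g(x) = \log(Z_{f+g}/Z_f)$ is the same constant for every $x\in\mathcal{X}$, which is exactly the claim.

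There is no substantive obstacle here; the argument is elementary. The only points requiring a moment of care are that $\exp(f(x))$ is strictly positive (which licenses the cancellation) and that the normalising sums are well-defined and finite (guaranteed by the finiteness of $\mathcal{X}$). The conceptual observation driving the whole proof is that the pointwise softmax-agreement condition forces $\exp(g(x))$ to equal a single ratio of partition functions, thereby collapsing a family of pointwise constraints into the single statement that $g$ is constant.
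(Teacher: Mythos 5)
Your proposal is correct and follows essentially the same route as the paper's proof: the forward direction divides through by $\exp(f(x))$ to show $\exp(g(x))$ equals the $x$-independent ratio of normalising sums, and the converse factors out $e^{c}$ from numerator and denominator. No gaps.
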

\begin{proof}
This is an elementary property of the softmax function.
The forward direction can be seen by manipulating
\cref{eq:softmax-plus-identity} as follows:
\begin{align*}
    \frac{\exp(f(x)+g(x))}{{\exp(f(x))}}
    &=
    \frac{\sum_{x'\in\mathcal{X}}\exp(f(x')+g(x'))}{\sum_{x'\in\mathcal{X}}\exp(f(x'))}
\\  \rightarrow\quad
    g(x)
    &=
    \log\left(\frac{\sum_{x'\in\mathcal{X}}\exp(f(x')+g(x'))}{\sum_{x'\in\mathcal{X}}\exp(f(x'))}\right)
\end{align*}
which is constant in $x$.

The converse can be seen as follows. Assume $g(x) = G$, a constant.
Then,
\begin{equation*}
    \frac{\exp(f(x)+g(x))}{\sum_{x'\in\mathcal{X}}\exp(f(x')+g(x'))}
    =
    \frac
        {\exp(f(x))\cdot\exp(G)}
        {\left(\sum_{x'\in\mathcal{X}}\exp(f(x'))\right)\cdot\exp(G)}
    =
    \frac{\exp(f(x))}{\sum_{x'\in\mathcal{X}}\exp(f(x'))}\,.
\end{equation*}
\end{proof}

We continue with results about the trajectories derived from these
alternative policies.
We once again prove \cref{thm:boltzmann-rational-trajectories} in two parts
(\cref{thm:boltzmannt1,thm:boltzmannt2}).
For Boltzmann-rational trajectories, we once again
provide a more general lemma concerning arbitrary Boltzmann policies
(\ref{eq:boltzmann-policy}).
\begin{restatable}{lemma}{thmBoltzmannTrajectories}
\label{lemma:boltzmann-trajectories-general}
Given an MDP $M$, an inverse temperature $\beta$, and a base policy
$\basePolicy$,
the distribution of trajectories,
    $\TrajDistributionBoltzmannWrt{\basePolicy}$,
induced by the Boltzmann policy $\BoltzmannPolicyWrt{\basePolicy}$ acting
in MDP $M$
determines $\reward$ up to
$S'$-redistribution, potential shaping, and a mask of \emph{unreachable} transitions.
\end{restatable}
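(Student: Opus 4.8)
The plan is to prove the result in two directions: first that the three listed transformations are genuine invariances of $\TrajDistributionBoltzmannWrt{\basePolicy}$ (soundness), and then that no further transformations preserve it (completeness). Throughout I would lean on the observation that, since the dynamics $\TransitionDistribution$ and initial distribution $\InitStateDistribution$ are held fixed, $\TrajDistributionBoltzmannWrt{\basePolicy}$ depends on $\reward$ only through the policy $\BoltzmannPolicyWrt{\basePolicy}$, and in fact only through its restriction to \emph{reachable} states, since unreachable states receive zero visitation probability.

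For soundness, $S'$-redistribution and potential shaping leave $\BoltzmannPolicyWrt{\basePolicy}$ itself unchanged by \Cref{lemma:boltzmann-general}, and hence leave $\TrajDistributionBoltzmannWrt{\basePolicy}$ unchanged as well. The mask of unreachable transitions needs a separate argument: I would show that following $\basePolicy$ from any reachable state traverses only reachable transitions (a possible transition out of a reachable state lands in a reachable state), so that $\Qfor{\basePolicy}(s,a)$, and hence $\Afor{\basePolicy}(s,a)$ and $\BoltzmannPolicyWrt{\basePolicy}(a\mid s)$, depends only on the reward of reachable transitions whenever $s$ is reachable. Altering the reward on unreachable transitions therefore leaves the policy at reachable states, and thus the trajectory distribution, unchanged.

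For completeness, I would first argue that $\TrajDistributionBoltzmannWrt{\basePolicy}$ determines $\BoltzmannPolicyWrt{\basePolicy}(\cdot\mid s)$ at every reachable $s$: reachability (via possible transitions and positive-probability actions) together with the full support of the softmax policy guarantees that $s$ is visited with positive probability, so the conditional next-action distribution recovers the policy there. Inverting the softmax as in \Cref{lemma:boltzmann-general} then recovers $\Afor{\basePolicy}(s,a)$ for all reachable $s$, so it suffices to show that the advantage function restricted to reachable states determines $\reward$ up to the three transformations. Given two rewards $\reward,\reward'$ that agree in $\Afor{\basePolicy}$ on all reachable states, I would replay the construction from the proof of \Cref{lemma:advantage}: define $\Phi(s) = \Expect{A\sim\basePolicy(s)}{\Qfor{\basePolicy}(s,A) - \Qfor{\basePolicy}'(s,A)}$ on reachable states and set $\Phi = 0$ elsewhere (the formula already yields $0$ at terminal states, so the potential-function constraint holds), and then verify that potential shaping of $\reward$ by $\Phi$ produces a reward whose $Q$-function agrees with that of $\reward'$ on reachable states. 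By \Cref{lemma:Q_function_proximal_redistribution} this forces the shaped reward and $\reward'$ to agree in expected successor reward on reachable transitions — i.e.\ to differ only by $S'$-redistribution there — while any discrepancy on the remaining (unreachable) transitions is absorbed by a mask.

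The main obstacle I anticipate is the careful bookkeeping at the reachable/unreachable boundary. One must confirm that $\Phi$ is only ever evaluated at reachable states when shaping reachable transitions (which holds because possible transitions from reachable states stay reachable), that the telescoping identity $\Qfor{\basePolicy}^{(\Phi)}(s,a) = \Qfor{\basePolicy}(s,a) - \Phi(s)$ from \Cref{lemma:change_from_potentials} applies using only these reachable values, and that the partition of transitions into reachable ones (handled by potential shaping and $S'$-redistribution) and unreachable ones (handled by the mask, noting impossible transitions are already free under $S'$-redistribution) is exhaustive. Once these points are nailed down, the localised version of \Cref{lemma:advantage} goes through and yields exactly the three claimed invariances.
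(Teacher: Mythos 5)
Your proposal is correct and follows essentially the same route as the paper's proof: invariance to $S'$-redistribution and potential shaping is inherited from \Cref{lemma:boltzmann-general}, the unreachable mask is handled by noting that rewards on unreachable transitions cannot affect the policy at reachable states, and completeness comes from factoring the trajectory distribution into the per-state action distributions at reachable states and then replaying the softmax-inversion/advantage argument restricted to those states. The paper states this quite tersely ("via a similar argument to the proof of \Cref{lemma:boltzmann-general}"); your write-up simply makes explicit the bookkeeping at the reachable/unreachable boundary that the paper leaves implicit.
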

\begin{proof}
That the distribution is invariant to $S'$-redistribution and potential
shaping follows from \Cref{lemma:boltzmann-general}.
The distribution is also invariant to changes in the reward for transitions out
of unreachable states, since these rewards cannot affect the policy for reachable
states. As a result, the distribution is additionally invariant to a mask of unreachable
transitions.

The trajectory distribution can be factored into the separate distributions 
$\BoltzmannPolicyWrt{\basePolicy}(s) \in \Delta(\Actions)$ for each
reachable state $s$ by conditioning on a supported prefix trajectory fragment
that leads to $s$ and marginalising over subsequent states and actions.
Via a similar argument to the proof of \Cref{lemma:boltzmann-general}, the
distribution determines the reward function for transitions (out of these
reachable states) up to potential shaping and 
$S'$-redistribution (as they affect reachable states).
\end{proof}

\hypertarget{proof:boltzmann-rational-trajectories}{}
\begin{restatable}{theorem}{thmBoltzmannRationalTrajectories}
\label{thm:boltzmannt1}
Given an MDP $M$ and an inverse temperature parameter $\beta$,
the distribution of trajectories, $\TrajDistributionBoltzmann$,
induced by the Boltzmann-rational policy 
$\BoltzmannRationalPolicy$ acting in MDP $M$,
determines $\reward$ up to
$S'$-redistribution, potential shaping, and a mask of unreachable
transitions.
\end{restatable}
\begin{proof}
As in \Cref{thm:boltzmann-rational}, the invariances for the
Boltzmann-rational policy's trajectories arises as a special case.
\end{proof}

We turn to prove the corresponding result about MCE policies, which follows a
similar line of reasoning as for the Boltzmann trajectories, but relative to
the MCE policy instead.
\hypertarget{proof:mce-trajectories}{}
\begin{restatable}{theorem}{thmMCETrajectories}
\label{thm:boltzmannt2}
Given an MDP $M$ and an inverse temperature parameter $\beta$,
the distribution of trajectories, $\TrajDistributionMCE$,
induced by the MCE policy $\MCEPolicy$ acting in MDP $M$
determines $\reward$ up to
$S'$-redistribution, potential shaping, and a mask of unreachable
transitions.
\end{restatable}
\begin{proof}
Directly analogous to the proof of
\Cref{lemma:boltzmann-trajectories-general},
(relative to \Cref{thm:boltzmann2}).
\end{proof}

\subsection[Proofs for Results Concerning Optimal Policies]{Proofs for Results Concerning Optimal Policies}
\label{apx:results-optimal-expert}

Our results concerning the invariance of optimal policies and their trajectories follow from the following general result connecting general optimality-preserving transformations to the set of optimal actions in some subset of states.

The key idea of the proof is to establish a link between the value-bounding function $\Psi$ (\cref{def:opt-preserving}) and the optimal value function for $\reward'$ via the Bellman optimality equation. We note that the definition of (general) optimality-preserving transformations is designed specifically to elicit this link.

\begin{restatable}{lemma}{thmOptimalPolicyGeneral}
\label{thm:optimal-policies}
Given an MDP $M$, suppose we have the set of optimal actions
for each state in a subset of states $\mathfrak{S} \subseteq \States$.
Let $\mathfrak{O}$ be the set of (set-valued) functions
    $\OPTFunc : \States \rightarrow \Powerset(\Actions)\setminus\{\emptyset\}$ 
such that $\OPTFunc(s) = \argmax_{a\in\Actions} \AStar(s,a)$ for all $s \in \mathfrak{S}$ (but where
$\OPTFunc$ is unconstrained outside $\mathfrak{S}$).
Then, these optimal action sets determine $\reward$ up to general optimality-preserving
transformations with $\OPTFunc \in \mathfrak{O}$.
\end{restatable}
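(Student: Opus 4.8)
The plan is to prove both directions of the ``determines up to'' claim by identifying, in each direction, the auxiliary function $\Psi$ from \cref{def:opt-preserving} with the optimal value function $\VStar'$ of the transformed reward $\reward'$. The pivotal observation is that the defining condition of an optimality-preserving transformation is the Bellman optimality equation in disguise, and this is what pins $\Psi$ down.

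For the invariance (soundness) direction, I would start from a reward $\reward'$ produced from $\reward$ by an optimality-preserving transformation with some $\OPTFunc \in \mathfrak{O}$, witnessed by a function $\Psi$. Since each $\OPTFunc(s)$ is non-empty, the condition $\Expect{S' \sim \TransitionDistribution(s,a)}{\reward'(s,a,S') + \discount \Psi(S')} \le \Psi(s)$ (with equality attained for at least one $a$) gives $\max_a \Expect{S' \sim \TransitionDistribution(s,a)}{\reward'(s,a,S') + \discount \Psi(S')} = \Psi(s)$ for every $s$; that is, $\Psi$ is a fixed point of the Bellman optimality operator for $\reward'$. By uniqueness of this fixed point (a standard contraction argument, using $\discount \in (0,1)$ and finiteness of $\States$ and $\Actions$), $\Psi = \VStar'$. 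The left-hand side is then $\QStar'(s,a)$, so the equality-iff clause reads $\OPTFunc(s) = \argmax_a \AStar'(s,a)$ at every $s$; combined with $\OPTFunc \in \mathfrak{O}$ this yields $\argmax_a \AStar'(s,a) = \argmax_a \AStar(s,a)$ for all $s \in \mathfrak{S}$, so the optimal action sets on $\mathfrak{S}$ are preserved.

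For the completeness direction, I would take any $\reward'$ sharing the optimal action sets of $\reward$ on $\mathfrak{S}$ and exhibit an explicit witness: set $\OPTFunc(s) = \argmax_a \AStar'(s,a)$ for all $s \in \States$, which lies in $\mathfrak{O}$ by hypothesis since it agrees with $\argmax_a \AStar(s,a)$ on $\mathfrak{S}$ and is unconstrained elsewhere, and take $\Psi = \VStar'$. Then $\QStar'(s,a) \le \VStar'(s)$ holds for all $s,a$ by definition of $\VStar'$ as the pointwise maximum, with equality precisely at the optimal actions $a \in \OPTFunc(s)$, verifying \cref{def:opt-preserving}. Hence $\reward'$ is reachable from $\reward$ by an optimality-preserving transformation with $\OPTFunc \in \mathfrak{O}$, which closes the equivalence between the two classes of reward functions.

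I expect the uniqueness step, identifying $\Psi$ with $\VStar'$, to be the only substantive point; the remainder is bookkeeping. Its subtlety is that \cref{def:opt-preserving} constrains only $\reward'$ and $\Psi$ and never references $\reward$ directly, so the argument must lean on the non-emptiness of each $\OPTFunc(s)$ to convert the family of pointwise inequalities into the Bellman fixed-point identity that determines $\Psi$ uniquely.
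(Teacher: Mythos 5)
Your proposal is correct and follows essentially the same route as the paper's proof: both directions hinge on identifying the witness $\Psi$ with $\VStar'$ via the Bellman optimality equation (using non-emptiness of $\OPTFunc(s)$ to turn the pointwise inequalities into the fixed-point identity), and the completeness direction exhibits the same explicit witness $\OPTFunc(s) = \argmax_a \AStar'(s,a)$ with $\Psi = \VStar'$. No substantive differences.
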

\begin{proof}
Suppose $\reward'$ is obtained from $M$'s reward $\reward$ via a general
optimality-preserving transformation with some $\OPTFunc \in \mathfrak{O}$.
Let $\Psi$ be the corresponding value-bounding function, that is, a
function $\Psi : \States \to \Reals$ satisfying, for all $s \in \States$ and
$a\in\Actions$,
\begin{equation}
\label{eq:condition-on-psi}
    \Expect{S' \sim \TransitionDistribution(s,a)}{R'(s, a, S') + \discount \cdot \Psi(S')}
    \leq
    \Psi(s)
\,,
\end{equation}
with equality if and only if $a \in \OPTFunc(s)$.
Since $\OPTFunc(s)$ is nonempty (by definition), we have for all $s \in \States$
\begin{equation*}
    \Psi(s)
    =
    \max_{a\in\Actions}
    \left(
        \Expect
            {S' \sim \TransitionDistribution(s,a)}
            {R'(s, a, S') + \discount \cdot \Psi(S')}    
    \right)
\,.
\end{equation*}
This recursive condition on $\Psi$ is the Bellman optimality equation for the
unique optimal value function, $\VStar'$, of the MDP with transformed reward
$\reward'$.
Therefore, $\Psi(s) = \VStar'(s)$ for all $s \in \States$, and we can
rewrite \cref{eq:condition-on-psi} as 
\begin{equation}
\label{eq:condition-as-vstar}
    \Expect{S' \sim \TransitionDistribution(s,a)}{R'(s, a, S') + \discount \cdot \VStar'(S')}
    \leq
    \VStar'(s)
\,,
\end{equation}
with equality only for $a \in \OPTFunc(s)$.

Now, consider a state $s \in \mathfrak{S}$.
By assumption, for this $s$, $\OPTFunc(s) = \argmax_{a\in\Actions}\AStar(s, a)$.
Then for this state, the actions that attain the optimal value bound in
\cref{eq:condition-as-vstar} are these same optimal actions.
Therefore, $\reward'$ induces the same sets of optimal actions from states
in $\mathcal{S}$.

Conversely,
consider a second MDP $M'$, differing from $M$ only in its reward
function, $\reward'$. Assume the set of optimal actions in states in $\mathfrak{S}$
agrees with the optimal actions in $M$ for those states.
Let $\VStar'$ and $\AStar'$ denote the optimal value and advantage
functions for $M'$.
The Bellman optimality equation for $M'$ ensures that, for $s\in\States$, 
\begin{equation}
\label{eq:bellman-gives-condition}
    \VStar'(s)
    =
    \max_{a\in\Actions}
    \left(
        \Expect
            {S' \sim \TransitionDistribution(s,a)}
            {R'(s, a, S') + \discount \cdot \VStar'(S')}
    \right)
\end{equation}
with the maximum attained precisely by the actions
$a \in \argmax_{a\in\Actions}(\AStar'(s, a))$.
Setting $\OPTFunc(s) = \argmax_{a\in\Actions}(\AStar'(s, a))$,
\cref{eq:bellman-gives-condition} can be rewritten as 
\begin{equation}
\label{eq:vstar-is-psi}
    \Expect
        {S' \sim \TransitionDistribution(s,a)}
        {R'(s, a, S') + \discount \cdot \VStar(S')}
    \leq
    \VStar(s)
\end{equation}
for all $s\in\States$ and $a\in\Actions$, with equality if and only if $a \in \OPTFunc(s)$.

Now, for $s\in\mathfrak{S}$, we have $\argmax_{a\in\Actions}(\AStar'(s, a)) =
\argmax_{a\in\Actions}(\AStar(s, a))$, 
because $M$ and $M'$ have matching sets of optimal actions for these states
(by assumption).
Then, \cref{eq:vstar-is-psi} shows that $\reward'$ is produced from $\reward$ by
a general optimality-preserving transformation with $\OPTFunc(s) = \argmax_{a\in\Actions}(\AStar'(s, a))$
(and $\Psi(s) = \VStar'(s)$).
\end{proof}

We are now in a position to prove
\Cref{thm:optimal-policies-maximal,thm:optimal-trajectories}:

\hypertarget{proof:supportive-optimal-policies}{}
\thmOptimalPoliciesMaximal*
\begin{proof}
By assumption, our optimal policies are maximally supportive. Therefore,
their support determines the set of optimal actions from all states.
Also by assumption, our maximally supportive optimal policies are determined
by the set of optimal actions in each state.
Therefore, a maximally supportive optimal policy has the same reward
information as the set of optimal policies in each state. Its invariances
follow as a special case of \Cref{thm:optimal-policies}, with
$\mathfrak{S}=\States$.
\end{proof}

\hypertarget{proof:supportive-optimal-trajectories}{}
\thmOptimalTrajectories*
\begin{proof}
The distribution of trajectories can be factored into separate distributions
$\OptimalPolicy(s) \in \Delta(\Actions)$ for each state $s\in\mathfrak{S}$
(in a manner similar to \Cref{lemma:boltzmann-trajectories-general}, as proved above).
As above, these individual distributions determine and are determined by the
set of optimal actions within each of those states.
The invariance result therefore follows from \Cref{thm:optimal-policies}.
\end{proof}

\begin{remark}\label{remark:optimal-policy-assumptions}
As mentioned in \cref{sec:results}, when there are multiple optimal policies,
invariances depend on how the given policy is chosen. The proofs above
reveal that our assumptions are crucial in connecting maximally supportive
optimal policies to optimal action sets. We comment on the motivation for
these assumptions and, following our theme of cataloguing partial
identifiability, we sketch how the result would change without them.

Assumption (1), that the given policy is maximally supportive, allows us to
rule out unsupported actions as suboptimal.
Additional reward transformations could become permissible otherwise.
As a well-known example, the zero reward function is consistent with any
policy if unsupported actions could also be optimal \citep{ng2000}.
This more general case is difficult to analyse within our framework,
because it is not well-described by transformations or an equivalence
relation.
The assumption may be demanding,
but the consequences of misspecification are mild in the case of policy
optimisation -- at least the learnt reward function won't allow any
suboptimal actions to become optimal.

Assumption (2), that a given policy is computed only from the set of optimal
actions in each state, appears to be common. The purpose of this technical
assumption is to rule out pathological schemes for encoding additional reward
information through the selection of the policy.
Through such schemes one could in principle encode the full reward function,
for example into the infinite decimal representation of the probability of
taking one action over another in some state.
Such a selection scheme, even if it was not known to the learner, would
remove invariances, as transformations that change the reward function but
not the set of optimal states would change the given policy.
\end{remark}

\subsection[Proofs for Section 3.3 Results]{Proofs for \Cref{sec:results-eval} Results}
\label{apx:proofs3_2}

\begin{restatable}{theorem}{thmReturnFrag}
\label{thm:labels_of_fragments}
Given an MDP,
the return function restricted to possible trajectory fragments, $\Return_\zeta$,
determines $\reward$ up to
a mask of \emph{impossible} transitions.
\end{restatable}
\begin{proof}
The result is immediate, since the restricted domain still includes
all possible transitions (as length one trajectory fragments with return
equal to the reward of the transition), and no fragments with impossible
transitions.
\end{proof}

\hypertarget{proof:return-trajectories}{}
\thmReturnTraj*
\begin{proof}
The result follows from \Cref{lemma:potentials_and_episodes}
with $k=0$.
\end{proof}

\hypertarget{proof:soft-compare-fragments}{}
\thmSoftComparisonsFrag*
\begin{proof}
Since $\cmpSoftFrag$ can be derived from $\Return_\zeta$, it is invariant to
a mask of impossible transitions by \Cref{thm:labels_of_fragments}.
Conversely, $\cmpSoftFrag$ determines $\reward$ for all \emph{possible}
transitions.
This is because $R(s,a,s')$ is encoded in the Boltzmann distribution of
comparisons between
the length zero trajectory fragment $\zeta_0 = (s)$ and
the length one trajectory fragment $\zeta_1 = (s, a, s')$,
and can be recovered as follows:
\begin{align*}
\Probability(\zeta_0 \cmpSoftFrag \zeta_1)
    &=  \frac
        {\exp(\beta\Return(\zeta_1))}
        {\exp(\beta\Return(\zeta_0)) + \exp(\beta\Return(\zeta_1))}
    \\&=   \frac
        {\exp(\beta\reward(s, a, s'))}
        {\exp(\beta\cdot0) + \exp(\beta\reward(s, a, s'))}
\\
\rightarrow\quad
    \reward(s, a, s')
    &=
    \frac1\beta\cdot\log\left(
        \frac
            {\Probability(\zeta_0 \cmpSoftFrag \zeta_1)}
            {1-\Probability(\zeta_0 \cmpSoftFrag \zeta_1)}
    \right)
\,.
\end{align*}
Therefore $\cmpSoftFrag$ is invariant to precisely a mask of impossible transitions.
\end{proof}

\hypertarget{proof:soft-compare-trajectories}{}
\thmSoftComparisonsTraj*
\begin{proof}
Note that as $\cmpSoftTraj$ can be derived from $\Return_\xi$,
by \Cref{thm:labels_of_trajectories},
$\cmpSoftTraj$ is invariant to zero-initial potential shaping and a mask of
unreachable transitions.
It is additionally invariant to $k$-initial potential shaping for arbitrary 
constants $k\in\Reals$, and no other transformations:
$\Return_\xi$ can be recovered from $\cmpSoftTraj$ up to a constant
(we can compare all possible initial trajectories to an arbitrary reference
trajectory and recover their relative return using a similar manipulation as
above, but we can't determine the return of the reference trajectory).
From there, the precise invariance follows from \Cref{lemma:potentials_and_episodes}.
\end{proof}

\hypertarget{proof:hard-compare-fragments}{}
\thmHardComparisonOfFragments*
\begin{proof}
For (1), positive linear scaling of reward by a constant $c$ leads to the same
scaling of the return of each trajectory fragment, and this always preserves
the relation $\cmpStarFrag$, since for any $c > 0$,
$
    c \cdot \Return(\zeta_1) \leq c \cdot \Return(\zeta_2)
    \Leftrightarrow
    \Return(\zeta_1) \leq \Return(\zeta_2)
$
for all pairs of trajectory fragments $\zeta_1, \zeta_2$.
Moreover, $\cmpStarFrag$ inherits invariance to a mask of impossible transitions
from $\Return_\zeta$ (\Cref{thm:labels_of_fragments}).

For (2), let $\reward'$ be produced from $\reward$ via some transformation that is
\emph{neither} a mask of impossible transitions \emph{nor} a zero-preserving monotonic
transformation.
It must be that either $\reward'$ fails to preserve the ordinal comparison of two possible
transitions, or that it fails to preserve the set of zero-reward possible transitions,
compared to $\reward$.
In the first case, consider two possible transitions whose rewards are not preserved,
$x_1$ and $x_2$.
Without loss of generality, suppose $\reward(x_1) \leq \reward(x_2)$ but
$\reward'(x_1) > \reward'(x_2)$.
This corresponds to a change in $\cmpStarFrag$'s comparison of the length one
trajectories formed from $x_1$ and $x_2$, namely $x_1 \cmpStarFrag x_2$ from true
to false.
Similarly, in the second case, the comparisons between the transition whose
reward became or ceased to be zero and a length one trajectory (with return 0)
will have changed.
Therefore, $\cmpStarFrag$ is not invariant to such transformations.

The bound (1) is attained by the following MDP invariant precisely to
positive linear scaling and a mask of impossible transitions.
Let $\States = \{s\}$, $\Actions = \{a_1, a_2\}$, $\reward(s, a_1, s) = 1$,
and $\reward(s, a_2, s) = 1 + \discount$.
Since $\reward(s, a_2, s) = \reward(s, a_1, s) + \discount \reward(s, a_1, s)$, the corresponding order
relation will contain both $(s, a_2, s) \cmpStarFrag (s, a_1, s, a_1, s)$ and 
$(s, a_1, s, a_1, s) \cmpStarFrag (s, a_2, s)$.
This property requires that $\reward(s,a_1,s) = (1 + \gamma) \cdot \reward(s,a_2,s)$,
which is preserved only by linear scaling of $\reward$.
(Non-positive linear scaling is already ruled out by (2)).

The bound (2) is attained by the following MDP invariant to arbitrary zero-preserving
monotonic transformations.
Let $\States = \{s_1, s_2\}$, $\Actions = \{a\}$, with possible transitions
$(s_1, a, s_2)$ and $(s_2, a, s_2)$, and $\reward(s_1, a, s_2) > \reward(s_2, a, s_2) = 0$.
Any zero-preserving monotonic transformation of $\reward$ preserves the ordering of
all \emph{possible} trajectory fragments, namely that all nonempty trajectories starting
in $s_1$ have positive return and all other possible trajectories have zero return.
\end{proof}

\hypertarget{proof:hard-compare-trajectories-bound}{}
\thmHardComparisonOfTrajectoriesBound*
\begin{proof}
The pairwise Boltzmann distributions of $\cmpSoftTraj$ can be used to derive
the noiseless comparisons of $\cmpStarTraj$, since the relative return of
each pair of trajectories is encoded in each
    $\Probability(\xi_1 \cmpSoftFrag \xi_2)$:
\[
\xi_1 \cmpStarTraj \xi_2
\Leftrightarrow
    \bigl(
    \Return(\xi_1) \leq \Return(\xi_2)
    \bigr)
\Leftrightarrow
    \bigl(
    \exp(\beta\Return(\xi_1)) \leq \exp(\beta\Return(\xi_2))
    \bigr)
\Leftrightarrow
    \left(
    \tfrac12 \leq \Probability(\xi_1 \cmpSoftFrag \xi_2)
    \right)
    .
\]
Therefore, $\cmpStarTraj$ is invariant to $k$-initial potential shaping and
a mask of unreachable transitions by
\Cref{thm:boltzmann-comparisons-trajectories}.

That $\cmpStarTraj$ is also invariant to positive linear scaling follows from
a similar argument as for the first bound in
\Cref{thm:noiseless-comparisons-trajectory-fragments}, proved above.
\end{proof}
\begin{remark}
\label{remark:noiseless-comparisons-trajectories-bound}
\Cref{thm:hard-trajectory-comparison} is a lower bound on the
full set of invariances of the noiseless order of possible and initial
trajectories. We note the following:
\begin{itemize}
    \item
        It is not a tight bound: At least in some MDPs, the order is
        invariant to additional transformations.
    \item
        A slightly tighter lower bound can be achieved by establishing that
        $\cmpStarTraj$ can be derived from $\cmpStarFrag$:
        Consider, for a given trajectory $\xi$, the sequence of `prefix'
        trajectory fragments $\xi^{(0)}, \xi^{(1)}, \xi^{(2)}, \ldots$, with
        each $\xi^{(n)}$ comprising the first $n$ transitions of $\xi$. By
        definition $\Return(\xi) =
        \lim_{n\to\infty}\Return(\xi^{(n)})$, and so for each pair
        of trajectories $\xi_1, \xi_2$, we have $\xi_1 \cmpStarTraj \xi_2$ if
        and only if $\xi_1^{(n)} \cmpStarFrag \xi_2^{(n)}$ for infinitely
        many $n$.  While this is not a practical method to compute the
        trajectory order $\cmpStarTraj$ from the fragment order
        $\cmpStarFrag$, it counts as a derivation in that it is sufficient to
        show that if a transformation does not change the fragment order
        $\cmpStarFrag$, it cannot change the trajectory order $\cmpStarTraj$
        either. Therefore, in particular, $\cmpStarTraj$ inherits invariance
        to ZPMTs in \emph{some} MDPs from $\cmpStarFrag$. This tightens the
        bound, at least in some MDPs.
    \item
        The previous point does not imply that the trajectory order
        $\cmpStarTraj$ inherits the fragment order $\cmpStarFrag$'s
        \emph{non}-invariances.  A case in point is that $\cmpStarTraj$ is
        invariant to $k$-initial potential shaping and a mask of unreachable
        transitions, where $\cmpStarFrag$ is not
        (\Cref{thm:noiseless-comparisons-trajectory-fragments}).  It is not
        yet clear if there are MDPs where $\cmpStarTraj$ is invariant to no
        ZPMTs other than positive linear scaling, or even to not all ZPMTs,
        and there may be invariances of $\cmpStarTraj$ that require new
        transformation classes to describe.
        However, \Cref{thm:hard-trajectory-comparison} and this
        remark give us enough information to confidently position
        $\cmpStarTraj$ in our partial order of reward-derived objects.
        % (see \cref{fig:main:reward-functions}).
\end{itemize}
\end{remark}

\hypertarget{proof:compare-lotteries}{}
\thmLotteries*
\begin{proof}
It is clear that preferences between lotteries over a choice set are
preserved by positive affine transformations of the value (and no other
transformations). In particular, the converse is a consequence of the
well-known VNM utility theorem \citep{vnm2}.
The proof by \citet{vnm2} covers a finite number of outcomes, and the result
also holds for an infinite number of
outcomes~\citep[see, e.g.,][]{fishburn1970}.

Thus, our result is immediate from
\Cref{lemma:potentials_and_episodes,lemma:linear_scaling_of_G}, which
together state that these positive affine transformations of the return
function correspond exactly to $k$-initial potential shaping, positive linear
scaling, and a mask of unreachable transitions.
\end{proof}

\newpage

\section[Proofs for Section 4 Results]{Proofs for \Cref{sec:more} Results}
\label{apx:proofs4}

\thmComplementaryAmbiguity*
\begin{proof}
Transformations that preserve $(X, Y)$ necessarily preserve $X$, therefore
$(X, Y) \refines X$.
But since $X$ and $Y$ are incomparable, there is some transformation that
preserves $X$ and not $Y$. This transformation does not preserve $(X, Y)$.
Therefore, $(X, Y) \refinesStrict X$.
Similarly, $(X, Y) \refinesStrict Y$.
\end{proof}
We note that the above result is also an elementary consequence of the 
\emph{lattice structure} of the partial order of partition refinement
\citep[\S I.2.B]{aigner1979}, since the combined data source corresponds
to the \emph{meet} of the original data sources.

\thmSRedistributionAndTransfer*
\begin{proof}
Per \cref{def:sprime-redistribution}, 
that $\reward_2$ is produced from $\reward_1$ by $S'$-redistribution under
$\TransitionDistribution$ requires that, for all $s\in\States$ and $a\in\Actions$,
\begin{equation}
\label{eq:thm:sr-transfer:condition-1}
    \Expect{S' \sim \TransitionDistribution(s,a)}{\reward_1(s,a,S')}
    =
    \Expect{S' \sim \TransitionDistribution(s,a)}{\reward_2(s,a,S')}
\,.
\end{equation}
Let $s\in\States$ and $a\in\Actions$ be any state and action such that $\TransitionDistribution'(s, a) \neq \TransitionDistribution(s, a)$.
Let $\Vec{\TransitionDistribution}_{s,a}$ and
$\Vec{\TransitionDistribution'}_{s,a}$ be
$\TransitionDistribution(s, a)$ and $\TransitionDistribution'(s, a)$
expressed as vectors,
and let $\Vec{R_1}_{s,a}$ be the vector where
$\Vec{R_1}_{s,a}^{(i)} = R_1(s,a,s_i)$.
The question is then if there is an analogous vector
$\Vec{R_2}_{s,a}$ such that:
\begin{equation}
\label{eq:thm:sr-transfer:condition-2}
\begin{aligned}
    \Vec{\TransitionDistribution}_{s,a} \cdot \Vec{R_2}_{s,a}
    &=
    \Vec{\TransitionDistribution}_{s,a} \cdot \Vec{R_1}_{s,a}\,,
\\
    \Vec{\TransitionDistribution'}_{s,a} \cdot \Vec{R_2}_{s,a}
    &= \mathcal{L}(s,a).
\end{aligned}
\end{equation}
Since $\Vec{\TransitionDistribution}_{s,a}$ and
$\Vec{\TransitionDistribution'}_{s,a}$ differ and are valid probability
distributions, they are linearly independent.
Therefore, the system of equations (\ref{eq:thm:sr-transfer:condition-2})
always has a solution for $\Vec{R_2}_{s,a}$.
Form the required $\reward_2$ as $\reward_1$ modified to have the values of
$\Vec{\reward_2}_{s,a}$ in these states where the transition function is
disturbed.
\end{proof}

\newpage

\section{Other Spaces of Reward Functions}
\label{apx:rewards}

Hitherto, we have assumed reward functions are members of $\SxAxS \to \Reals$.
That is, they are deterministic functions of transitions depending on the state, action, and successor state.
In this appendix, we discuss several alternative spaces of reward functions and
their implications for the invariance properties of various objects derived from
the reward function.

\subsection{Restricted-domain Reward Functions}

It is common in both reinforcement learning and reward learning to consider less
expressive spaces of reward functions.
In particular, the domain of the reward function is often restricted to $\States$
or $\SxA$.
When modelling a task, the choice of reward function domain is usually a
formality:
An MDP taking full advantage of the domain $\SxAxS$ has an ``equivalent''
MDP with a restricted domain and some added auxiliary states~\citep[\S17]{aima3e}.
Conversely, reward functions with restricted domains can be viewed as a
special case of functions from $\SxAxS$ where the functions are constant in
the final argument(s).
Restricting the domain can be an appealing simplification when modelling a task,
hence the popularity of these formulations.

When modelling a data source, this equivalence may not apply: We may not have
access to data regarding auxiliary states, so assuming a restricted domain
effectively assumes the latent reward is indeed constant with respect to the
successor state (and possibly the action) of each transition.
This assumption may or may not be warranted.

If a restricted domain of $\States$ or $\SxA$ is preferred, then our invariance
results can be adapted in a straightforward manner.
In general, since we are effectively considering a subspace of candidate reward
functions for transformations, ambiguity can only decrease.
In particular, these restrictions have two main consequences.

Firstly, the reward function transformation of $S'$-redistribution vanishes
to the identity transformation, since it allows variation only in the successor state
argument of the reward function, which is now impossible.
This reduces the effective ambiguity of the $Q$-function and all derivative objects.
Notably, the $Q$-function uniquely identifies the reward function, and Boltzmann
policies have the same invariances as Boltzmann comparisons between trajectories.
Restricting the domain to $\States$ means the (state) value function for an
arbitrary known policy also uniquely identifies the reward function
but doesn't otherwise alter the invariances we have explored.

Secondly, for most MDPs, the available potential-shaping transformations are
restricted, but not eliminated.
The function added in a potential-shaping transformation
($\discount \cdot \Phi(s') - \Phi(s)$)
nominally depends on the successor state of the transition.
Some transformed reward functions may rely on this dependence, falling outside
of the restricted domain.
However, some non-zero transformations will usually remain.
For example, in a discounted MDP without terminal states, a non-zero constant
potential function $\Phi(s) = k$ does not \emph{effectively} depend on $s$,
and the reward transformation of adding
$\discount \cdot \Phi(s') - \Phi(s) = (\discount - 1)\cdot k$ to a reward function
does not introduce a dependence on $s'$.
In general, the set of remaining potential-shaping transformations will depend
on the network structure of the MDP.
At the extreme, in a deterministic MDP with state-action rewards, all potential-shaping transformations are permitted, since a dependence on $s'$ can be satisfied
by $a$.

\subsection{Stochastic Reward Functions}

Certain tasks are naturally modelled as providing rewards drawn
stochastically from some distribution upon each transition.
An even more expressive space of reward functions than we consider is the space of \emph{transition-conditional reward distributions}.\footnotemark{}
Identifying the reward function in this case is more challenging in general
because the latent parameter contains a full distribution of information for
each input, rather than a single point.
In the spirit of this paper, we sketch a characterisation of this
additional ambiguity.

\footnotetext{
    Of course, it's also possible to consider reward to be distributed
    conditionally on only the state or state-action components of a transition
    and not the full transition.
}

A deterministic reward function can be viewed as the conditional expectation
of a reward distribution function.
Taking the expectation of the reward distribution for each transition
introduces invariance, since the expectation operation is not injective
(except in certain restricted cases such as for parametric families of 
distributions that can be parametrised by their mean).
The invariance introduced is akin to $S'$-redistribution, but with an
expectation over the support of the reward distribution rather than the
successor state of each transition.

In the extension of the RL formalism to account for stochastic rewards,
this expectation is effectively the first step in the derivation of each
of the objects we have studied.
Therefore, all of these objects inherit this new invariance.

As a consequence, all data sources are effectively more ambiguous with
respect to this new latent parameter.
For example, if optimal comparisons between trajectories are understood
to be performed based on the pairwise comparison of the expected return
of each individual trajectory, then these comparisons are also invariant
to transformations of the reward distributions that preserve their means.

Fortunately, much of reinforcement learning also focuses on expected
return and reward \emph{in application}.
Accordingly, most downstream tasks are tolerant to any ambiguity in the
exact distribution of stochastic rewards, beyond identifying the mean.
Since this is the same kind of ambiguity that is introduced by considering
the latent parameter of reward learning as a conditional distribution rather
than a deterministic function, our results are still informative for these
situations.

\subsection{Further Spaces and Future Work}

For certain applications, including \emph{risk-sensitive RL} where
non-mean objectives are
pursued~\citep{morimura2010nonpara, morimura2010para, dabney2018},
the distribution of stochastic rewards can be consequential.
Moreover, the introduction of stochastic rewards suggests considering
data sources based on samples rather than expectations, such as a data
source of trajectory comparisons based on sampled trajectory returns.
Characterising the invariances of these objectives to transformations
of the reward distribution, and thereby their \emph{ambiguity tolerance},
is left to future work.

In future extensions of this work to handle continuous MDPs, there will be
an opportunity to study the effect of restricting to various parametrised
spaces of reward functions.
For example, it is common in reinforcement learning and reward learning to
study MDPs with reward functions that are linear in a \emph{feature vector}
associated with each transition.
This kind of restriction may reduce the available reward transformations
compared to those available to a non-parametric reward function in a similar
manner to restricting the domain of a finite reward function as discussed
above.

The relaxation of the Markovian assumption also introduces a broader space
of reward functions and with it new dimensions for transformations and
invariance.
As one example related to potential shaping, the non-Markovian
additive transformations studied by \citet{wiewiora2003} will amount to new
invariances of the optimal policy and other related objects.

\end{document}